\documentclass{article}

\usepackage[biblatex]{smacros}
\usepackage[utf8]{inputenc}
\usepackage[T1]{fontenc}
\usepackage{hyperref}
\usepackage{url}
\usepackage{booktabs}
\usepackage{amsmath,amsthm,amssymb,amsfonts}
\usepackage{nicefrac}
\usepackage{microtype}
\usepackage{xcolor}
\usepackage{enumitem}
\usepackage{mathtools}
\usepackage{algorithm}
\usepackage{algpseudocode}
\usepackage{colortbl}
\usepackage{multirow}
\usepackage{graphicx}
\usepackage{subcaption}

\newtheorem{assumption}{Assumption}

\title{Matching Markets meet Cumulative Prospect Theory:\\
Towards Optimal and Adversarially Robust Learning}

\author{Ananya Kunisetty \qquad Avishek Ghosh\\[4pt]
Indian Institute of Technology Bombay, Mumbai 400076, India\\
\texttt{ananya.kunisetty@gmail.com, avishek\_ghosh@iitb.ac.in}}
\date{}

\begin{document}

\maketitle

\begin{abstract}
We study a multi-agent multi-armed bandit problem in the competitive setup with two-sided matching markets  under a human centric decision making model. To capture human preferences, we use cumulative prospect theory (CPT) that weighs the actions of the agent in a nonlinear fashion using a ($\alpha$-H\"older continuous) weight function. CPT has been widely used in behavioral economics and risk sensitive machine learning to emulate human preferences. We analyze the state-of-the-art learning algorithm \cite{kong2023player} with CPT weight distorted rewards and derive a regret of $\mathcal{O}(K\log T \left(\frac{1}{\Delta}\right)^{2/\alpha})$,
where $K$ denotes the number of arms, $T$ is the learning horizon, and $\Delta$ represents (suitably defined) players’ minimum preference gap. Obtaining a regret lower bound, we notice that the dependence on $\Delta$ is sub-optimal. We further improve this regret by judiciously selecting the active set of arms during exploration, which achieves the optimal regret guarantees (matching the lower bound) in the setting where the number of arms $K$ is significantly larger than the number of players $N$. In addition, we also consider adversarial markets where the observed rewards of the agents may be corrupted. We propose and analyze algorithms for robust markets with CPT as risk sensitive measure in both settings where the total corruption budget is known and where it is unknown, and establish logarithmic player-optimal regret guarantees in both cases.
\end{abstract}

\noindent\textbf{Keywords:} Markets and Bandits $\cdot$ Cumulative Prospect Theory $\cdot$ Robust Markets.

\section{Introduction}
\label{sec:intro}

Online matching markets—such as Amazon Mechanical Turk, Upwork, ride-sharing platforms (like Uber) and labor markets connect a demand side (agents or players) with a supply side (resources). For example, businesses hiring workers on Mechanical Turk or Upwork, customers requesting rides on Uber form the demand side, while freelancers, crowdworkers, drivers represent the supply side. Agents repeatedly choose among resources based on their preferences. Since resources are typically capacity constrained, agents compete for access while simultaneously navigating uncertainty on the quality of each resource  through interactions. These interactions are often modeled as a bipartite graph, where both agents and resources have preferences over the opposite side, though these preferences are initially unknown. Agents must therefore learn these preferences through minimal interactions and subsequently obtain an optimal stable matching. Once preferences are learned, algorithms such as the Deferred Acceptance algorithm proposed by David Gale and Lloyd Shapley can be used to compute the optimal stable matching.

Learning in matching markets has received considerable interest, especially from the lens of a multi-agent MAB (multi-armed bandit) framework \cite{liu2021bandit,liu2020competing,kong2023player,pmlr-v130-sankararaman21a,basu2021beyond,pagare2024explore,ghosh_nonstationary,meena_market}. In the above works, a variation of matching markets is studied where each player has unknown preference over arms whereas arms can be certain of their preferences over players through some known utility like payments (in case of labor markets) or some global ranking (in finance). Denoting $K$ as the number of arms, $N$ as the number of players, $T$ as the time horizon with $K \geq N$. This is done to ensure each player has a chance of being matched (see \cite{liu2020competing,pmlr-v130-sankararaman21a,basu2021beyond,kong2023player}). Moreover, let $\Delta$ be the minimum preference gap between the first $N+1$ arms across all players. With this, the objective is to find the most suitable (player optimal) stable matching for all the players. In previous works (\cite{liu2021bandit,kong2023player,pmlr-v130-sankararaman21a,basu2021beyond,ghosh_nonstationary,pagare2024explore,meena_market}) this is ensured by defining an appropriate regret defined as the mean reward of the \emph{best} arm in hindsight minus the mean reward obtained from a learning algorithm in presence of competition from other agents, and showing that the regret scales sublinearly with the learning horizon.

However, in real-world applications—ranging from labor markets to financial portfolio optimization—the mean (average) does not satisfactorily capture the merits of certain actions, especially when humans are involved in the decision making (see \cite{tan2022surveyriskawaremultiarmedbandits} for a recent survey). Also, \cite{Rockafellar2000OptimizationOC} demonstrates that conventional mean-based optimization of a Nikkei option portfolio was unable to identify hidden market crashes, which become apparent only when tail-risk measures are considered.

In many such applications, risk is a crucial consideration, and decision-makers often prefer a risk-sensitive performance criterion rather than relying solely on the average reward. As a result, extensive research has examined a variety of risk measures, including mean–variance \cite{e5a1bb8f-41b7-35c6-95cd-8b366d3e99bc}, Value-at-Risk (VaR), Conditional Value-at-Risk (CVaR) \cite{https://doi.org/10.1111/1467-9965.00068,Rockafellar2000OptimizationOC}, spectral risk measures \cite{ACERBI20021505}, and utility-based shortfall risk \cite{Hu2018UtilitybasedSR}. Although these measures satisfy desirable properties such as coherence or convexity, they fail to capture the nuances of human decisions like loss aversion and asymmetric preferences. Such behavioral features are more effectively modeled by the celebrated (and Nobel Prize–winning) framework of Cumulative Prospect Theory (CPT) \cite{tversky1992advances} in behavioral economics.

Consider the ``Asian Disease Problem'' \cite{Kahneman2011-KAHTFA-2}, in which a decision maker aims to maximize a Public Health Score $r$, representing the number of lives saved. The agent must choose between two policies: a \emph{Safe} protocol that guarantees $r=200$, and a \emph{Risky} protocol that yields $r=600$ with probability $35\%$ and $r=0$ with probability $65\%$. Under a risk-neutral criterion based on expected reward, the risky protocol is optimal because its expected value is higher ($\mu_{\text{Risky}} = 210$) than that of the safe protocol ($\mu_{\text{Safe}} = 200$). However, empirical studies show that most human decision-makers reject this gamble and instead choose the safe protocol \cite{tversky1992advances}.

Cumulative Prospect Theory (CPT) resolves this discrepancy by evaluating outcomes through a distorted value rather than a simple expectation, using a suitably chosen non-linear weighting function. In this example, the CPT weighting function disproportionately emphasizes the probability of the adverse event $\{r=0\}$, thereby penalizing the possibility of saving no patients. As a result, CPT correctly identifies the safe protocol as the \emph{subjectively optimal} choice.

The study of CPT based risk measure for multi-armed bandits (MABs) has been limited. \cite{DBLP:journals/corr/GopalanAFM16} introduces the notion of CPT based regret for simple $K$ armed bandits and linear bandits, whereas \cite{pmlr-v258-tatli25a} uses a generic framework to handle monotone as well as non-monotone weight distorted rewards including CPT. The above works consider a single player multi-armed bandit setup and this leads to an interesting open problem:

{\centering \itshape Is it possible to learn in a multi-player (competitive or collaborative) MAB with CPT risk measure? \par}

In this paper, we answer the above question in affirmative. We consider multi-player MAB in the competitive setup of matching markets. Using CPT as risk measure, we define an appropriate distorted regret through a non-linear H\"older continuous function, similar to \cite{DBLP:journals/corr/GopalanAFM16} albeit in the competitive market setting. The objective here is to converge to a (player optimal) stable matching based on this weighted (distorted) reward which may be quite different from the standard matching based on reward average (see Figure~\ref{fig:preference-reversal} for an example). We propose and analyze CPT based state of the art learning algorithm (\cite{kong2023player}) namely CPT-Explore-Then-Gale-Shapley (\texttt{CPT-ETGS}) and obtain sublinear (logarithmic) regret bound.

By proving a lower bound on CPT based regret in the matching markets setup, we observe that the regret of \texttt{CPT-ETGS} is sub-optimal. Hence, we modify the learning algorithm by carefully choosing the set of arms during the exploration phase of  \texttt{CPT-ETGS}. We dub the resulting algorithm Improved \texttt{CPT-ETGS}. As a result, we show that in large markets, where the number of arms $K$ is significantly larger than $N$, Improved \texttt{CPT-ETGS} achieves the lower bound on regret and hence optimal.

Moreover, in a practical competitive market, there might be manipulation (or corruption) of rewards received by the players. The effect of such corruption aggravates in the presence of weight distortion. Notably, online learning in the presence of corruption is well explored in applications like feedback contamination (click fraud) in online advertisement \cite{zhang_click,oentaryo_click}, non-stationary online learning \cite{fitness_sankararaman22a,robust_abishek}. Furthermore, adversarial bandits is a well studied problem in the literature albeit in a single player setup \cite{niss2020you,arora2012online,yang_adversarial}. Recently, \cite{wubandit} study the matching markets problem in the presence of (bounded) corruption and propose an explore then commit type algorithm for the same.

We study the matching markets problem where the rewards are weight distorted (via CPT) and corrupted simultaneously. We consider two cases where the total corruption budget is known and unknown. In both cases we modify the \texttt{CPT-ETGS} algorithm and obtain sub-linear regret guarantees. In particular, we characterize the (regret) cost that comes from simultaneous control of distorted and corrupted rewards.

\subsection{Summary of Contributions}
We leverage CPT based risk sensitive measure to capture human centric decision making in matching markets. We use the state of the art learning algorithm for matching markets from \cite{kong2023player} and incorporate CPT based weight distortion. The resulting algorithm, \texttt{CPT-ETGS} is shown to achieve a (player optimal) regret upper bound of $\mathcal{O}(K\log T \left(\frac{1}{\Delta}\right)^{2/\alpha})$, where $\Delta$ is the minimum preference gap between the first $N+1$ arms across all players and $\alpha$ is the coefficient of H\"older continuity for the (non-linear) weight distortion function. \texttt{CPT-ETGS} is also an explore and commit type algorithm, however the upper and lower confidence bounds are carefully defined to incorporate the weight distortion of the rewards.

In Section~\ref{sec:improved}, we consider a simple problem instance and obtain a regret lower bound (roughly) $\Omega \left (K\log T \left(\frac{1}{\Delta} \right)^{\frac{2}{\alpha} -1}\right)$, which implies the regret of \texttt{CPT-ETGS} is sub-optimal (since $\frac{1}{\Delta^{2/\alpha -1}}$ may be much smaller than $\frac{1}{\Delta^{2/\alpha}}$). We identify that the sub-optimality stems from \emph{excess} exploration in \texttt{CPT-ETGS}. 

In the modified algorithm namely Improved \texttt{CPT-ETGS}, players adaptively switch between exploration and exploitation and promptly eliminate sub-optimal arm so the active set of arms on which we will be exploring will not lead to unnecessary regret. We prove that Improved \texttt{CPT-ETGS} achieves a regret upper bound of $\mathcal{O}\!\left(\;K\log T \left(\frac{1}{\Delta} \right)^{\frac{2}{\alpha} -1} +N^2 \log T \, \frac{1}{\Delta^{2/\alpha}} \right)$. In the scenario where $K$ is much larger than $N$ ($K \gg N$), i.e., in large markets, the first term dominates and the regret of Improved \texttt{CPT-ETGS} matches to that of the lower bound. This implies optimality of Improved \texttt{CPT-ETGS}.

Furthermore, in Section~\ref{sec:adversarial}, we consider adversarial markets, the practical setup where the rewards are distorted as well as corrupted. In this setup, players may not be able to learn their true preferences owing to the presence of corruption. We address such adversarial market with known as well as unknown corruption budget (denoted by $C$). Given the corruption budget, we may bound the changes in distorted reward using the H\"older continuity property of the weight distortion function. In this case we widen the confidence interval while calculating preferences. If the corruption budget is unknown, we use a multi-layer \texttt{CPT-ETGS}. In both the cases we change the confidence intervals, and regret definitions to incorporate human preferences. If $C$ is known, we achieve a regret upper bound of $\mathcal{O}\left(
\frac{K\log T}{\Delta^{2/\alpha}}
+ K\left(\frac{C}{\Delta}\right)^{1/\alpha}\right)$. On the other hand, if $C$ is not known, we obtain an upper bound of $\mathcal{O}\!\left(
KC\left(\log^2T\right)
\left(\frac{1}{\Delta}\right)^{\frac{2}{\alpha}}
+
K(\log T+C)
(\log T)^{1+\frac{1}{\alpha}}
\left(\frac{d}{\Delta}\right)^{\frac{1}{\alpha}}
\right)$ where $d$ is a problem dependent parameter,
and the difference between these two characterizes the difficulty of not knowing the corruption budget.

\subsubsection{Technical Challenges}
The main technical challenge is to integrate CPT based weight distortion in the multi-agent framework with competition. In particular, we design new confidence intervals (in such a setup) such that the estimated preference is close to the true preference.  We achieve this by an application of Dvoretzky-Kiefer-Wolfowitz (DKW) inequality which gives finite-sample exponential concentration of the empirical distribution. Subsequently, we define a \emph{good} event for \texttt{CPT-ETGS} and Improved \texttt{CPT-ETGS}. Note that sample mean estimation with CPT distortion uses order statistics (see \cite{DBLP:journals/corr/GopalanAFM16}), and with multiple agents and unknown preferences, such analysis becomes quite non-trivial. Furthermore, in the case of corrupted rewards, we leverage the H\"older continuity property of the CPT weight distortion function and modify the upper and lower confidence bounds. This leads to a bound on the maximum deviation between distorted reward estimates computed with corrupted and uncorrupted observations. In particular, with unknown corruption budget, tracking the corrupted and distorted reward simultaneously turns out to be quite involved.

\begin{figure}[t!]                                   
      \centering  
      \begin{subfigure}[]{0.45\textwidth}         
          \centering                            
          \includegraphics[height = 4.5cm]{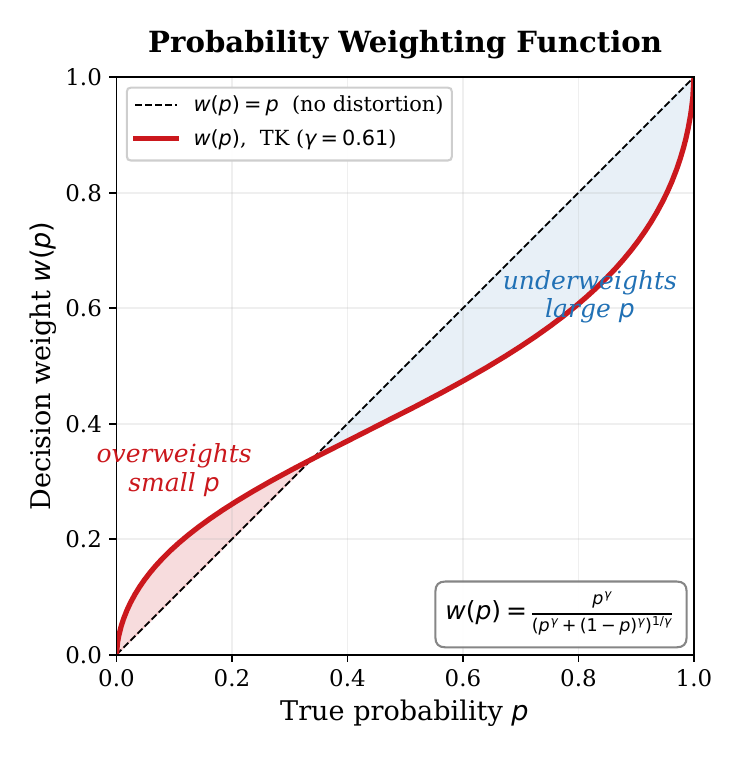}  \caption{Tversky--Kahneman probability weighting function $w(p)$.            Small probabilities are \emph{overweighted} ($w(p) > p$) and
          large probabilities are \emph{underweighted} ($w(p) < p$).}
          \label{fig:weight-fn}
      \end{subfigure}
      \hfill
      \begin{subfigure}[]{0.45\textwidth}
          \centering
          \includegraphics[height = 4cm]{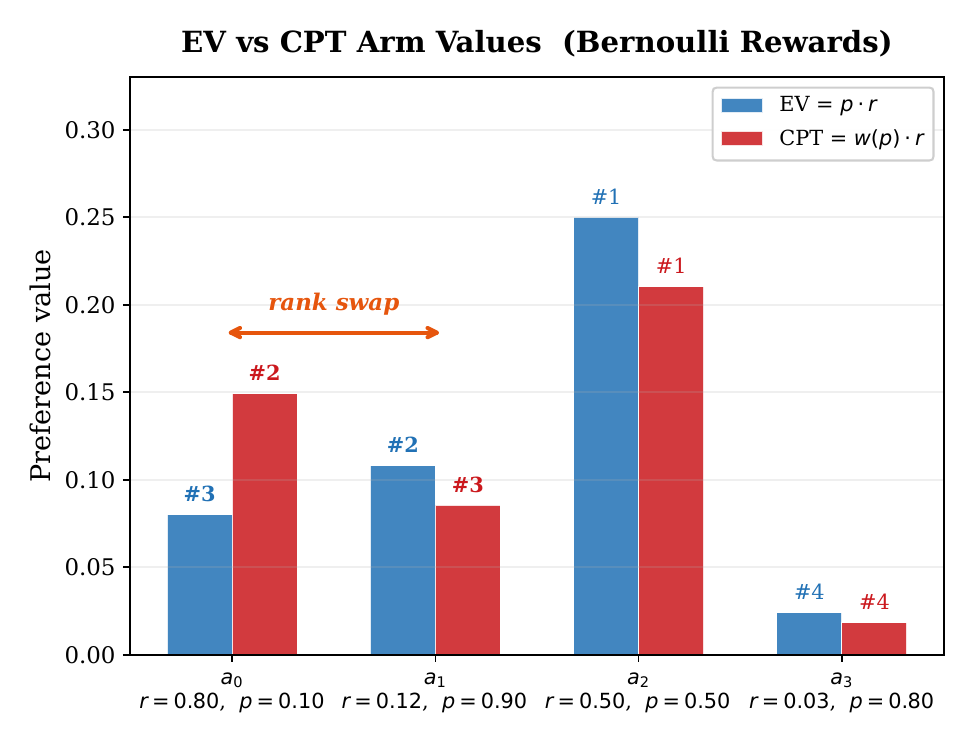}
          \caption{EV vs.\ CPT arm values for a representative player.
          Arms $a_0$ and $a_1$ swap ranks: $a_0$ (rare high reward,
          $p{=}0.10$) is boosted under CPT while $a_1$ (frequent low
          reward, $p{=}0.90$) is demoted.}
          \label{fig:bar-chart}
      \end{subfigure}
      \caption{\textbf{CPT distortion leads to different preference
      orders.} Each arm follows a Bernoulli distribution: reward~$r$ with
      probability~$p$, and~$0$ otherwise. Under expected value, a player
      ranks arms by $\mu_j = p_j r_j$. Under CPT, the same samples
      yield rankings based on $\nu_j = w(p_j) r_j$. Since
      $w(\cdot)$ overweights small probabilities~(a), arms with rare but
      large rewards can overtake arms with frequent but modest rewards~(b),
      producing a \emph{different} preference order from the same data. Hence, ETGS and \texttt{CPT-ETGS} converge to different stable matching.}
      \label{fig:preference-reversal}
  \end{figure}

\section{Formulation}
\label{sec:formulation}
We formally introduce the problem setting below. For any positive integer $n$, we use $[n]$ to denote $\{1,2,\ldots,n\}$.

\subsection{Risk Neutral (Undistorted) Matching Market Model}
We study a two-sided market comprising $N$ players $\mathcal{P}=\{p_1,\dots,p_N\}$ and
  $K\!\ge\!N$ arms $\mathcal{A}=\{a_1,\dots,a_K\}$. Each player $p_i$ is characterized by unknown
  mean rewards $\bar{\mu}_{i,j}$ over arms, while each arm $a_j$ possesses a fixed, publicly known
  strict ranking $(\pi_{j,i})_{i\in[N]}$ over the players with all values distinct.

  The interaction unfolds over $T$ rounds.  In round $t$, every player $p_i$ selects an arm
  $A_i(t)\in\mathcal{A}$ to propose to.  Each arm accepts its highest-ranked proposer under~$\pi_j$
   and rejects the remaining players.  Denoting by $\bar{A}_i(t)$ the arm that accepts~$p_i$ (with
  $\bar{A}_i(t)=\emptyset$ if rejected), the accepted player observes a stochastic reward
  $X_i(t)\sim\mathcal{F}_{i,\bar{A}_i(t)}$ with mean $\bar{\mu}_{i,\bar{A}_i(t)}$, while a rejected
   player receives $X_i(t)=0$.  The global matching at round $t$ is
  $\bar{A}(t)=\{(i,\bar{A}_i(t)):i\in[N]\}$.

  A matching is \emph{stable} if no player--arm pair $(p_i,a_j)$ would mutually benefit from
  deviating, i.e., there is no pair satisfying $\bar{\mu}_{i,j}>\bar{\mu}_{i,\bar{A}_i(t)}$ and
  $\pi_{j,i}>\pi_{j,\bar{A}_j^{-1}(t)}$.  Among all stable matchings~$M$, the \emph{player-optimal
  stable matching} $m^*$ is the unique element for which $\bar{\mu}_{i,m^*_i}\ge\bar{\mu}_{i,m_i}$
  for every $m\in M$ and $i\in[N]$; its existence is guaranteed by the Gale-Shapley
  deferred-acceptance procedure~\cite{gale1962college}.

\begin{assumption}
\label{ass:bounded}
The reward $X_i(t)$ is bounded almost surely, i.e., $|X_i(t)| \leq M$ for some constant $M > 0$.
\end{assumption}

\subsection{CPT based Distorted Utility Model}

Instead of evaluating actions using the standard expected reward, we adopt a
distorted utility model based on cumulative prospect theory (CPT) \cite{tversky1992advances}. In this
framework, players evaluate stochastic outcomes through a nonlinear
transformation of probabilities.

Let $Y_{i,j}$ denote the random reward obtained by player $p_i$ when matched
with arm $a_j$. Let $w : [0,1] \rightarrow [0,1]$ be a probability weighting
function satisfying $w(0)=0$ and $w(1)=1$. The distorted utility of arm $a_j$
for player $p_i$ is defined as
\begin{equation}
\label{eq:cpt}
\mu_{i,j}
=
\int_{0}^{\infty} w\big(\mathbb{P}(Y_{i,j} > z)\big) \, dz
-
\int_{0}^{\infty} w\big(\mathbb{P}(-Y_{i,j} > z)\big) \, dz .
\end{equation}

The above expression represents the CPT value of the reward distribution by integrating the distorted tail probabilities of gains and losses. For analytical convenience, we decompose the distorted utility into its positive and negative components.
Let
$
\mu_{i,j} = \mu^{+}_{i,j} - \mu^{-}_{i,j},
$
where the gain component and loss component are defined respectively as
$
\mu^{+}_{i,j} = \int_{0}^{\infty} w\big(\mathbb{P}(Y_{i,j} > z)\big)\,dz
$
and
$
\mu^{-}_{i,j} = \int_{0}^{\infty} w\big(\mathbb{P}(-Y_{i,j} > z)\big)\,dz.
$

The term $\mu^{+}_{i,j}$ captures the distorted contribution of gains, while $\mu^{-}_{i,j}$ captures the distorted contribution of losses~\cite{kolla2016bandit,DBLP:journals/corr/GopalanAFM16}. The probability weighting function $w(\cdot)$ distorts the tail probabilities, reflecting the behavioral bias described by cumulative prospect theory \cite{kolla2016bandit,DBLP:journals/corr/GopalanAFM16}. In particular, small probabilities may be overweighted while moderate or large probabilities may be underweighted depending on the shape of $w$.

Note that when the weighting function is the identity function
$w(p) = p$, the distorted utility reduces to the standard expected reward,
i.e., $\mu_{i,j} = \Bar{\mu}_{i,j} =  \mathbb{E}[Y_{i,j} ].$ Thus the distorted utility formulation
generalizes classical expected-value based evaluation.

Throughout the paper, we use $\mu_{i,j}$ to denote the CPT-distorted reward defined in equation~\ref{eq:cpt}, rather than the standard expected reward $\bar{\mu}_{i,j}$.

\begin{assumption} (H\"older continuity)
\label{ass:holder}
Let $w:[0,1]\rightarrow[0,1]$ denote the probability weighting function.
We assume that $w(0)=0$ and $w(1)=1$. Furthermore, $w(\cdot)$ satisfies a
H\"older continuity condition: there exist constants $L>0$ and
$\alpha \in (0,1]$ such that for all $x,y \in [0,1]$,
\begin{equation}
    \sup_{x \neq y}
\frac{|w(x) - w(y)|}{|x-y|^{\alpha}} \le L .
\end{equation}
\end{assumption}
Similar assumptions featured in previous works \cite{DBLP:journals/corr/GopalanAFM16,pmlr-v258-tatli25a,kolla2016bandit} with CPT risk.
\subsection{CPT Regret}
Let $m^*$ denote the player-optimal stable matching.
The goal of each player is to learn the arm assigned to it in $m^*$.
The player-optimal stable regret of player $p_i$ after $T$ rounds
is defined as
$
\texttt{CPT-Reg}_i(T)=T \mu_{i,m_i^*}-\sum_{j \in [K]} T_{i,j}(T)\,\mu_{i,j},
$
and the expected regret can be written as
\begin{align}
    \mathbb{E}[\texttt{CPT-Reg}_i(T)] =
\sum_{j \in [K],\, j \neq m_i^*}
\mathbb{E}[T_{i,j}(T)]\,\Delta_{i,m_i^*,j} .
\end{align}
where
$
T_{i,j}(T)
=
\sum_{t=1}^{T}
\mathbf{1}\{\bar{A}_i(t) = a_j\}
$
is the number of times arm $j$ is matched by player $i$ up to time $T$ and $\Delta_{i,m_i^*,j}$ is defined below.

\emph{Preference Gap}~\cite{kong2023player}: Consider a player $p_i$ and two distinct arms $a_j$ and $a_{j'}$.
The preference gap of player $p_i$ between these two arms is defined as
$\Delta_{i,j,j'} = |\mu_{i,j} - \mu_{i,j'}|.$
Let $\rho_i$ denote the preference ordering of player $p_i$ over the arms,
and let $\rho_{i,j}$ represent the arm ranked at position $j$ in this ordering,
for $j \in [K]$.
We define
$\Delta =
\min_{i \in [N];\, j,j' \in [N+1];\, j \neq j'}
\Delta_{i,\rho_{i,j},\rho_{i,j'}}$
to be the smallest preference gap taken over all players and over
all pairs of arms among their first $N+1$ ranked arms. Since all
preference values are assumed to be distinct, this quantity is
non-negative.
Finally, for each player $p_i$, define
$\Delta_{i,\max} = \mu_{i,m_i^*}$
which corresponds to the maximum player-optimal stable regret that
player $p_i$ may incur across all rounds.

\section{Human Centric Matching Markets with CPT Regret}
\subsection{Distorted Preference Estimation}

To estimate player preferences under cumulative prospect theory (CPT),
we use a distorted utility estimator based on the empirical reward
distribution.

\begin{algorithm}[t]
\caption{CPT Distorted Preference Estimation \cite{kolla2016bandit}}
\label{alg:distorted_estimator}
\begin{algorithmic}[1]
\State \textbf{Input:} rewards $X_{i,j,1},\dots,X_{i,j,l}$, weight function $w(\cdot)$
\State Sort samples: $X_{i,j,(1)} \le \dots \le X_{i,j,(l)}$ and let $l_b$ be the number of negative samples
\State $\hat{\mu}^{+}_{i,j}=\sum_{k=l_b+1}^{l}X_{i,j,(k)}\big[w(\frac{l+1-k}{l})-w(\frac{l-k}{l})\big]$
\State $\hat{\mu}^{-}_{i,j}=\sum_{k=1}^{l_b}X_{i,j,(k)}\big[w(\frac{k-1}{l})-w(\frac{k}{l})\big]$
\State \textbf{Return:} $\hat{\mu}_{i,j}=\hat{\mu}^{+}_{i,j}-\hat{\mu}^{-}_{i,j}$
\end{algorithmic}
\end{algorithm}
Algorithm~\ref{alg:distorted_estimator} computes an empirical estimate
of the distorted utility using the observed rewards. 
The observed rewards are first sorted so that the ordered samples
correspond to empirical quantiles of the reward distribution.
This quantile representation allows the distorted utility, which is
defined through distorted tail probabilities, to be approximated using
order statistics of the empirical distribution.

The distortion function $w(\cdot)$ is then applied to the probability
levels associated with these quantiles. The terms
$w\!\left(\frac{l+1-k}{l}\right)-w\!\left(\frac{l-k}{l}\right)$
represent the distorted probability mass assigned to the $k$-th
order statistic.

Since CPT evaluates gains and losses separately, the estimator
computes the contributions of positive and negative rewards
independently. The distorted gain component aggregates positive
samples, while the distorted loss component accounts for negative
samples. Combining these components yields the empirical estimate
$\hat{\mu}_{i,j}$. This quantile-based estimator follows the CPT-value estimation
scheme introduced in \cite{pmlr-v48-la16} and later adapted for bandit settings in \cite{kolla2016bandit}.
\subsection{Algorithm: \texttt{CPT-ETGS}}
In this section, we consider the problem of finding the player-optimal stable matching in human-centric matching markets, where players evaluate outcomes based on behavioral preferences rather than purely expected rewards.
To address this, we propose a CPT-aware algorithm that builds on the Explore Then Gale Shapley (ETGS) framework \cite{kong2023player,kong2024improved}. The key distinction between Algorithm \ref{alg:cpt_etgs} and the  ETGS algorithm of \cite{kong2023player} lies in the estimation of player preferences over arms.  This modification enables the algorithm to learn preference values aligned with CPT evaluations using Algorithm \ref{alg:distorted_estimator} while preserving the ETGS framework.

\begin{algorithm}[t]
\caption{\texttt{CPT-ETGS} (Player $p_i$)}
\label{alg:cpt_etgs}
\begin{algorithmic}[1]

\Require player set $P$, arm set $A$, horizon $T$
\State Initialize $\hat{\mu}_{i,j}=0$, $T_{i,j}=0$

\State \textbf{Phase 1: Index discovery}
\State \textbf{Phase 2: Preference learning}

\For{sub-phase $\ell =1,2,\dots$}
\For{$2^\ell$ rounds}
\State select arm via round-robin based on Index
\State observe reward, increment $T_{i,j}$ and update $\hat{\mu}_{i,j}$ using Algorithm~\ref{alg:distorted_estimator}

\EndFor

\State compute W-UCB and W-LCB for all arms

\If{top $N$ arms identified}
\State record ranking $\sigma_i$
\EndIf

\If{all players matched}
\State terminate exploration
\EndIf

\EndFor

\State \textbf{Phase 3: Stable matching} via decentralized GS algorithm
\end{algorithmic}
\end{algorithm}

\texttt{CPT-ETGS} proceeds in three phases. In the first phase, each player obtains a unique index that allows players to coordinate their exploration and avoid collisions, in the second phase players estimate their preferences over arms by exploring with the help of the index obtained in the first phase, and in the third phase using the preference order obtained in second phase players find out the stable matching and then would focus on that arm for the remaining rounds. The details of the first and third phase are mentioned in the Appendix \ref{app:phase_1_3}. 

The second phase of the algorithm is divided into multiple subphases $l = 1,2,3,...$ and each subphase has a length of $2^l+1$ consisting of an exploration stage of $2^l$ rounds followed by a monitoring round. In the exploration rounds, the players choose arms in a round robin fashion to avoid collision and to make estimates of their preference over arms. In the monitoring round, all the players estimate their CPT-distorted preference values over all arms using Algorithm \ref{alg:distorted_estimator} adapted from \cite{kolla2016bandit}, and construct confidence intervals for each arm.

The confidence bounds are defined as
\begin{align}
    \text{W-UCB}_{i,j} = \hat{\mu}_{i,j} + LM\left(\frac{3\log T}{2T_{i,j}(t)}\right)^{\alpha/2},
\text{W-LCB}_{i,j} = \hat{\mu}_{i,j} - LM\left(\frac{3\log T}{2T_{i,j}(t)}\right)^{\alpha/2} 
\label{eq:wucb}
\end{align}
We set $\text{W-UCB}_{i,j}$ to $\infty$ and $\text{W-LCB}_{i,j}$
to $-\infty$ when $T_{i,j}=0$. Once the confidence sets for two arms $a_j, a_{j'}$
are disjoint, i.e., $\text{W-LCB}_{i,j} > \text{W-UCB}_{i,j'}$ or
$\text{W-LCB}_{i,j'} > \text{W-UCB}_{i,j}$, $p_i$ can determine its preference over these arms. When a player is able to
identify the top $N$ arms in its ranking, then in the monitoring round the player propose to the arm with the unique index obtained in the first phase, and the players who have not yet determined their preferences remain inactive. When all the players are matched with an arm, we can determine that all players have found their preference order over arms and can proceed to the third phase.
We next derive an upper bound on the player-optimal stable pseudo-regret incurred by each player when using our algorithm. The proof is deferred to Appendix \ref{app:proof_cpt_etgs}.
\begin{theorem}
\label{th:cpt_etgs}
The player optimal stable regret incurred by each $p_i \in \mathcal{N}$ under Algorithm~\ref{alg:cpt_etgs} satisfies
\begin{align*}
\mathbb{E}[\texttt{CPT-Reg}_i(T)]
&\le c_0
\left(
\frac{3K\log T}{\Delta^{2/\alpha}}
\left(4LM\right)^{2/\alpha}
+ N^2
+ 2NK
\right)\Delta_{i,\max}.
\end{align*}
where $c_0$ is a universal positive constant.
\end{theorem}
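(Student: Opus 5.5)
We follow the regret decomposition of the ETGS analysis in \cite{kong2023player}. The only new ingredient is a concentration inequality for the CPT estimator of Algorithm~\ref{alg:distorted_estimator}. Let $\mathcal{E}$ be the good event that for every player $i$, arm $j$, and monitoring round $t\le T$,
\[
|\hat{\mu}_{i,j}-\mu_{i,j}|\le LM\left(\frac{3\log T}{2T_{i,j}(t)}\right)^{\alpha/2}.
\]
We split the expected regret as $\mathbb{E}[\texttt{CPT-Reg}_i(T)]\le \mathbb{P}(\bar{\mathcal{E}})\,T\Delta_{i,\max}+\mathbb{E}[\texttt{CPT-Reg}_i(T)\mathbf{1}\{\mathcal{E}\}]$. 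On $\mathcal{E}$, the regret is at most $\Delta_{i,\max}$ times the number of rounds before all players enter Phase~3. This is because on $\mathcal{E}$ the recorded rankings of the top $N$ arms are correct, so decentralized GS returns $m^*$ and no regret is incurred afterwards. The three phases then give the three terms of the bound.

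\emph{Concentration step.} The estimator equals the CPT functional $\mu^{\pm}$ evaluated at the empirical CDF $\hat F_l$: the order-statistic weights are $w$ applied to empirical tail probabilities. Hence
\[
|\hat{\mu}^{+}_{i,j}-\mu^{+}_{i,j}|\le\int_0^M|w(1-\hat F_l(z))-w(1-F(z))|\,dz\le LM\sup_z|\hat F_l(z)-F(z)|^{\alpha},
\]
using Assumption~\ref{ass:holder} and Assumption~\ref{ass:bounded}. The same bound holds for the loss part. The DKW inequality gives $\sup_z|\hat F_l-F|\le\sqrt{3\log T/(2l)}$ with probability at least $1-2T^{-3}$. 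We also need a factor-2 slack, since the loss and gain parts are estimated from a single empirical CDF. We can absorb it either by using one DKW event for both parts or by enlarging constants into $c_0$. A union bound over $i,j$ and over $l\le T$ gives $\mathbb{P}(\bar{\mathcal{E}})\le 2NKT^{-2}$, so $\mathbb{P}(\bar{\mathcal{E}})\,T\Delta_{i,\max}\le 2NK\Delta_{i,\max}$. This yields the $2NK$ term.

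\emph{Exploration length on $\mathcal{E}$.} Two arms with gap $\Delta_{i,j,j'}\ge\Delta$ are separated once twice the confidence radius, $2LM(3\log T/(2T_{i,j}))^{\alpha/2}$, is below $\Delta/2$. A sufficient condition is $T_{i,j}\ge \frac{3\log T}{2}(4LM/\Delta)^{2/\alpha}$. Round-robin gives every arm about $1/K$ of the exploration rounds. Because sub-phases double in length, the total number of exploration rounds until every player separates its top $N+1$ arms is at most a constant times $K\cdot\frac{3\log T}{\Delta^{2/\alpha}}(4LM)^{2/\alpha}$. The monitoring rounds add only $O(\log T)$ rounds, which is dominated by this term. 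Phase~1 (index discovery) and the coordination overhead of Phase~3 cost $O(N^2)$ rounds, as in \cite{kong2023player}; the appendix description of Phases 1 and 3 is used unchanged. Summing the three contributions and multiplying by $\Delta_{i,\max}$ gives the stated bound.

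\emph{Main obstacle.} The delicate point is whether identifying the top $N$ arms needs separation only among the first $N+1$ ranked arms, which is where $\Delta$ is defined. The ETGS argument handles this by showing that each of the top $N$ arms only has to be separated from the $(N{+}1)$-th arm and from one another; arms ranked lower than $N+1$ separate at least as fast. A second issue is that in the CPT setting the gaps are between distorted values. Because $\alpha<1$, the $\alpha/2$ exponent from H\"older continuity turns the usual $1/\Delta^2$ sample complexity into $1/\Delta^{2/\alpha}$, and this is the source of the exponent in the theorem.
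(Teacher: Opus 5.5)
Your argument follows the paper's proof step for step. It uses the same good event (the complement of the paper's $\mathcal{F}$) and the same split into a Phase-1 term, a Phase-2 term governed by the threshold $\bar T_i=\frac{3\log T}{2}(4LM/\Delta)^{2/\alpha}$ with round-robin and doubling sub-phases, an $N^2$ Phase-3 term, and the $2NK\Delta_{i,\max}$ failure term. The only real difference is that you derive the concentration inequality from DKW plus H\"older continuity instead of citing \cite[Theorem~1]{kolla2016bandit} (Lemma~\ref{lemma:prob_bound}). That derivation is where the one problem sits.

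Your factor-$2$ remark for signed rewards is a correct diagnosis, but neither remedy you offer works.

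\emph{One DKW event for both parts.} This only saves a union bound. The factor comes from integrating the H\"older error over gains and over losses, i.e.\ over a reward range of length $2M$. On the DKW event you therefore only get $|\hat\mu_{i,j}-\mu_{i,j}|\le 2LM(3\log T/(2l))^{\alpha/2}$.

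\emph{Absorbing it into $c_0$.} The radius $LM(3\log T/(2T_{i,j}))^{\alpha/2}$ is hard-wired into W-UCB and W-LCB. To keep your event $\mathcal{E}$ you would need $\sup_z|\hat F_l-F|\le 2^{-1/\alpha}\sqrt{3\log T/(2l)}$. For that, DKW only gives failure probability $2T^{-3\cdot 2^{-2/\alpha}}$, e.g.\ $2T^{-3/4}$ when $\alpha=1$. The constant thus lands in the exponent of $T$, and $\mathbb{P}(\bar{\mathcal{E}})\,T\Delta_{i,\max}$ is no longer $O(NK\Delta_{i,\max})$.

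The paper gets past this step only because Lemma~\ref{lemma:prob_bound} is quoted with constant $LM$. Your DKW argument proves exactly that form when rewards lie in $[0,M]$, since then $\hat\mu^-_{i,j}=\mu^-_{i,j}=0$. As stated under Assumption~\ref{ass:bounded}, however, the lemma is false. Take $w(p)=p$, a single sample, and $Y=\pm M$ with probability $1/2$ each: the deviation exceeds $0.9M$ with probability $1$, while the claimed bound is $2e^{-1.62}<1$.

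To make your proof complete you have two options. You can assume nonnegative rewards; then your concentration step is exactly right and everything else goes through verbatim. Alternatively, you can use the full range $2M$ in the confidence width. That replaces $(4LM)^{2/\alpha}$ by $(8LM)^{2/\alpha}$, a factor $2^{2/\alpha}$ that is not a universal constant.
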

\begin{remark}
    Observe that the CPT-regret of player $i$ is $O\!\left(
K\log T \left(\frac{LM}{\Delta}\right)^{2/\alpha}
\right)$. In Section~\ref{sec:lower_bound}, we show that the dependence on $K$ and $T$ is optimal. However the dependence on the gap $\Delta$ is sub-optimal. In the subsequent section, we modify \texttt{CPT-ETGS} to achieve such optimal regret.
\end{remark}
\begin{remark}
    When $\alpha=1$, the CPT-regret reduces to the usual risk neutral regret, matching that of ETGS \cite{kong2023player}. Even for a (single agent) simple bandit problem, such equivalence has been observed in \cite{kolla2016bandit,DBLP:journals/corr/GopalanAFM16}.
\end{remark}
\section{Improved \texttt{CPT-ETGS} and Optimal Regret}
\label{sec:improved}

\subsection{Regret Lower Bound}
\label{sec:lower_bound}
In this section, we obtain a lower bound on player-optimal stable regret. We adhere to the classical asymptotic regret lower bound framework of Lai and Robbins \cite{LAI19854}. These lower bounds are also studied in \cite{pmlr-v130-sankararaman21a} in the context of (risk neutral) markets and bandits.

Consider an instance where all arms share the same preference ordering over players, $p_1 \succ p_2 \succ \dots \succ p_N$. This is called the global ranking \cite{liu2021bandit} or the serial dictatorship \cite{pmlr-v130-sankararaman21a} model.  We characterize the regret lower bound based on this instance.

\begin{lemma}[Lower bound on player-optimal stable regret]
\label{lemma:lower_bound}
There exists a problem instance with serial dictatorship such that for any learning algorithm, the player-optimal stable regret of player $p_1$ satisfies
\[
\liminf_{T \to \infty}
\frac{\mathbb{E}[\texttt{CPT-Reg}_1(T)]}{\log T}
\ge
\sum_{j \in [K], j \neq m_1^*} \frac{1}{4}(LM)^{2/\alpha} \left( \frac{1}{\Delta_{1,m_1^*,j}} \right)^{\frac{2}{\alpha}-1}.
\]
where $\Delta_{1,m_1^*,j}$ is the preference gap between the best arm $m^*_1$ and the $j$-th arm.
\end{lemma}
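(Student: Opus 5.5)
We reduce to a single-agent problem and then run a Lai--Robbins change-of-measure argument on a Bernoulli instance. Under serial dictatorship with $p_1$ ranked first by every arm, $p_1$ is never rejected, so $\bar A_1(t)=A_1(t)$ for all $t$. Its player-optimal stable arm is its CPT-best arm, $m_1^*=\arg\max_j \mu_{1,j}$, and its observations are unaffected by the other players. Hence $\texttt{CPT-Reg}_1(T)$ is exactly the CPT-regret of a single-agent $K$-armed bandit, and any market algorithm induces a single-agent policy for $p_1$ (the others' actions serve only as extra, independent randomization). We may therefore invoke the Lai--Robbins bound~\cite{LAI19854}. For any consistent policy (regret $o(T^a)$ on every instance of the class, for all $a>0$) and any suboptimal arm $j$,
\[
\liminf_{T\to\infty}\frac{\mathbb{E}[T_{1,j}(T)]}{\log T}\ \ge\ \frac{1}{\inf\{\mathrm{KL}(\mathcal F_{1,j},\mathcal F'):\ \mu(\mathcal F')>\mu_{1,m_1^*}\}},
\]
with the infimum over the distribution class. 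Combined with the regret decomposition $\mathbb{E}[\texttt{CPT-Reg}_1(T)]=\sum_{j\ne m_1^*}\mathbb{E}[T_{1,j}(T)]\Delta_{1,m_1^*,j}$, it suffices to show that, for our instance, the KL infimum is at most $4\,\Delta_{1,m_1^*,j}^{2/\alpha}/(LM)^{2/\alpha}$. Multiplying the resulting lower bound $\tfrac14 (LM)^{2/\alpha}\Delta^{-2/\alpha}$ by $\Delta$ then gives the $\Delta^{1-2/\alpha}$ term in the statement.

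The instance will use Bernoulli rewards scaled by $M$, i.e. $Y\in\{0,M\}$ with $\mathbb P(Y=M)=p$. This gives $\mu=M\,w(p)$, and the loss part vanishes. For the weight we choose $w(p)=L'p^{\alpha}$ near $0$; since $|x^\alpha-y^\alpha|\le|x-y|^\alpha$, this is $\alpha$-Hölder with constant $L'$. Take $L'=L$ and arrange $w(1)=1$ by a Hölder-preserving modification away from the small-$p$ region where the arms live (for example, concavely capping at $1$ once $Lp^\alpha\ge 1$). Let every suboptimal arm $j$ have $p_j$ small. To raise arm $j$ just above $\mu_{1,m_1^*}$ we move $p_j$ to $q_j$ with $M L(q_j^\alpha-p_j^\alpha)=\Delta_{1,m_1^*,j}+\epsilon$. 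The cheapest such move is $p_j=0$ (or $p_j\to0$), which gives $q_j^\alpha\approx\Delta/(LM)$, i.e. $q_j=(\Delta/LM)^{1/\alpha}$. The Bernoulli KL satisfies $\mathrm{KL}(\mathrm{Ber}(p)\|\mathrm{Ber}(q))\le (q-p)^2/(q(1-q))$. Placing $p_j=q_j/2$ instead, to keep the KL finite, gives a KL of order $q_j\lesssim \Delta^{1/\alpha}$. That is actually better than required, so the claimed $\Delta^{2/\alpha}$ scaling is safe; I would nevertheless match the paper's constant. To do so, choose the $p_j$ bounded away from $0$ (e.g. $p_j\in[1/4,3/4]$) and use the worst-case Hölder slope there. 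Then $(q-p)^2/(q(1-q))\le 4\,(q-p)^2\le 4(\Delta/LM)^{2/\alpha}$, with the Hölder inequality read in reverse (the distortion requires a probability change of at least $(\Delta/LM)^{1/\alpha}$; we pick $w$ so that this is attained). Taking $\epsilon\downarrow0$ yields the per-arm constant $\tfrac14(LM)^{2/\alpha}\Delta^{-2/\alpha}$.

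The main obstacle is constructing a single valid weight function $w$ (with $w(0)=0$, $w(1)=1$, and global Hölder constant at most $L$) such that, at every suboptimal arm's operating point, raising the CPT value by $\Delta_{1,m_1^*,j}$ requires a probability shift no larger than $(\Delta_{1,m_1^*,j}/LM)^{1/\alpha}$, while keeping $q(1-q)\ge 1/4$-type control for the KL bound. I would first try a piecewise $w$ with cusps $L|p-p_j|^\alpha$ placed at each $p_j$ and glued monotonically, checking that gluing preserves the Hölder constant. If that fails, I would place all suboptimal arms at a common base point $p_0$ where the cusp lives, and vary only their reward scales to create distinct gaps. Finally, I would extend the bound to the whole market by noting that players $p_2,\dots,p_N$ can be given arbitrary instances without affecting $p_1$.
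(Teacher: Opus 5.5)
Your reduction of $p_1$'s problem to a single-agent CPT bandit under serial dictatorship is exactly the paper's argument. The paper then simply invokes Theorem~3 of \cite{kolla2016bandit}. You instead re-derive that single-agent bound by a Lai--Robbins change of measure, and this derivation has genuine gaps.

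First, your comparison after the cusp-at-$0$ attempt is backwards. For $\Delta<LM$ we have $(\Delta/LM)^{1/\alpha}\ge(\Delta/LM)^{2/\alpha}$. So a KL of order $\Delta^{1/\alpha}$ exceeds the target $4(\Delta/LM)^{2/\alpha}$ and only gives regret of order $(LM)^{1/\alpha}\Delta^{1-1/\alpha}$, which is weaker, not ``safe''. Near $p=0$ the Bernoulli KL is linear, not quadratic, in the probability shift, so the operating points must be interior. Second, the inequality $(q-p)^2/(q(1-q))\le 4(q-p)^2$ is reversed, because $q(1-q)\le\tfrac14$. For $p,q\in[\tfrac14,\tfrac34]$ you only get the factor $\tfrac{16}{3}$, i.e.\ constant $\tfrac{3}{16}$. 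The constant $\tfrac14$ requires the alternative $q$ to tend to $\tfrac12$.

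Third, and decisively, the construction of $w$ is left open, and neither proposed route works; this construction is the actual content of the bound. H\"older continuity only gives a \emph{lower} bound on the probability shift needed. You therefore need $w(q_j)-w(p_j)=L(q_j-p_j)^\alpha$ exactly, where $q_j$ is the alternative.

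For monotone $w$, all alternatives must cross the same level $\mu_{1,m_1^*}/M$ at essentially the same point $q$. Right-opening cusps anchored at the $p_j$'s then cannot coexist: if $p_1<p_2<q$ and $w(p)=w(p_1)+L(p-p_1)^\alpha$ on $[p_1,q]$, then $w(q)-w(p_2)=L[(q-p_1)^\alpha-(p_2-p_1)^\alpha]<L(q-p_2)^\alpha$ for $\alpha<1$. The fallback with reward scales $M_j<M$ yields $(LM_j)^{2/\alpha}$ instead of $(LM)^{2/\alpha}$. It therefore does not prove the stated inequality unless all $M_j=M$, which forces all suboptimal arms to coincide.

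The missing idea is to anchor a single \emph{left}-opening cusp at the optimal arm's point. Take $w=\min\{1,\max\{0,c+g\}\}$ with $g(p)=-L(\tfrac12-p)^\alpha$ for $p\le\tfrac12$ and $g(p)=L(p^\alpha-2^{-\alpha})$ for $p\ge\tfrac12$. Concavity of $t\mapsto t^\alpha$ shows $g$ stays $\alpha$-H\"older with constant $L$ across $\tfrac12$. Any $c<1$ in $[1-L(1-2^{-\alpha}),L2^{-\alpha}]$, which is nonempty when $L\ge1$, gives $w(0)=0$ and $w(1)=1$. Let the optimal arm be $M\cdot\mathrm{Ber}(\tfrac12)$ and arm $j$ be $M\cdot\mathrm{Ber}(\tfrac12-\delta_j)$ with $\delta_j=(\Delta_{1,m_1^*,j}/LM)^{1/\alpha}$, so the gap is exactly $LM\delta_j^\alpha$. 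The alternative $M\cdot\mathrm{Ber}(\tfrac12+\epsilon)$ beats the optimal arm and has $\chi^2=(\delta_j+\epsilon)^2/(\tfrac14-\epsilon^2)\to4\delta_j^2$. This gives the stated constant $\tfrac14(LM)^{2/\alpha}$ for arbitrary distinct small gaps.
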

\begin{remark}
    The above result roughly implies that the (minimax) lower bound for CPT distorted matching markets has an improved dependence on preference gap compared to the regret of \texttt{CPT-ETGS}.
\end{remark}

\emph{Intuition:} It is easy to see that under serial dictatorship, the stable matching between the players and arms is unique. Player ranked $1$ ($p_1$) prefers the arm $m_1^* = \arg\max_{j \in [K]} \mu_{1j} $
the most, where $\mu_{ij}$ is the CPT distorted mean of player $i$ pulling arm $j$. Since player $p_1$ is the highest ranked player, it will not face any collision and the learning problem for $p_1$ reduces to the classical multi-armed bandit problem (with CPT distorted rewards) where a player is given $K$ actions (arms) and must learn to choose the best action over time. Detailed proof of the lemma is provided in the Appendix~\ref{app:lower_bound}.

\begin{algorithm}[t!]
\caption{Improved \texttt{CPT-ETGS} (player $p_i$)}
\label{alg:improved}
\begin{algorithmic}[1]
\State \textbf{Input:} number of players $N$, number of arms $K$, horizon $T$
\State \textbf{Initialize:} $\hat{\mu}_{i,j}=0$, $T_{i,j}=0$, $\text{W-UCB}_{i,j}=\infty$, $\text{W-LCB}_{i,j}=-\infty$ for all $j$
\State $\mathcal{D}_i \gets \emptyset$, $\mathcal{A}_i \gets \mathcal{K}$, $E_i \gets$ True

\For{round $t = 1,2,\dots,T$}

    \State Select an arm $A_i(t)$ from $\mathcal{A}_i$ using a round-robin schedule

    \State Update $\hat{\mu}_{i,j}$ using Algorithm~\ref{alg:distorted_estimator}, increment $T_{i,j}$ 

    \State Remove suboptimal arms from $\mathcal{A}_i$ using the confidence bounds if $|\mathcal{A}_i| > N$

    \If{one arm is confidently better than all others}
        \State Commit to that arm: set $\mathcal{A}_i=\{j\}$ and $E_i=\text{False}$
    \EndIf

    \For{each other player $p_{i'}$ already committed to arm $A_{i'}$}
        \If{arm $A_{i'}$ prefers $p_{i'}$ over $p_i$}
            \State Add $A_{i'}$ to deletion set $\mathcal{D}_i$ and remove it from $\mathcal{A}_i$
            \If{$p_i$ had committed to that arm}
                \State Resume exploration: set $E_i=\text{True}$
            \EndIf
        \EndIf
    \EndFor

\EndFor
\end{algorithmic}
\end{algorithm}

\subsection{Improved \texttt{CPT-ETGS} Algorithm}

 To improve the dependence on K in the regret obtained in Theorem~\ref{th:cpt_etgs} we build on the adaptive ETGS framework of~\cite{kong2024improved}, modifying it to operate with 
  CPT-distorted confidence bounds. The key idea is to interleave exploration with arm elimination
  rather than exploring all $K$ arms uniformly (as in \texttt{CPT-ETGS}), players progressively    
  discard arms that are provably suboptimal under the distorted reward estimates, thereby reducing
  unnecessary exploration.

  Each player $p_i$ maintains a candidate arm set $\mathcal{A}_i$ (initially $[K]$), a rejected
  arm set $D_i$ (initially $\emptyset$), and an exploration flag $E_i$ (initially True) indicating
  whether the player is still exploring or has committed to a single arm.

  At each round $t$, player $p_i$ updates its CPT-distorted estimate $\hat{\mu}_{i,j}(t)$ via
  Algorithm~\ref{alg:distorted_estimator} and constructs the confidence intervals based on W-UCB and W-LCB as defined in equation~\ref{eq:wucb}.
  An arm whose W-UCB falls below the W-LCB of any other arm is eliminated from $\mathcal{A}_i$.
  This elimination is the central difference from \texttt{CPT-ETGS}, it ensures that the number
  of samples wasted on clearly inferior arms scales with the per-arm gap $\Delta_{i,m_i^*,j}$
  rather than the global minimum gap $\Delta$, leading to improved regret.

  The Algorithm~\ref{alg:improved} operates as an online counterpart of the
  Gale-Shapley procedure~\cite{gale1962college}. While $E_i = \text{True}$, player $p_i$
  explores arms from $\mathcal{A}_i$ via round-robin and eliminates suboptimal arms using the
  confidence bounds. Once a single arm dominates all others in $\mathcal{A}_i$, the player sets
  $E_i = \text{False}$ and commits to that arm. If a committed arm is subsequently claimed by a
  higher-priority player (according to the arm's ranking $\pi_j$), the arm is moved to $D_i$,
  the candidate set is reset to $\mathcal{A}_i = [K] \setminus D_i$, and $E_i$ is restored to
  True so that exploration resumes over the remaining arms. This interaction process mirrors
  the offline Gale-Shapley algorithm, ensuring convergence to the player-optimal stable matching.
  We refer to~\cite{kong2024improved} for details on the communication protocol and collision
  handling (Lemma~B.2 therein guarantees that each arm is sampled at least once every $2N$ rounds
  despite collisions).

\begin{theorem}
\label{th:improved}
The player-optimal stable regret for each player $p_i$ satisfies
\begin{align*}
    \mathbb{E}[\texttt{CPT-Reg}_i(T)] \le c_1\!\left(\sum_{j \in [K], j \neq m_i^*}\log T \,\frac{(LM)^{2/\alpha}}{\Delta_{i,m_i^*,j}^{2/\alpha - 1}} + N^2 \log T \, \frac{(LM)^{2/\alpha}}{\Delta^{2/\alpha}} \cdot \Delta_{i,\max}
  \right)
\end{align*}
where $c_1$ is a universal positive constant.
\end{theorem}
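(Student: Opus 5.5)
The plan is to follow the regret decomposition of the adaptive ETGS analysis in \cite{kong2024improved}, replacing the sub-Gaussian confidence radii by the CPT radii of equation~\eqref{eq:wucb}. The first step is concentration. By the Dvoretzky--Kiefer--Wolfowitz inequality and the H\"older continuity of $w$ (Assumption~\ref{ass:holder}), together with boundedness (Assumption~\ref{ass:bounded}), the estimator of Algorithm~\ref{alg:distorted_estimator} satisfies $|\hat{\mu}_{i,j}-\mu_{i,j}|\le LM(3\log T/(2T_{i,j}))^{\alpha/2}$ with probability at least $1-2T^{-3}$. This is the same estimate used for Theorem~\ref{th:cpt_etgs}. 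Define the good event $\mathcal{G}$ on which this holds for all $i,j$ and all $t\le T$. A union bound gives $\Pr(\mathcal{G}^c)\le 2NKT^{-2}$, so the contribution of $\mathcal{G}^c$ to the regret is $O(NK\Delta_{i,\max}/T)$, which is negligible.

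On $\mathcal{G}$, the next step is the elimination count. Arm $j$ is eliminated from $\mathcal{A}_i$ once its W-UCB falls below the W-LCB of $m_i^*$ (or of some arm at least as good). This happens as soon as $4LM(3\log T/(2T_{i,j}))^{\alpha/2}<\Delta_{i,m_i^*,j}$, that is, once $T_{i,j}\ge \tfrac{3}{2}\log T\,(4LM/\Delta_{i,m_i^*,j})^{2/\alpha}$. Each such pull costs $\Delta_{i,m_i^*,j}$, so arm $j$ contributes $O\big(\log T\,(LM)^{2/\alpha}\Delta_{i,m_i^*,j}^{1-2/\alpha}\big)$. Summing over $j\ne m_i^*$ gives the first term. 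This must hold over all exploration episodes: after a reset $\mathcal{A}_i=[K]\setminus D_i$, previously gathered samples are kept, so an arm already counted past its threshold is eliminated again immediately, apart from $O(N)$ extra pulls per reset, because each arm is sampled at least once every $2N$ rounds. I also need that $m_i^*$ is never eliminated while it is still available. This follows on $\mathcal{G}$ because the comparison is against true CPT values, and, as in the offline Gale--Shapley argument, arms that are deleted into $D_i$ are exactly those taken by higher-priority players under the player-optimal proposal sequence. Hence $m_i^*\notin D_i$.

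The remaining regret comes from rounds where $p_i$ is forced to explore or collide while waiting for other players to commit, together with re-exploration after deletions. The Gale--Shapley mirroring gives $O(N^2)$ proposal/rejection events in total. Before each commitment, a player must separate its current best arm from the remaining arms among its top $N+1$. This needs $O\big(\log T\,(LM/\Delta)^{2/\alpha}\big)$ samples per arm, and within the schedule each round in that window costs at most $\Delta_{i,\max}$. Collisions and the $2N$-round sampling guarantee (Lemma~B.2 of \cite{kong2024improved}) inflate this by a factor of $N$ per event. Combining these pieces gives the $N^2\log T\,(LM)^{2/\alpha}\Delta^{-2/\alpha}\Delta_{i,\max}$ term, and adding the three contributions yields the bound with a universal $c_1$.

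The main obstacle is this last accounting. I must show that the waiting and collision time charged to $p_i$ depends on the global gap $\Delta$ only through $O(N^2)$ commit/reject events, each of length $O(\log T\,\Delta^{-2/\alpha})$ up to the $N$ factor, and that it does not pick up an extra factor of $K$. My first attempt would be to bound the total time until the Gale--Shapley simulation terminates by induction on the number of rejections, following the proof in \cite{kong2024improved}, with every gap-dependent sample count replaced by its CPT analogue $(LM/\Delta)^{2/\alpha}\log T$.
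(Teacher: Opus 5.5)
\paragraph{Where the proposal falls short.} Your first two steps agree with the paper. The first is the good event; your union bound, taken over sample counts, even gives $2NK/T^2$, which is sharper than the paper's $2NK/T$. The second is the per-arm charge $\log T\,(LM)^{2/\alpha}\Delta_{i,m_i^*,j}^{1-2/\alpha}$ for discarded arms, which the paper obtains in Lemma~\ref{lem:subopt_arm_regret} through the sets $D_s$ and a telescoping argument across resets. The gap is the $N^2$ term. You leave it open, and the accounting you sketch for it does not close. Taken literally, $O(N^2)$ commit/reject events, each needing $O(\log T\,(LM/\Delta)^{2/\alpha})$ samples per arm and inflated by a factor $N$, gives $N^3$ rather than $N^2$. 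More seriously, your planned route targets the wrong quantity. Bounding the total time until the Gale--Shapley simulation terminates includes the other players' elimination phases over up to $K$ arms, which can last on the order of $K\log T\,(LM/\Delta)^{2/\alpha}$ rounds. Charging those rounds to $p_i$ at rate $\Delta_{i,\max}$ reintroduces exactly the factor $K$ you are worried about, and gives nothing better than Theorem~\ref{th:cpt_etgs}.

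\paragraph{The missing idea: waiting is free.} On the good event, $m_i^*$ never enters $D_i$. This should replace your claim that $m_i^*$ is never eliminated: it can be confidence-eliminated by a strictly better arm that is still available, and is restored at the next reset. Hence the arm $j_s$ that $p_i$ commits to in each interval is ranked weakly above $m_i^*$, and committed rounds incur non-positive regret unless $p_i$ is rejected. The paper then counts only $p_i$'s own events, as follows.
\begin{enumerate}
\item By Lemma~\ref{lem:gs_proposals}, $\mathcal{A}_i$ is reset at most $N-1$ times, so there are at most $N$ intervals.
\item Confidence elimination stops once $|\mathcal{A}_i|\le N$, a guard your step~2 overlooks. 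So apart from the discarded arms, only a final set $R_s$ of at most $N$ arms is explored.
\item Each arm in $R_s$ is sampled at most $\bar T_i=\frac{3\log T}{2}(4LM/\Delta)^{2/\alpha}$ times. This holds because $j_s$ lies in $p_i$'s top $N$ and is separated from every other arm by at least $\Delta$ (Lemma~\ref{lem:sample_complexity}). The contribution is $N\cdot N\cdot\bar T_i\,\Delta_{i,\max}$.
\item Collisions occur only in committed rounds, since the round-robin exploration is collision-free. Cut these rounds at all players' reset times into at most $2N$ segments, and bound collisions with each $p_{i'}$ per segment by $\bar T_i$. This gives $2N^2\bar T_i\,\Delta_{i,\max}$ (Lemma~\ref{lem:collision_regret}).
\end{enumerate}
Neither count involves $K$, and no bound on the running time of the Gale--Shapley simulation is needed.
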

The proof is deferred to Appendix~\ref{app:improved}.
\begin{remark}
    When $K \gg N$, the regret for Improved \texttt{CPT-ETGS} matches to that of the lower bound obtained in Lemma~\ref{lemma:lower_bound}, implying optimality of the algorithm.
\end{remark}

\section{Corrupted Matching Markets with CPT Regret}
\label{sec:adversarial}
\subsection{Adversarial Corruption Model}

 We extend the CPT-matching framework to accommodate adversarial reward corruption, following the
 model of \cite{wubandit}. At each round $t$,  a true reward
  $r^S_{i,j}(t) \sim \mathcal{F}_{i,j}$ is generated for every player-arm pair who get matched. An adversary, with access to the
   full history of rewards and matching decisions up to round $t$, then perturbs these values and
  presents a corrupted reward $r_{i,j}(t)$ in their place. A matched player $p_i$ therefore
  observes $X_i(t) = r_{i,A_i(t)}(t)$ rather than the true stochastic reward.
\begin{assumption}
\label{ass:bound_C}
We assume that the total corruption experienced by player $p_i$ over the time horizon $T$ is bounded by
$
\sum_{t=1}^{T} \max_{j \in [K]} \left| r_{i,j}(t) - r^S_{i,j}(t) \right| \le C .
$
\end{assumption}
\begin{assumption}
\label{ass:positive_rewards}
     The stochastic rewards are non-negative and bounded by a constant $M$ almost surely, i.e.,
$0 \le r^S_{i,j}(t) \le M \quad \forall i,j,t,$
and the possibly corrupted rewards are non-negative, i.e.,
$r_{i,j}(t) \ge 0 \quad \forall i,j,t.$
\end{assumption}
\begin{remark}
We assume corrupted rewards are non-negative since the stochastic rewards are also non-negative. 
If a negative reward were observed, the player could immediately detect that corruption has occurred. 
Moreover, any negative corrupted reward can be safely replaced by $0$ without affecting the analysis.
\end{remark}

\subsection{Algorithm with known corruption budget}
We now consider the setting where the total corruption budget $C$ is known in advance. In this case, we modify the \texttt{CPT-ETGS} algorithm to make the learning process robust to adversarially corrupted rewards, as in Algorithm 1 of \cite{wubandit}.  We play Algorithm~\ref{alg:cpt_etgs} and to account for the corruption while players estimate their preferences over arms, we enlarge the confidence intervals as
$
\text{W-UCB}_{i,j} = \hat{\mu}_{i,j} + LM\!\left(\frac{3\log T}{2T_{i,j}(t)}\right)^{\!\alpha/2} + \frac{CL}{T_{i,j}^{\alpha}}$, $
\text{W-LCB}_{i,j} = \hat{\mu}_{i,j} - LM\!\left(\frac{3\log T}{2T_{i,j}(t)}\right)^{\!\alpha/2} - \frac{CL}{T_{i,j}^{\alpha}}$.

The rest of the three-phases: index discovery, preference learning via round-robin exploration, and decentralized Gale-Shapley matching remains unchanged. This modification ensures that the true CPT-distorted preference value stays within the confidence interval despite adversarial perturbations.

\begin{theorem}
\label{th:known}
With a (known) corruption budget of $C$, the CPT regret of
$p_i \in \mathcal{N}$ satisfies $\mathbb{E}[\texttt{CPT-Reg}_i(T)]=\mathcal{O}\left(
\frac{K\log T}{\Delta^{2/\alpha}}
+ K\left(\frac{C}{\Delta}\right)^{1/\alpha}\right)$.
\end{theorem}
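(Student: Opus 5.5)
The plan follows the proof of Theorem~\ref{th:cpt_etgs}, adding one extra ingredient: a deterministic bound on how far corruption can move the CPT estimator. Let $\tilde{\mu}_{i,j}$ be the output of Algorithm~\ref{alg:distorted_estimator} on the (unobserved) uncorrupted samples $r^S_{i,j}$ and $\hat{\mu}_{i,j}$ its output on the observed corrupted samples, both based on the same $T_{i,j}$ samples. Since all rewards are non-negative (Assumption~\ref{ass:positive_rewards}), only the gain part is present, and we can write
\[
\hat{\mu}_{i,j}=\int_0^\infty w\big(1-\hat F_{i,j}(z)\big)\,dz ,
\]
with $\hat F_{i,j}$ the empirical CDF of the observed samples, and similarly for $\tilde{\mu}_{i,j}$.

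\textbf{Step 1 (corruption perturbation lemma).} We will show
\[
|\hat{\mu}_{i,j}-\tilde{\mu}_{i,j}|\le \frac{CL}{T_{i,j}^{\alpha}} .
\]
The idea is to regard the estimator as a weighted sum of order statistics with weights $w(\tfrac{l+1-k}{l})-w(\tfrac{l-k}{l})$. Moving a single sample by $\delta$ changes the tail function $1-\hat F$ by at most $1/l$ on a set of $z$ of length $\delta$. By Hölder continuity (Assumption~\ref{ass:holder}), the integrand then changes by at most $L l^{-\alpha}$ on that set, so the estimator moves by at most $L\delta/l^{\alpha}$. Summing over samples and using Assumption~\ref{ass:bound_C}, $\sum_t|r_{i,j}(t)-r^S_{i,j}(t)|\le C$, gives the claim. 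The only subtlety is that sorting changes under perturbation; we handle it by moving one sample at a time and applying the triangle inequality.

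\textbf{Step 2 (good event and sample complexity).} On the good event from the proof of Theorem~\ref{th:cpt_etgs}, obtained by DKW and a union bound with failure probability $O(NK/T^2)$, we have
\[
|\tilde{\mu}_{i,j}-\mu_{i,j}|\le LM\left(\frac{3\log T}{2T_{i,j}}\right)^{\alpha/2}.
\]
Combining this with Step 1, $\mu_{i,j}$ lies in the enlarged [W-LCB, W-UCB] interval. Every ranking determined in Phase 2 is therefore correct, and Phase 3 reproduces the player-optimal stable matching exactly as in Theorem~\ref{th:cpt_etgs}. Two arms among the top $N+1$ are separated as soon as the interval width falls below $\Delta/2$. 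Concretely, it suffices that both of the following hold:
\[
LM\left(\frac{3\log T}{2T_{i,j}}\right)^{\alpha/2}\le \frac{\Delta}{8},\qquad \frac{CL}{T_{i,j}^{\alpha}}\le \frac{\Delta}{8}.
\]
The first holds once $T_{i,j}\gtrsim \log T\,(LM/\Delta)^{2/\alpha}$. The second holds once $T_{i,j}\gtrsim (8CL/\Delta)^{1/\alpha}$. Hence, up to constants, $T_{i,j}\ge \max$ of the two thresholds, which is at most their sum, suffices.

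\textbf{Step 3 (regret).} Round-robin exploration over $K$ arms, with sub-phase doubling (which costs at most a factor of $2$), makes the exploration length $O\big(K[\log T/\Delta^{2/\alpha}+(C/\Delta)^{1/\alpha}]\big)$. Phases 1 and 3 contribute $O(N^2+NK)$, and the bad event contributes $O(1)$. Each round incurs at most $\Delta_{i,\max}$ regret, which yields the stated bound, with $L$, $M$ and $\Delta_{i,\max}$ absorbed into the constants.

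\textbf{Main obstacle.} Step 1 is the hardest part. The difficulty is that the corruption is adaptive, so the bound must be pathwise. It must also use the budget $C$ summed over all rounds, so a single arm's samples receive at most $C$ total perturbation. Finally, it must deliver the rate $l^{-\alpha}$ rather than $l^{-1}$. The integral representation above makes this rate transparent, and that is the route I would try first.
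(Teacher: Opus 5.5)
Your proposal is correct and follows the paper's proof essentially step for step: the same pathwise perturbation bound $|\hat{\mu}_{i,j}-\tilde{\mu}_{i,j}|\le CL/T_{i,j}^{\alpha}$, the same DKW-based good event, the same separation threshold $\frac{3\log T}{2}(8LM/\Delta)^{2/\alpha}+(8CL/\Delta)^{1/\alpha}$ on $T_{i,j}$, and the same phase-by-phase regret accounting with doubling sub-phases. The only real difference is how you prove the perturbation bound: you use the tail-integral form $\int_0^\infty w(1-\hat F_{i,j}(z))\,dz$ and move one sample at a time, whereas the paper uses the order-statistic form together with the fact that sorting does not increase $\sum_k |Y_{[i,j,k]}-Y^s_{[i,j,k]}|$; both give a factor $L/T_{i,j}^{\alpha}$ per unit of corruption, and your route avoids needing that rearrangement fact.
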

When $\alpha=1$, the CPT regret matches to that of \cite{wubandit} where undistorted rewards are considered. The proof is deferred to Appendix~\ref{app:proof_known}.
\subsection{Algorithm with Unknown Corruption}

\begin{algorithm}[t]
\caption{CPT-Multi-layer ETGS Race (player $p_i$)}
\label{alg:etgs_unknown}
\begin{algorithmic}[1]
\State \textbf{Input:} $\mathcal{N}, \mathcal{K}, T$, sub-phase length $d$
\State \textbf{Initialize:} $\hat{\mu}^{\ell}_{i,j}=0$, $T^{\ell}_{i,j}=0$ for $j\in[K], \ell\in[\log T]$
\State \textbf{Phase 1:} Estimate player index using ETGS

\For{sub-phase $k = 1,2,\dots$}

\State Explore arms for $d$ rounds using the layer-$\ell$ round-robin rule and update $\hat{\mu}^{\ell}_{i,j},T^{\ell}_{i,j}$

\State Compute confidence bounds $W\text{-}\mathrm{UCB}^{\ell}_{i,j}$ and $W\text{-}\mathrm{LCB}^{\ell}_{i,j}$ for all $j\in[K]$

\State Apply \textsc{Monitoring} to obtain ranking $\sigma_i^{\ell}$

\If{$N$ arms are identified}
\State Compute the stable arm via decentralized Gale–Shapley and assign the matching to all layers $\ell' \le \ell$
\EndIf

\If{$p_i$ is the leader}
\State Sample layer $\ell$ with probability $2^{-\ell}$, with remaining probability sample $\ell=1$ and broadcast using $c_k=\lfloor \ell \rfloor$ arm pulls
\EndIf

\EndFor
\end{algorithmic}
\end{algorithm}

We now consider the setting where the total corruption level $C$ is unknown. 
 The algorithm retains the three-phase structure of \texttt{CPT-ETGS} (index discovery, preference
   learning, stable matching). 
  The core challenge is that without knowing $C$, we cannot calibrate the corruption term in the
  confidence intervals. Following~\cite{wubandit}, we address this by running $\log T$ parallel
  layers of the learning algorithm, where layer $\ell$ is activated with probability
  $2^{-\ell}$. Layers that are selected less frequently accumulate fewer corrupted samples in
  expectation.
  Exploration within each layer proceeds in fixed-length sub-phases of $d$ rounds, with
  observations updating only the active layer's statistics. The CPT-distorted estimates are
  computed via Algorithm~\ref{alg:distorted_estimator}.

  For layer $\ell$, we construct confidence intervals as
  \[
  \text{W-UCB}^{\ell}_{i,j}
  =
  \hat{\mu}^{\ell}_{i,j}
  +
  LM\left(\frac{3\log T}{2T^{\ell}_{i,j}(t)}\right)^{\alpha/2}
  +
  \frac{2dL\log T}{(T^{\ell}_{i,j}(t))^\alpha},
  \]
  \[
  \text{W-LCB}^{\ell}_{i,j}
  =
  \hat{\mu}^{\ell}_{i,j}
  -
  LM\left(\frac{3\log T}{2T^{\ell}_{i,j}(t)}\right)^{\alpha/2}
  -
  \frac{2dL\log T}{(T^{\ell}_{i,j}(t))^\alpha}.
  \]
  The first correction term handles stochastic estimation error (as in \texttt{CPT-ETGS}), while
  the second accounts for corruption.
  Layer synchronization across players is achieved through a leader-based protocol
  (see~\cite{wubandit} for details): the leader selects a layer at the end of each sub-phase. Once a layer $\ell$ with sampling probability below $1/C$ confidently determines
   the preference ordering, it computes the stable matching $\sigma^{\ell}$ via decentralized
  Gale-Shapley. By Lemma~\ref{lemma:robust}, this layer's estimates are reliable, so its matching
  is propagated to all faster layers $\ell' \leq \ell$.

\begin{theorem}
\label{th:unknown}
The CPT-Multi-layer ETGS race algorithm achieves the player-optimal stable regret upper bound of each player $p_i$, $\mathbb{E}[\texttt{CPT-Reg}_i(T)]$ as
\begin{align*}
\mathcal{O}\!\left(
\left[
KC\left(\log^2T\right)
\left(\frac{LM}{\Delta}\right)^{\frac{2}{\alpha}}
+
K(\log T+C)
(\log T)^{1+\frac{1}{\alpha}}
\left(\frac{dL}{\Delta}\right)^{\frac{1}{\alpha}}
\right]
\Delta_{i,\max}
\right).
\end{align*}
\end{theorem}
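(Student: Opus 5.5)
The plan follows the multi-layer race analysis of \cite{wubandit}, replacing every mean-estimation step with the CPT estimator of Algorithm~\ref{alg:distorted_estimator}. We first fix the critical layer $\ell^* = \lceil \log_2 C\rceil$, the smallest layer whose sampling probability $2^{-\ell^*}$ is at most $1/C$. The regret then splits into three parts: (i) regret incurred while running layers $\ell<\ell^*$, which may be fooled by the corruption; (ii) regret until layer $\ell^*$ confidently identifies the top $N$ arms and computes the player-optimal stable matching; and (iii) regret from the broadcast and communication overhead, i.e.\ the $c_k=\lfloor\ell\rfloor$ pulls per sub-phase. Each part is multiplied by $\Delta_{i,\max}$ at the end.

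\textbf{Step 1: robustness of layer $\ell^*$.} We prove a CPT version of the layer robustness statement; this is presumably Lemma~\ref{lemma:robust}.
\begin{itemize}
\item By a Chernoff/Freedman argument, with probability $1-O(1/T)$ the corruption landing in layer $\ell\ge\ell^*$ is at most $2^{-\ell}C+\log T\le 2\log T$ per unit amount. Since samples arrive in blocks of $d$ rounds, this gives a corruption of at most $2d\log T$ absorbed in $T^\ell_{i,j}$ samples.
\item Next we bound the effect of corruption on the CPT estimate. Write the estimator as $\int_0^M w(\hat F^c(z))\,dz$ with $\hat F^c$ the empirical tail; nonnegativity (Assumption~\ref{ass:positive_rewards}) lets us use only the gain part. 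Changing the samples with total perturbation $S$ moves each empirical tail value by at most a count of shifted samples over $T^\ell_{i,j}$.
\item Integrating, H\"older continuity (Assumption~\ref{ass:holder}) gives a deviation of at most $L\,(S/T^\ell_{i,j})^{\alpha}$ up to constants. I would verify this via a coupling of order statistics, using $\sum_k |x_{(k)}-y_{(k)}|\le \sum_k|x_k-y_k|$ together with the concavity-type bound on $w$ increments. This is exactly the term $2dL\log T/(T^\ell_{i,j})^\alpha$ in the confidence width.
\item Combining with the DKW-based concentration already used in Theorem~\ref{th:cpt_etgs}, the true $\mu_{i,j}$ lies in $[\text{W-LCB}^\ell_{i,j},\text{W-UCB}^\ell_{i,j}]$ for all $\ell\ge\ell^*$ on a good event of probability $1-O(K\log T/T)$.
\end{itemize}

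\textbf{Step 2: sample complexity of layer $\ell^*$.} Arm separation succeeds once both widths are below $\Delta/4$. This holds when $T^{\ell^*}_{i,j}\gtrsim \log T\,(LM/\Delta)^{2/\alpha}$ (stochastic term) and $T^{\ell^*}_{i,j}\gtrsim (dL\log T/\Delta)^{1/\alpha}$ (corruption term). Round-robin over $K$ arms means layer $\ell^*$ needs about $K$ times this many of its own rounds. Layer $\ell^*$ is chosen with probability $2^{-\ell^*}\approx 1/C$, so the total elapsed rounds are a factor $C$ larger, plus a $\log T$ factor from the leader broadcasting and the layer bookkeeping. This produces the term $KC\log^2T\,(LM/\Delta)^{2/\alpha}$ and, via the corruption width, a term of the form $KC(\log T)^{1+1/\alpha}(dL/\Delta)^{1/\alpha}$.

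\textbf{Step 3: the fast layers $\ell<\ell^*$.} These layers are played with constant-order probability and may commit to wrong matchings, but only if the corruption they absorb exceeds the width. Following \cite{wubandit}, each wrong commitment costs at most the time until a slower, correct layer overrides it, and propagation to all $\ell'\le\ell$ caps the number of such switches by $\log T$. The deterministic $\log T$ part of the width then contributes a term $K\log T(\log T)^{1+1/\alpha}(dL/\Delta)^{1/\alpha}$. Together with Step 2's $C$ term this gives the stated $K(\log T+C)(\log T)^{1+1/\alpha}(dL/\Delta)^{1/\alpha}$. After matching, Phase 3 incurs no regret on the good event, and the failure probability contributes $O(K\log T\cdot\Delta_{i,\max})$, which is absorbed.

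The main obstacle is Step 1's corruption-perturbation bound for the order-statistic CPT estimator. Corruption is adaptive, so the empirical quantiles of the corrupted and clean samples are not simply coupled. I would first try a direct bound
\[
\bigl|\hat\mu^{c}-\hat\mu\bigr|\le \int_0^M \bigl|w(\hat F^c_{\text{corr}}(z))-w(\hat F^c_{\text{clean}}(z))\bigr|\,dz .
\]
I would then combine H\"older continuity with Jensen on the integral, $\int|\cdot|^\alpha\le M^{1-\alpha}(\int|\cdot|)^\alpha$, and use the identity $\int_0^M|\hat F^c_{\text{corr}}-\hat F^c_{\text{clean}}|\,dz\le S/T$. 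Step 2's $C\cdot$width accounting depends on this deviation bound holding.
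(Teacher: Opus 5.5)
Your architecture matches the paper's:
\begin{itemize}
\item the critical layer $\ell^*$ with $2^{\ell^*}>C$;
\item the bound $d\log T+2\le 2d\log T$ on corruption reaching robust layers (Lemma~\ref{lemma:robust}, i.e.\ \cite[Lemma~3.3]{wubandit});
\item a CPT perturbation bound absorbed into the width $2dL\log T/(T^{\ell}_{i,j})^{\alpha}$;
\item the two-term threshold $\bar T_i^\ell$ of Lemma~\ref{lemma:ti_unknown};
\item a $C$-fold slowdown of layer $\ell^*$ that pays for the fast layers;
\item a $\log T$ broadcast overhead.
\end{itemize}

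The gap is in the regret bookkeeping: your decomposition (i)--(iii) has no term for the slower layers $\ell>\ell^*$. A layer's matching is propagated only to layers $\ell'\le\ell$. So after $\ell^*$ finishes, every slower layer is still selected (with probability $2^{-\ell}$) and keeps exploring until it has about $K\bar T_i^\ell$ rounds of its own. On the good event none of them errs, but their exploration costs up to $\sum_{\ell\ge\ell^*}K\bar T_i^\ell\Delta_{i,\max}\le K\log T\,\bar T_i^\ell\,\Delta_{i,\max}$, times the broadcast overhead. Step 2 does not cover this.

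In the paper, this robust-layer term is the source of the $K\log T$ half of $K(\log T+C)$. It also contributes $K\log^2T\,(LM/\Delta)^{2/\alpha}$, which the stated first term absorbs only when $C\ge 1$.

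Your Step 3 tries to obtain the $K\log T$ term from the fast layers instead, and that does not work:
\begin{itemize}
\item ``at most $\log T$ switches'' attaches no cost to a switch;
\item the fast layers' cost is already fully paid by the elapsed time counted in Step 2;
\item the $\log T$ inside the width is not a regret source.
\end{itemize}
So that term is read off the statement rather than derived. Replacing Step 3 by the robust-layer sum above closes the argument. There are also two smaller omissions. Phase~3 is not free, since each finishing layer runs decentralized Gale--Shapley for up to $N^2$ rounds. And the failure cost carries a factor $N$.

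On Step 1: your two routes to the perturbation bound give different bounds, not the same bound up to constants. The order-statistic coupling gives $LS/(T^\ell_{i,j})^\alpha$, since each weight increment is at most $L/(T^\ell_{i,j})^\alpha$ and $\sum_k|x_{(k)}-y_{(k)}|\le S$. This is the paper's Lemma~\ref{lemma:bound_C}, and with $S=2d\log T$ it is exactly the width.

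The Jensen route gives $LR^{1-\alpha}(S/T^\ell_{i,j})^\alpha$, where $R$ is the largest observation. You have $R\le M$ only if you clip observations to $[0,M]$, because Assumption~\ref{ass:positive_rewards} makes corrupted rewards nonnegative but not bounded. With $R=M$ the two bounds differ by the factor $(S/M)^{1-\alpha}$, which is not a constant. Either one is still dominated by the width once $2d\log T\ge M$.

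Finally, this bound is pathwise: it compares two fixed sample vectors, so the adversary's adaptivity plays no role in it. It is a two-line lemma, not the main obstacle.
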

Observe that although worse compared to Theorem~\ref{th:known}, we still obtain logarithmic regret. The proof is deferred to Appendix~\ref{app:proof_unknown}.

\section{Experiments}
\label{sec:experiments}

We validate our theory on synthetic matching markets. We study three questions:
(i)  whether ignoring CPT distortion might lead to incorrect stable matchings and whether \texttt{CPT-ETGS}
achieves sub-linear (logarithmic) regret as predicted by
Theorem~\ref{th:cpt_etgs} (ii) whether adaptive elimination in Improved
\texttt{CPT-ETGS} improves performance when many arms are clearly sub-optimal
(Theorem~\ref{th:improved}); and (iii) whether the robust algorithms of
Section~\ref{sec:adversarial} peform better than \texttt{CPT-ETGS} and remain sub-linear under adversarial corruption with
known and unknown budgets (Theorems~\ref{th:known}--\ref{th:unknown}).
\paragraph{Weight function and setup.}
We use the Lipschitz CPT
weight
\[
w(p)=p+c\,p(1-p)(1-2p), \quad c=1.5,
\]
an inverse-$S$ distortion satisfying $w(0)=0$ and $w(1)=1$. It is
$\alpha=1$ H\"older with constant $L=\sup_p |w'(p)|=2.5$. Since rewards
lie in $[0,1]$, $M=1$.

\begin{figure}[t]
  \centering
  \begin{subfigure}[t]{0.32\linewidth}
    \centering
    \includegraphics[width=\linewidth]{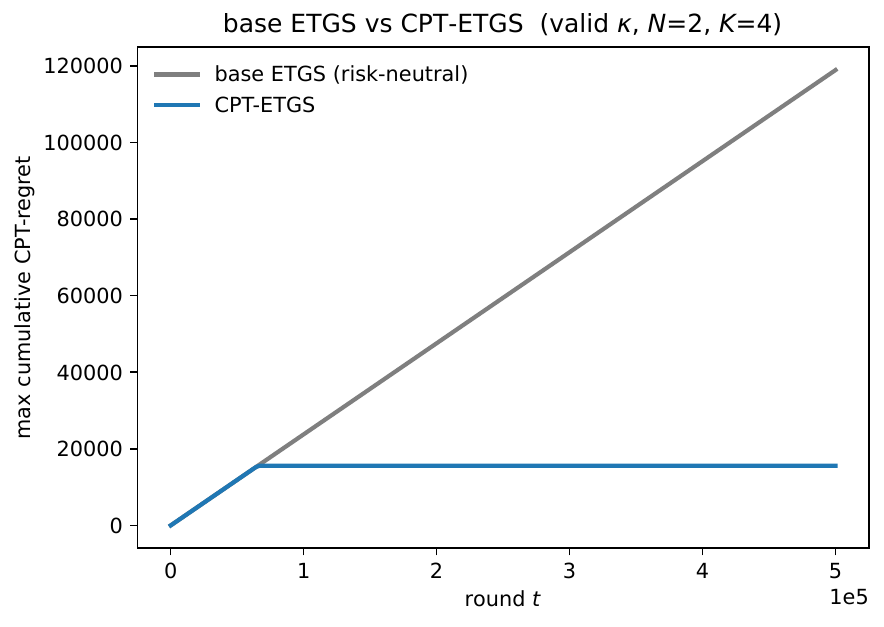}
    \caption{Rank-swap market ($N{=}2$, $K{=}4$). Base \texttt{ETGS} never
    settles $\Rightarrow$ \emph{linear} CPT-regret; \texttt{CPT-ETGS} saturates
    (Thm~\ref{th:cpt_etgs}).}
    \label{fig:base}
  \end{subfigure}\hfill
  \begin{subfigure}[t]{0.32\linewidth}
    \centering
    \includegraphics[width=\linewidth]{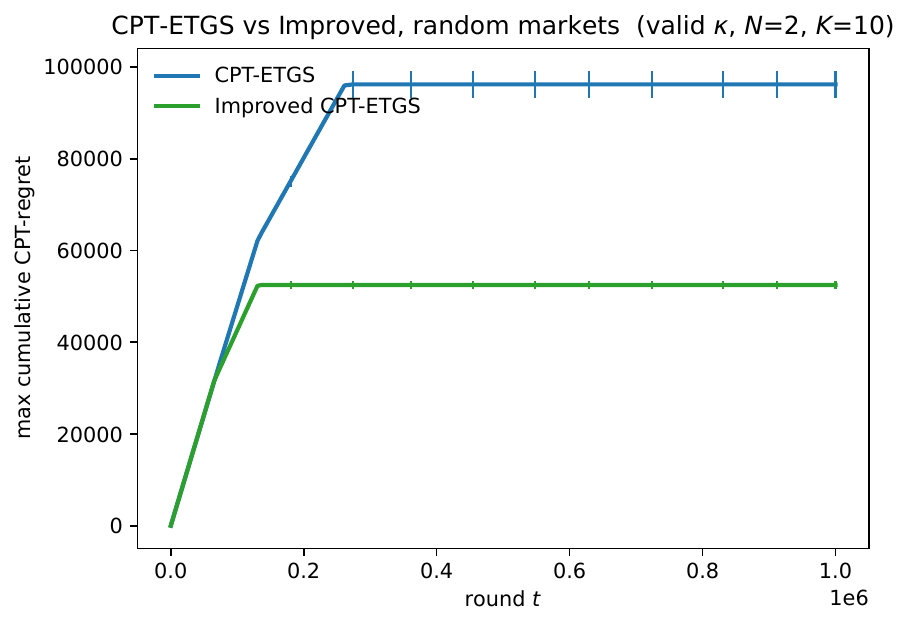}
    \caption{Many-arm random market ($N{=}2$, $K{=}10$; good-arm CPT values drawn
    at random each run). Improved \texttt{CPT-ETGS} eliminates fillers early and
    commits at a lower regret (Thm~\ref{th:improved}).}
    \label{fig:improved-rand}
  \end{subfigure}\hfill
  \begin{subfigure}[t]{0.32\linewidth}
    \centering
    \includegraphics[width=\linewidth]{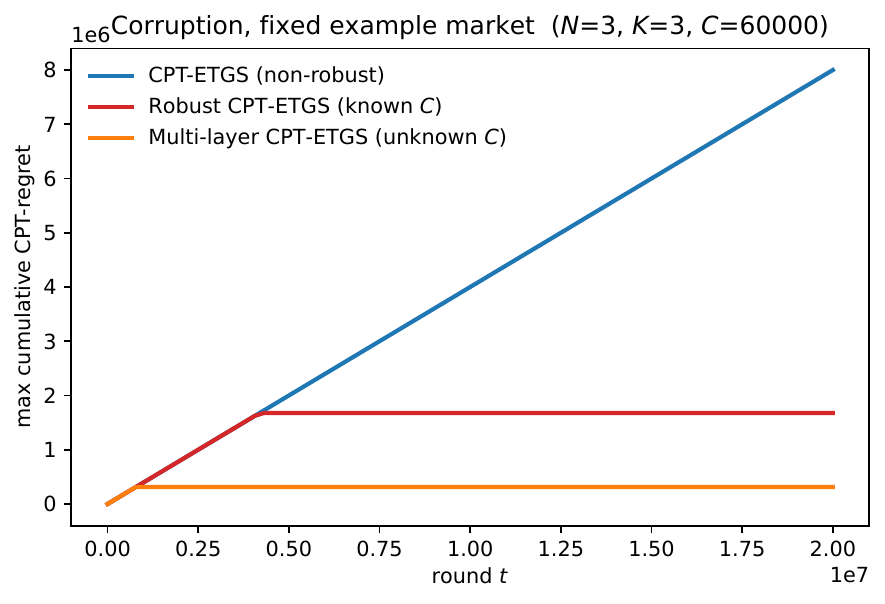}
    \caption{Adversarial corruption ($N{=}K{=}3$, $C{=}6\times10^{4}$; fixed
    example market, CPT values rotating $\{0.9,0.5,0.1\}$). Non-robust is linear;
    robust (known $C$) and Multi-layer (unknown $C$) saturate
    (Thms~\ref{th:known}--\ref{th:unknown}).}
    \label{fig:example}
  \end{subfigure}
  \caption{\textbf{Player-optimal stable CPT-regret} (max over players, averaged
  over $100$ runs; error bars are $\pm$ one standard error)  (\subref{fig:base})~Ignoring the CPT distortion is
  harmful. (\subref{fig:improved-rand})~Adaptive elimination lowers
  the regret. (\subref{fig:example})~Under corruption,
  both robust variants stay sub-linear while the
  non-robust regret diverges. }
  \label{fig:exp}
\end{figure}

Rewards are Bernoulli that is arm $a_j$ yields $v_{i,j}$ with probability $q_{i,j}$ and
$0$ otherwise. Thus
$\mu_{i,j}=w(q_{i,j})v_{i,j}$ and
$\bar\mu_{i,j}=q_{i,j}v_{i,j}$.
We report the maximum player-optimal stable CPT-regret over
players, averaged over $100$ runs with $\pm1$ standard error bars.

\paragraph{Ignoring CPT is harmful (Fig.~\ref{fig:base}).}
We construct a rank-swap market ($N=2$, $K=4$) where CPT and risk-neutral
preferences induce different player-optimal stable matchings. Base
\texttt{ETGS}, which estimates risk-neutral means, fails to identify the
CPT-optimal matching and incurs linear CPT-regret
. In contrast, \texttt{CPT-ETGS} commits to the correct
matching and its regret saturates
, consistent with
Theorem~\ref{th:cpt_etgs}.

\paragraph{Adaptive elimination helps (Fig.~\ref{fig:improved-rand}).}
For $N=2$, $K=10$, each player has two good arms and eight filler arms with $\mu_{i,j} \approx0.005$. In every run, good-arm $\mu_{i,j}$ values are redrawn
(top value in $[0.45,0.65]$, second-best $0.15$--$0.30$ lower), and arm rankings
are sampled independently. Both algorithms eventually commit, but Improved
\texttt{CPT-ETGS} quickly eliminates fillers, commits earlier, and achieves a
lower regret 
matching Theorem~\ref{th:improved}.

\paragraph{Adversarial corruption (Fig.~\ref{fig:example}).}
We consider a fixed $N=K=3$ market whose $\mu_{i,j}$ are cyclic permutations of
$\{0.9,0.5,0.1\}$. An adversary with
per-player budget $C=6\times10^4$ suppresses rewards on the optimal arms. The
non-robust \texttt{CPT-ETGS} is misled into a wrong matching and suffers linear
regret . The robust algorithm with known corruption
budget commits correctly and saturates
, while the multi-layer algorithm for unknown $C$
also remains sub-linear and attains an even lower regret
, in agreement with
Theorems~\ref{th:known}--\ref{th:unknown}.

\section{Conclusion}
We consider human centric risk sensitive measure, CPT for matching markets and analyze learning algorithms. Exploring other human aligned setups in a fair matching market is kept as future endeavor.

\paragraph{Use of Generative AI:}
AI tools were used only to improve the clarity and presentation of the paper, including shortening, rephrasing, and editing text. They were not used for research ideas, theoretical developments, or conclusions.


\printbibliography

@inproceedings{kong2023player,
  title={Player-optimal stable regret for bandit learning in matching markets},
  author={Kong, Fang and Li, Shuai},
  booktitle={Proceedings of the 2023 Annual ACM-SIAM Symposium on Discrete Algorithms (SODA)},
  pages={1512--1522},
  year={2023},
  organization={SIAM}
}

@article{kolla2016bandit,
  title={Bandit algorithms to emulate human decision making using probabilistic distortions},
  author={Kolla, Ravi Kumar and Gopalan, Aditya and Jagannathan, Krishna and Fu, Michael and Marcus, Steve and others},
  journal={arXiv preprint arXiv:1611.10283},
  year={2016}
}

@inproceedings{
wubandit,
title={Bandit Learning in Matching Markets Robust to Adversarial Corruptions},
author={Zheshun Wu and Jinhang Zuo and Zenglin Xu and Fang Kong},
booktitle={The Fourteenth International Conference on Learning Representations},
year={2026},
url={https://openreview.net/forum?id=CoWJc5ofDO}
}

@article{kong2024improved,
  title={Improved analysis for bandit learning in matching markets},
  author={Kong, Fang and Wang, Zilong and Li, Shuai},
  journal={Advances in Neural Information Processing Systems},
  volume={37},
  pages={91904--91929},
  year={2024}
}

@InProceedings{pmlr-v48-la16,
  title = 	 {Cumulative Prospect Theory Meets Reinforcement Learning: Prediction and Control},
  author = 	 {L.A., Prashanth and Jie, Cheng and Fu, Michael and Marcus, Steve and Szepesvari, Csaba},
  booktitle = 	 {Proceedings of The 33rd International Conference on Machine Learning},
  pages = 	 {1406--1415},
  year = 	 {2016},
  editor = 	 {Balcan, Maria Florina and Weinberger, Kilian Q.},
  volume = 	 {48},
  series = 	 {Proceedings of Machine Learning Research},
  address = 	 {New York, New York, USA},
  month = 	 {20--22 Jun},
  publisher =    {PMLR},
  url = 	 {https://proceedings.mlr.press/v48/la16.html}
}

@article{liu2021bandit,
  title={Bandit learning in decentralized matching markets},
  author={Liu, Lydia T and Ruan, Feng and Mania, Horia and Jordan, Michael I},
  journal={Journal of Machine Learning Research},
  volume={22},
  number={211},
  pages={1--34},
  year={2021}
}

@inproceedings{liu2020competing,
  title={Competing bandits in matching markets},
  author={Liu, Lydia T and Mania, Horia and Jordan, Michael},
  booktitle={International Conference on Artificial Intelligence and Statistics},
  pages={1618--1628},
  year={2020},
  organization={PMLR}
}

@InProceedings{pmlr-v130-sankararaman21a,
  title = 	 { Dominate or Delete: Decentralized Competing Bandits in Serial Dictatorship },
  author =       {Sankararaman, Abishek and Basu, Soumya and Abinav Sankararaman, Karthik},
  booktitle = 	 {Proceedings of The 24th International Conference on Artificial Intelligence and Statistics},
  pages = 	 {1252--1260},
  year = 	 {2021},
  editor = 	 {Banerjee, Arindam and Fukumizu, Kenji},
  volume = 	 {130},
  series = 	 {Proceedings of Machine Learning Research},
  month = 	 {13--15 Apr},
  publisher =    {PMLR},
  url = 	 {https://proceedings.mlr.press/v130/sankararaman21a.html}
}

@inproceedings{basu2021beyond,
  title={Beyond $\log^2(T)$ regret for decentralized bandits in matching markets},
  author={Basu, Soumya and Sankararaman, Karthik Abinav and Sankararaman, Abishek},
  booktitle={International Conference on Machine Learning},
  pages={705--715},
  year={2021},
  organization={PMLR}
}

@inproceedings{pagare2024explore,
  title={Explore-then-commit algorithms for decentralized two-sided matching markets},
  author={Pagare, Tejas and Ghosh, Avishek},
  booktitle={2024 IEEE International Symposium on Information Theory (ISIT)},
  pages={2092--2097},
  year={2024},
  organization={IEEE}
}

@ARTICLE{ghosh_nonstationary,
  author={Ghosh, Avishek and Sankararaman, Abishek and Ramchandran, Kannan and Javidi, Tara and Mazumdar, Arya},
  journal={IEEE Transactions on Information Theory}, 
  title={Competing Bandits in Non-Stationary Matching Markets}, 
  year={2024},
  volume={70},
  number={4},
  pages={2831-2850},
  doi={10.1109/TIT.2024.3352228}
  }

@article{meena_market,
author = {Jagadeesan, Meena and Wei, Alexander and Wang, Yixin and Jordan, Michael I. and Steinhardt, Jacob},
title = {Learning Equilibria in Matching Markets with Bandit Feedback},
year = {2023},
issue_date = {June 2023},
publisher = {Association for Computing Machinery},
address = {New York, NY, USA},
volume = {70},
number = {3},
issn = {0004-5411},
url = {https://doi.org/10.1145/3583681},
doi = {10.1145/3583681},
journal = {J. ACM},
month = may,
articleno = {19},
numpages = {46}
}

@article{Rockafellar2000OptimizationOC,
  title={Optimization of conditional value-at risk},
  author={R. Tyrrell Rockafellar and Stanislav Uryasev},
  journal={Journal of Risk},
  year={2000},
  volume={3},
  pages={21-41},
  url={https://api.semanticscholar.org/CorpusID:854622}
}

@book{Kahneman2011-KAHTFA-2,
	address = {New York: New York},
	author = {Daniel Kahneman},
	editor = {},
	publisher = {Farrar, Straus and Giroux},
	title = {Thinking, Fast and Slow},
	year = {2011}
}

@article{e5a1bb8f-41b7-35c6-95cd-8b366d3e99bc,
 ISSN = {00221082, 15406261},
 URL = {http://www.jstor.org/stable/2975974},
 author = {Harry Markowitz},
 journal = {The Journal of Finance},
 number = {1},
 pages = {77--91},
 publisher = {[American Finance Association, Wiley]},
 title = {Portfolio Selection},
 urldate = {2026-01-25},
 volume = {7},
 year = {1952}
}

@article{https://doi.org/10.1111/1467-9965.00068,
author = {Artzner, Philippe and Delbaen, Freddy and Eber, Jean-Marc and Heath, David},
title = {Coherent Measures of Risk},
journal = {Mathematical Finance},
volume = {9},
number = {3},
pages = {203-228},
doi = {https://doi.org/10.1111/1467-9965.00068},
url = {https://onlinelibrary.wiley.com/doi/abs/10.1111/1467-9965.00068},
eprint = {https://onlinelibrary.wiley.com/doi/pdf/10.1111/1467-9965.00068},
year = {1999}
}

@article{ACERBI20021505,
title = {Spectral measures of risk: A coherent representation of subjective risk aversion},
journal = {Journal of Banking \& Finance},
volume = {26},
number = {7},
pages = {1505-1518},
year = {2002},
issn = {0378-4266},
doi = {https://doi.org/10.1016/S0378-4266(02)00281-9},
url = {https://www.sciencedirect.com/science/article/pii/S0378426602002819},
author = {Carlo Acerbi}
}

@misc{tan2022surveyriskawaremultiarmedbandits,
      title={A Survey of Risk-Aware Multi-Armed Bandits}, 
      author={Vincent Y. F. Tan and Prashanth L. A. and Krishna Jagannathan},
      year={2022},
      eprint={2205.05843},
      archivePrefix={arXiv},
      primaryClass={stat.ML},
      url={https://arxiv.org/abs/2205.05843}, 
}

@article{Hu2018UtilitybasedSR,
  title={Utility‐based shortfall risk: Efficient computations via Monte Carlo},
  author={Zhaolin Hu and Dali Zhang},
  journal={Naval Research Logistics (NRL)},
  year={2018},
  volume={65},
  pages={378 - 392},
  url={https://api.semanticscholar.org/CorpusID:126005263}
}

@article{tversky1992advances,
  title={Advances in prospect theory: Cumulative representation of uncertainty},
  author={Tversky, Amos and Kahneman, Daniel},
  journal={Journal of Risk and uncertainty},
  volume={5},
  number={4},
  pages={297--323},
  year={1992},
  publisher={Springer}
}

@article{DBLP:journals/corr/GopalanAFM16,
  author       = {Aditya Gopalan and
                  Prashanth L. A. and
                  Michael C. Fu and
                  Steven I. Marcus},
  title        = {Weighted bandits or: How bandits learn distorted values that are not
                  expected},
  journal      = {CoRR},
  volume       = {abs/1611.10283},
  year         = {2016},
  url          = {http://arxiv.org/abs/1611.10283},
  eprinttype    = {arXiv},
  eprint       = {1611.10283},
  bibsource    = {dblp computer science bibliography, https://dblp.org}
}

@InProceedings{pmlr-v258-tatli25a,
  title = 	 {Risk-sensitive Bandits: Arm Mixture Optimality and Regret-efficient Algorithms},
  author =       {Tatl{\i}, Meltem and Mukherjee, Arpan and A., Prashanth L. and Shanmugam, Karthikeyan and Tajer, Ali},
  booktitle = 	 {Proceedings of The 28th International Conference on Artificial Intelligence and Statistics},
  pages = 	 {3871--3879},
  year = 	 {2025},
  editor = 	 {Li, Yingzhen and Mandt, Stephan and Agrawal, Shipra and Khan, Emtiyaz},
  volume = 	 {258},
  series = 	 {Proceedings of Machine Learning Research},
  month = 	 {03--05 May},
  publisher =    {PMLR},
  url = 	 {https://proceedings.mlr.press/v258/tatli25a.html}
}

@inproceedings{zhang_click,
author = {Zhang, Linfeng and Guan, Yong},
title = {Detecting Click Fraud in Pay-Per-Click Streams of Online Advertising Networks},
year = {2008},
isbn = {9780769531724},
publisher = {IEEE Computer Society},
address = {USA},
url = {https://doi.org/10.1109/ICDCS.2008.98},
doi = {10.1109/ICDCS.2008.98},
booktitle = {Proceedings of the 2008 The 28th International Conference on Distributed Computing Systems},
pages = {77–84},
numpages = {8},
series = {ICDCS '08}
}

@article{oentaryo_click,
author = {Oentaryo, Richard and Lim, Ee-Peng and Finegold, Michael and Lo, David and Zhu, Feida and Phua, Clifton and Cheu, Eng-Yeow and Yap, Ghim-Eng and Sim, Kelvin and Nguyen, Minh Nhut and Perera, Kasun and Neupane, Bijay and Faisal, Mustafa and Aung, Zeyar and Woon, Wei Lee and Chen, Wei and Patel, Dhaval and Berrar, Daniel},
title = {Detecting click fraud in online advertising: a data mining approach},
year = {2014},
issue_date = {January 2014},
publisher = {JMLR.org},
volume = {15},
number = {1},
issn = {1532-4435},
journal = {J. Mach. Learn. Res.},
month = jan,
pages = {99–140},
numpages = {42}
}

@article{arora2012online,
  title={Online bandit learning against an adaptive adversary: from regret to policy regret},
  author={Arora, Raman and Dekel, Ofer and Tewari, Ambuj},
  journal={arXiv preprint arXiv:1206.6400},
  year={2012}
}

@inproceedings{yang_adversarial,
 author = {yang, lin and Hajiesmaili, Mohammad and Talebi, Mohammad Sadegh and Lui, John C. S. and Wong, Wing Shing},
 booktitle = {Advances in Neural Information Processing Systems},
 editor = {H. Larochelle and M. Ranzato and R. Hadsell and M.F. Balcan and H. Lin},
 pages = {19943--19952},
 publisher = {Curran Associates, Inc.},
 title = {Adversarial Bandits with Corruptions: Regret Lower Bound and No-regret Algorithm},
 volume = {33},
 year = {2020}
}

@inproceedings{niss2020you,
  title={What You See May Not Be What You Get: UCB Bandit Algorithms Robust to epsilon Contamination},
  author={Niss, Laura and Tewari, Ambuj},
  booktitle={Conference on Uncertainty in Artificial Intelligence},
  pages={450--459},
  year={2020},
  organization={PMLR}
}

@InProceedings{fitness_sankararaman22a,
  title = 	 {{FITNESS}: ({F}ine Tune on New and Similar Samples) to detect anomalies in streams with drift and outliers},
  author =       {Sankararaman, Abishek and Narayanaswamy, Balakrishnan and Singh, Vikramank Y and Song, Zhao},
  booktitle = 	 {Proceedings of the 39th International Conference on Machine Learning},
  pages = 	 {19153--19177},
  year = 	 {2022},
  editor = 	 {Chaudhuri, Kamalika and Jegelka, Stefanie and Song, Le and Szepesvari, Csaba and Niu, Gang and Sabato, Sivan},
  volume = 	 {162},
  series = 	 {Proceedings of Machine Learning Research},
  month = 	 {17--23 Jul},
  publisher =    {PMLR},
  url = 	 {https://proceedings.mlr.press/v162/sankararaman22a.html}
}

@inproceedings{robust_abishek,
author = {Sankararaman, Abishek and Narayanaswamy, Balakrishnan (Murali)},
title = {Online robust non-stationary estimation},
year = {2023},
publisher = {Curran Associates Inc.},
address = {Red Hook, NY, USA},
booktitle = {Proceedings of the 37th International Conference on Neural Information Processing Systems},
articleno = {2197},
numpages = {39},
location = {New Orleans, LA, USA},
series = {NIPS '23}
}

@article{gale1962college,
  title={College admissions and the stability of marriage},
  author={Gale, David and Shapley, Lloyd S},
  journal={The American mathematical monthly},
  volume={69},
  number={1},
  pages={9--15},
  year={1962},
  publisher={Taylor \& Francis}
}

@article{LAI19854,
title = {Asymptotically efficient adaptive allocation rules},
journal = {Advances in Applied Mathematics},
volume = {6},
number = {1},
pages = {4-22},
year = {1985},
issn = {0196-8858},
doi = {https://doi.org/10.1016/0196-8858(85)90002-8},
url = {https://www.sciencedirect.com/science/article/pii/0196885885900028},
author = {T.L Lai and Herbert Robbins}
}

\appendix
\clearpage
\section{Additional Related Work}
\emph{Markets and Bandits:} A rich line of work studies bandit learning in two-sided matching markets in recent years~\cite{liu2020competing,liu2021bandit,kong2023player,kong2024improved,basu2021beyond,pmlr-v130-sankararaman21a,ghosh_nonstationary,meena_market,pagare2024explore} using various assumptions like serial dictatorship, $\alpha$-condition etc. All of these works adopt expected reward as the performance criterion and none incorporate behavioral risk measures such as cumulative prospect theory (CPT).

\emph{Risk Sensitive ML:} Risk-sensitive objectives in bandits have been studied through mean-
variance~\cite{e5a1bb8f-41b7-35c6-95cd-8b366d3e99bc}, 
spectral risk measures~\cite{ACERBI20021505}, and utility-based shortfall risk~\cite{Hu2018UtilitybasedSR}. CPT-based regret for single-player bandits was introduced in~\cite{DBLP:journals/corr/GopalanAFM16,kolla2016bandit}, while~\cite{pmlr-v258-tatli25a}
studied arm mixture optimality under risk-sensitive objectives.

\emph{Adversarial Learning:} On the adversarial side, corrupted bandits have been widely studied in
single-player settings~\cite{arora2012online,yang_adversarial}, and~\cite{wubandit} recently extended corruption
robustness to matching markets. Our work bridges these two directions by jointly addressing CPT-based risk preferences and adversarial
corruption in competitive matching markets.

\section{Details of Algorithmic procedures}
\label{app:phase_1_3}

In this section, we provide the details of first and third phase mentioned in Algorithm \ref{alg:cpt_etgs}.
\subsection{Phase~1}
In the first phase, each player obtains a unique index that determines the
order in which players explore arms during the exploration phase. All players
initially propose to the first arm $a_1$. Since each arm accepts only the
player it prefers most, the accepted player receives index $1$. In subsequent
rounds, the players who already were assigned an index propose to arm $a_2$, and the remaining players continue proposing to $a_1$,  the accepted
player for arm $a_1$ receives the next available index. After $N$ rounds, every player
obtains a distinct index.
\subsection{Phase~3}
In the final phase, players execute a decentralized Gale-Shapley procedure
using their estimated preference rankings obtained in Phase~2. Each player sequentially proposes
to arms according to its estimated ranking until no rejection occurs. When
the rankings of the top $N$ arms are estimated correctly, this process
converges to the player-optimal stable matching.

\section{Proof of Theorem \ref{th:cpt_etgs}}
\label{app:proof_cpt_etgs}
\begin{proof}
Let $\hat{\mu}_{i,j}(t)$ denote the estimated preference value and $T_{i,j}(t)$ the number of observations of arm $a_j$ for player $p_i$ at the end of round $t$, and  $\ell_{\max}$ the largest sub-phase index in Phase~2.

Define the bad event $\mathcal{F}$ as follows:
\[
\mathcal{F}
=
\Big\{
\exists\, t \in [T],\;
i \in [N],\;
j \in [K]
:
\big|
\hat{\mu}_{i,j}(t) - \mu_{i,j}
\big|
>
LM\!\left(
\frac{3\log T}{2T_{i,j}(t)}
\right)^{\alpha/2}
\Big\}.
\]
The player optimal stable CPT-regret of $p_i$ under Algorithm~\ref{alg:cpt_etgs} satisfies
\begin{align*}
\text{CPT-}\mathrm{Reg}_i(T)
&=
\sum_{j \in [K],j\neq m_i^*}
\mathbb{E}\!\left[T_{i,j}(T)\right]\Delta_{i,m_i^*,j}
\nonumber\\
&\le
N\Delta_{i,\max}
+
\mathbb{E}\!\left[
\sum_{t=N+1}^{T}
\mathbf{1}\{\bar{A}_i(t)\neq m_i^*\}
\,\middle|\, \neg\mathcal{F}
\right]\Delta_{i,\max}
+
\mathbb{P}(\mathcal{F})\,T\,\Delta_{i,\max}.
\\
&\le
N\Delta_{i,\max}
+
\mathbb{E}\!\left[
\sum_{\ell=1}^{\ell_{\max}} (2^\ell+1)
+ N^2
\,\middle|\, \neg\mathcal{F}
\right]\Delta_{i,\max}
+
2NK\,\Delta_{i,\max}.
\\
&\le
N\Delta_{i,\max}
+
\Bigg(
\frac{3K\log T}{\Delta^{2/\alpha}}(4LM)^{2/\alpha}
+
\log\!\left(
\frac{3K\log T}{\Delta^{2/\alpha}}(4LM)^{2/\alpha}
\right)
\Bigg)\Delta_{i,\max}
\nonumber\\
&\quad
+ N^2\Delta_{i,\max}
+ 2NK\Delta_{i,\max}.
\end{align*}
The last inequality follows from the bound on $\ell_{\max}$ in Lemma~\ref{lemma:cpt-phase2}. 
The first, second, and third terms correspond to the regret incurred in the first, second, and third phases respectively, while the fourth term captures the regret resulting from the occurrence of the bad event $\mathcal{F}$.
\end{proof}
\begin{lemma}[{\cite[Theorem~1]{kolla2016bandit}}]
\label{lemma:prob_bound}
(Sample complexity of estimating weight-distorted reward).
Assume Assumption~\ref{ass:bounded} and Assumption~\ref{ass:holder}. Then for any $\varepsilon > 0$ and any
$j \in \{1,\dots,K\}$, we have
$\mathbb{P}\!\left(
\left| \hat{\mu}_{i,j,m} - \mu_{i,j} \right| > \varepsilon
\right)
\le
2 \exp\!\left(
-2m \left(\frac{\varepsilon}{LM}\right)^{2/\alpha}
\right)$, where m is the number of samples of arm j observed by player i.
\end{lemma}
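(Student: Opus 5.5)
The plan is the standard plug-in argument for CPT estimators: write $\hat{\mu}_{i,j,m}$ as the CPT functional \eqref{eq:cpt} evaluated at the empirical distribution, use Assumption~\ref{ass:holder} to turn the error into a power of the sup-distance between empirical and true CDFs, and then apply the Dvoretzky--Kiefer--Wolfowitz (DKW) inequality.

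\textbf{Step 1 (plug-in identity).} Let $\hat F_m$ be the empirical CDF of the $m$ samples $X_{i,j,1},\dots,X_{i,j,m}$ and $F$ the CDF of $Y_{i,j}$. I will show that the output of Algorithm~\ref{alg:distorted_estimator} satisfies
\[
\hat{\mu}^{+}_{i,j} = \int_0^\infty w\big(1-\hat F_m(z)\big)\,dz ,
\qquad
\hat{\mu}^{-}_{i,j} = \int_0^\infty w\big(\hat F_m(-z^-)\big)\,dz .
\]
For the gain part, $z\mapsto w(1-\hat F_m(z))$ is a step function. On each interval $[X_{(k-1)}\vee 0,\,X_{(k)})$ with $k>l_b$ it equals $w\big(\tfrac{m+1-k}{m}\big)$. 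Integrating and applying Abel summation (using $w(0)=0$) gives $\sum_{k>l_b} X_{(k)}\big[w(\tfrac{m+1-k}{m})-w(\tfrac{m-k}{m})\big]$. The loss part is handled symmetrically.

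\textbf{Step 2 (H\"older reduction).} By Assumption~\ref{ass:bounded}, $Y_{i,j}$ and every sample lie in $[-M,M]$, so both integrands vanish for $z>M$. Assumption~\ref{ass:holder} then gives
\[
|\hat\mu^{+}_{i,j}-\mu^{+}_{i,j}| \le \int_0^M L\,\big|\hat F_m(z)-F(z)\big|^{\alpha}\,dz \le LM\,\|\hat F_m - F\|_\infty^{\alpha},
\]
and the same bound holds for the loss component. The endpoint and left-limit issues at atoms are harmless, because DKW also controls $\sup_z|\hat F_m(z^-)-F(z^-)|$.

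\textbf{Step 3 (DKW).} Step 2 gives the inclusion
\[
\{|\hat\mu_{i,j,m}-\mu_{i,j}|>\varepsilon\}\subseteq\{\|\hat F_m-F\|_\infty>(\varepsilon/LM)^{1/\alpha}\}.
\]
The DKW inequality with Massart's constant, $\mathbb{P}(\|\hat F_m-F\|_\infty>s)\le 2e^{-2ms^2}$, then yields exactly $2\exp\!\big(-2m(\varepsilon/LM)^{2/\alpha}\big)$.

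\textbf{Main obstacle: the constant in front.} The gain and loss errors add, which naively gives $2LM\|\hat F_m-F\|_\infty^\alpha$ rather than $LM\|\hat F_m-F\|_\infty^\alpha$. I would try the following routes, in order:
\begin{itemize}[label={}]
\item First, observe that if rewards are of one sign (as in \cite{kolla2016bandit}, and in the nonnegative-reward setting of Assumption~\ref{ass:positive_rewards}), one of the two integrals is identically zero, and the stated constant $LM$ follows directly.
\item Second, in general, note that the gain integral runs over $[0,M_+]$ and the loss integral over $[0,M_-]$, where $M_\pm$ bound the positive and negative parts of $Y_{i,j}$. The total integration length is then $M_++M_-$, which is the range of the reward. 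Reading $M$ in Lemma~\ref{lemma:prob_bound} as this range (the convention of \cite{kolla2016bandit}) recovers $LM$.
\item Failing both, the general case holds with $2LM$ in place of $LM$. This only changes the absolute constants in Theorem~\ref{th:cpt_etgs} and the later results.
\end{itemize}
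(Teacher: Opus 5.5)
Your argument is correct and is the one behind the cited Theorem~1 of Kolla et al., which the paper invokes without proof, mentioning only DKW in the introduction: you use the plug-in representation of the order-statistic estimator, the H\"older reduction to $LM\|\hat F_m-F\|_\infty^{\alpha}$, and DKW with Massart's constant. Your concern about the constant is substantive rather than cosmetic: the cited result assumes rewards supported in $[0,M]$, and under Assumption~\ref{ass:bounded} with mixed signs the stated bound fails (take $w(p)=p$, $L=\alpha=1$, and $Y=\pm M$ equiprobable, so $\hat\mu_{i,j,m}$ is the sample mean with Cram\'er rate $I(\varepsilon/M)\le(\varepsilon/M)^2\log 2<2(\varepsilon/M)^2$), so your first route (one-signed rewards, as in Assumption~\ref{ass:positive_rewards}) or the $2LM$ correction is the right way to state it.
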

\begin{lemma} 
\begin{align*}
    \mathbb{P}(\mathcal{F}) \le \frac{2NK}{T}
\end{align*}

\end{lemma}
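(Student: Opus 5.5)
The plan is a union bound over players, arms, and sample counts, combined with the concentration inequality of Lemma~\ref{lemma:prob_bound}. The estimate $\hat{\mu}_{i,j}(t)$ depends on $t$ only through the $T_{i,j}(t)$ observations collected so far. We therefore recast $\mathcal{F}$ in terms of the estimator $\hat{\mu}_{i,j,m}$ built from the first $m$ samples of arm $a_j$ by player $p_i$. On $\mathcal{F}$ there exist $i\in[N]$, $j\in[K]$ and $m\in[T]$ (namely $m=T_{i,j}(t)\ge 1$) with
\[
\left|\hat{\mu}_{i,j,m}-\mu_{i,j}\right|>\varepsilon_m:=LM\left(\frac{3\log T}{2m}\right)^{\alpha/2}.
\]
The case $T_{i,j}(t)=0$ is excluded, since the confidence width is then infinite by convention.

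A subtlety is that the times at which samples arrive are random and depend on the algorithm's past. Lemma~\ref{lemma:prob_bound}, however, is stated for $m$ i.i.d.\ samples. To justify applying it, we use the standard stack-of-rewards (skeleton) argument: for each pair $(i,j)$, pre-draw an i.i.d.\ sequence $X_{i,j,1},X_{i,j,2},\dots\sim\mathcal{F}_{i,j}$, and let the $k$-th observation of arm $a_j$ by $p_i$ be $X_{i,j,k}$. This coupling leaves the law of the interaction unchanged, and $\hat{\mu}_{i,j,m}$ becomes a deterministic function of $m$ genuinely i.i.d.\ variables. We then have
\[
\mathbb{P}(\mathcal{F})\le\sum_{i=1}^N\sum_{j=1}^K\sum_{m=1}^T\mathbb{P}\left(\left|\hat{\mu}_{i,j,m}-\mu_{i,j}\right|>\varepsilon_m\right).
\]

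Next we plug $\varepsilon_m$ into Lemma~\ref{lemma:prob_bound}. Since $(\varepsilon_m/LM)^{2/\alpha}=\frac{3\log T}{2m}$, each term is at most
\[
2\exp\left(-2m\cdot\frac{3\log T}{2m}\right)=2T^{-3}.
\]
Summing over $m\in[T]$ gives at most $2T^{-2}$ per pair, and summing over the $NK$ pairs gives
\[
\mathbb{P}(\mathcal{F})\le\frac{2NK}{T^2}\le\frac{2NK}{T},
\]
which is the claim with room to spare.

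The only step needing care is the reduction from adaptively collected samples to a fixed-$m$ i.i.d.\ estimator, which the skeleton coupling handles. The rest is direct substitution: the exponent $\alpha/2$ in the confidence width is exactly what cancels the exponent $2/\alpha$ in the DKW-based bound of Lemma~\ref{lemma:prob_bound}.
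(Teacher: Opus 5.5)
Your proof is correct and follows the paper's route: a union bound combined with Lemma~\ref{lemma:prob_bound} at width $LM\left(\frac{3\log T}{2m}\right)^{\alpha/2}$, which gives $2T^{-3}$ per term. The differences are minor. The paper also unions over $t\in[T]$ and $s\in[t]$, obtaining $\frac{2NK}{T^3}\cdot\frac{T(T+1)}{2}\le\frac{2NK}{T}$, whereas you note the estimate depends on $t$ only through the sample count and get the sharper $2NK/T^2$; and your stack-of-rewards coupling makes explicit why a fixed-$m$ i.i.d.\ bound can be applied at the random count $T_{i,j}(t)$, a step the paper uses without comment.
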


\begin{proof}
Fix $t \in [T]$, $i \in [N]$, and $j \in [K]$.
Applying Lemma~\ref{lemma:prob_bound} with
$m = T_{i,j}(t)$ and
$
\varepsilon
=
LM\left(
\frac{3\log T}{2T_{i,j}(t)}
\right)^{\alpha/2},
$
we obtain
\[
\mathbb{P}\!\left(
\left|
\hat{\mu}_{i,j}(t) - \mu_{i,j}
\right|
>
LM\left(
\frac{3\log T}{2T_{i,j}(t)}
\right)^{\alpha/2}
\right)
\le
2 \exp\!\left(
-2T_{i,j}(t)
\left(
\frac{
LM\left(\frac{3\log T}{2T_{i,j}(t)}\right)^{\alpha/2}
}{LM}
\right)^{2/\alpha}
\right).
\]

\[
\mathbb{P}\!\left(
\left|
\hat{\mu}_{i,j}(t) - \mu_{i,j}
\right|
>
LM\left(
\frac{3\log T}{2T_{i,j}(t)}
\right)^{\alpha/2}
\right)
\le
2 e^{-3\log T}
=
\frac{2}{T^3}.
\]

 \begin{align*}
  \mathbb{P}(\mathcal{F})
  &\le
  \sum_{t=1}^{T}
  \sum_{i=1}^{N}
  \sum_{j=1}^{K}
  \sum_{s=1}^{t}
  \mathbb{P}\!\left(
  T_{i,j}(t)=s,\,
  \left|\hat{\mu}_{i,j}(t)-\mu_{i,j}\right|
  >
  LM\left(\frac{3\log T}{2s}\right)^{\alpha/2}
  \right)
  \\
  &\le
  \sum_{t=1}^{T}
  \sum_{i=1}^{N}
  \sum_{j=1}^{K}
  t \cdot \frac{2}{T^3}
  =
  \frac{2NK}{T^3}
  \cdot \frac{T(T+1)}{2}
  \le
  \frac{2NK}{T}.
  \end{align*}
\end{proof}
\begin{lemma}[{\cite[Lemma3]{kong2023player}}]
\label{lemma:offline_gs}
In the offline GS algorithm, at most $N$ arms are proposed to by players before the algorithm stops. 
Thus, the player-optimal stable arm of each player must be among its first $N$-ranked arms. 
Moreover, the GS algorithm reaches player-optimal stability in at most $N^2$ steps.
\end{lemma}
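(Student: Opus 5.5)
The plan is to argue directly about the player-proposing deferred-acceptance (Gale--Shapley) procedure run on the true preferences $\rho_i$ and the arm rankings $\pi_j$. We take the standard fact from~\cite{gale1962college} that its output is the player-optimal stable matching $m^*$. The argument rests on one monotonicity invariant: \emph{once an arm has received a proposal, it holds some player tentatively for the rest of the algorithm.} An arm only releases its current player when a proposer it ranks higher under $\pi_j$ arrives, and it then holds that new player. So the set $S$ of arms that have ever been proposed to only grows, and every arm in $S$ is occupied at all later times, each by a distinct player.

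\textbf{Step 1 (at most $N$ arms are proposed to).} Suppose that at some point $|S| = N$. By the invariant, $N$ distinct arms hold $N$ distinct players, so every player is tentatively matched. No player is then rejected-and-free, and the procedure terminates. A new arm enters $S$ only through a proposal by a free player, so $|S|$ can never exceed $N$.

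\textbf{Step 2 (the stable arm lies in the top $N$).} Each player proposes down its list $\rho_{i,1}, \rho_{i,2}, \dots$ in order and never repeats an arm. Hence the arms it proposes to are distinct elements of $S$, and by Step~1 it proposes to at most $N$ arms. Its final partner, which is $m_i^*$ by the player-optimality of deferred acceptance, is therefore $\rho_{i,k}$ for some $k \le N$. The hypothesis $K \ge N$ guarantees these positions exist.

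\textbf{Step 3 (at most $N^2$ steps).} Take a ``step'' to be one round of simultaneous proposals by all currently free players. Each non-terminal round contains at least one rejection, since otherwise every player is held and the algorithm stops. By Step~2, each player is rejected at most $N-1$ times, so there are at most $N(N-1)$ rejections in total. The number of rounds is therefore at most $N(N-1)+1 \le N^2$. If instead one counts individual proposals, the same count gives at most $N \cdot N$ proposals.

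\textbf{Main obstacle.} The only delicate point is stating the invariant cleanly, including the claim that termination happens exactly when every player is held. The remaining steps are counting arguments. We will also fix explicitly which notion of ``step'' is meant, because the $N^2$ bound holds under either convention above.
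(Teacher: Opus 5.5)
Your proof is correct; the paper gives no proof of its own and defers to Lemma~3 of \cite{kong2023player}, and your route is the standard argument for that lemma: an arm that has been proposed to stays occupied, and the top-$N$ and $N^2$ claims then follow by counting proposals or rejections per player. One minor tightening is that Step~1 follows from your invariant directly by pigeonhole, since at the end of every round the $|S|$ arms in $S$ hold $|S|$ distinct players; this also covers the simultaneous-proposal convention, where several new arms may enter $S$ in a single round.
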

\begin{lemma}
\label{lemma:cpt-phase2}
Conditional on $\neg\mathcal{F}$, Phase~2 proceeds in at most $\ell_{\max}$ sub-phases where
\begin{equation}
\label{eq:phase2-len}
\ell_{\max}
=
\min
\left\{
\ell :
\sum_{r=1}^{\ell} 2^r
\ge
K \cdot
\frac{3\log T}{2}
\left(\frac{4LM}{\Delta}\right)^{2/\alpha}
\right\}.
\end{equation}
This implies
$
\sum_{r=1}^{\ell_{\max}} 2^r
\le
3K\log T
\left(\frac{4LM}{\Delta}\right)^{2/\alpha},
$
and
$
\ell_{\max}
=
\log\!\left(
3K\log T
\left(\frac{4LM}{\Delta}\right)^{2/\alpha}
\right),
$
since the sub-phase length grows exponentially.
All players enter Phase~3 simultaneously at the end of sub-phase $\ell_{\max}$.
\end{lemma}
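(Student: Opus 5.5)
Under the good event $\neg\mathcal{F}$ every confidence interval contains its true value, so the plan is to show that each player identifies its top-$N$ ranking once every arm has about $\frac{3\log T}{2}\left(\frac{4LM}{\Delta}\right)^{2/\alpha}$ samples. Round-robin exploration delivers this sample count to all $K$ arms after roughly $K$ times as many exploration rounds. The steps, in order, are as follows.

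\emph{Step 1: half-widths.} Write $\beta(s)=LM\left(\frac{3\log T}{2s}\right)^{\alpha/2}$ for the half-width of the interval in \eqref{eq:wucb}. On $\neg\mathcal{F}$ we have $\mu_{i,j}\in[\text{W-LCB}_{i,j},\text{W-UCB}_{i,j}]$ for all $i,j,t$. If $\mu_{i,j}-\mu_{i,j'}\ge\Delta$ and both arms have at least $s$ samples with $4\beta(s)\le\Delta$, then
\[
\text{W-LCB}_{i,j}\ge\mu_{i,j}-2\beta(s)\ge\mu_{i,j'}+\Delta-2\beta(s)\ge\text{W-UCB}_{i,j'}+\Delta-4\beta(s)\ge\text{W-UCB}_{i,j'}.
\]
Hence the two intervals are disjoint. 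The condition $4\beta(s)\le\Delta$ solves to $s\ge s^*:=\frac{3\log T}{2}\left(\frac{4LM}{\Delta}\right)^{2/\alpha}$.

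\emph{Step 2: top-$N$ identification.} By definition of $\Delta$, every pair among player $p_i$'s first $N+1$ ranked arms has gap at least $\Delta$. Arms ranked below $N+1$ are dominated by the $(N+1)$-th arm, so they have gap at least $\Delta$ to each of the top $N$. Once every arm has $s^*$ samples, the top $N$ arms are pairwise separated and separated from all other arms. The player therefore records the correct ranking $\sigma_i$, which agrees with $\rho_i$ on the first $N$ positions.

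\emph{Step 3: counting sub-phases.} Round-robin exploration with distinct indices from Phase~1 is collision-free. Each arm is therefore sampled about once per $K$ exploration rounds, so after $\sum_{r\le\ell}2^r$ exploration rounds each arm has at least $\lfloor\sum_{r\le\ell}2^r/K\rfloor$ samples. Thus $\sum_{r\le\ell}2^r\ge Ks^*$ (up to an additive $K$) guarantees identification for every player. The minimal such $\ell$ is $\ell_{\max}$ in \eqref{eq:phase2-len}.

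\emph{Step 4: size and timing bounds.} Minimality gives $\sum_{r=1}^{\ell_{\max}-1}2^r<Ks^*$. Since $\sum_{r\le\ell}2^r=2^{\ell+1}-2$, adding one more term at most doubles the sum and adds $2$, which yields $\sum_{r=1}^{\ell_{\max}}2^r\le 2Ks^*+O(1)=3K\log T\left(\frac{4LM}{\Delta}\right)^{2/\alpha}$ up to lower-order terms. The logarithmic bound on $\ell_{\max}$ follows from $2^{\ell_{\max}}\le\sum_{r\le\ell_{\max}}2^r$.

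For simultaneity, the sufficient condition in Step 3 holds for all players at the same sub-phase. In the monitoring round of sub-phase $\ell_{\max}$, every player proposes to its own index arm. The indices are distinct, so all players are accepted, and everyone observes that all are matched and enters Phase~3 together. Since players that have identified their rankings stay active from then on, no player can keep the process in Phase~2 beyond $\ell_{\max}$.

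The main obstacle is the sampling accounting in Step 3. One must argue that the monitoring rounds and the inactivity of finished players do not reduce the per-arm sample counts below $\lfloor 2^\ell/K\rfloor$ per sub-phase. The floor and rounding losses, about $K\ell$ in total, must also be absorbed into the constant $3$ rather than $3/2$. I would handle this by using the factor-$2$ slack from the doubling, assuming $Ks^*\gtrsim K\log(Ks^*)$, which holds for $T$ large.
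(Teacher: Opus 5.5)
Your proposal is correct and follows the paper's argument. The paper likewise uses collision-free round-robin exploration to reach $\frac{3\log T}{2}\left(\frac{4LM}{\Delta}\right)^{2/\alpha}$ samples per arm by the end of sub-phase $\ell_{\max}$. It then invokes the separation Lemma~\ref{lemma:ti_cpt_etgs}, which is your Steps~1--2, and uses the monitoring round with distinct index arms to move all players into Phase~3 together.

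Your additional care is sound and worth keeping. This covers the dominance argument for arms ranked below $N+1$, the additive $+2$ in $\sum_{r\le\ell_{\max}}2^r<2Ks^*+2$, and the rounding losses in per-arm counts. The paper's stated bounds (and the ``$=$'' for $\ell_{\max}$, which should be ``$\le$'') silently drop these lower-order terms.
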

\begin{proof}
We know that in the exploration part of Phase~2 all players propose to arms in a round robin order due to which there are no collisions. Consequently, by the end of sub-phase $\ell_{\max}$ defined in~\eqref{eq:phase2-len}, the number of samples satisfies
\[
T_{i,j}
\ge
\frac{3\log T}{2}
\left(\frac{4LM}{\Delta}\right)^{2/\alpha}
\qquad
\forall i \in [N],\; j \in [K].
\]

According to Lemma~\ref{lemma:ti_cpt_etgs}, once this sampling threshold is reached, player $p_i$ is able to determine a permutation $\sigma_i$ such that
\[
W\text{-LCB}_{i,\sigma_{i,j}}(t)
>
W\text{-UCB}_{i,\sigma_{i,j+1}}(t)
\quad \forall j \in [N],
\]
and
\[
W\text{-LCB}_{i,\sigma_{i,N}}(t)
>
W\text{-UCB}_{i,\sigma_{i,j}}(t)
\quad \forall j = N+1,\dots,K.
\]

During the monitoring round of sub-phase $\ell_{\max}$, each player proposes to the arm corresponding to its assigned index and observes that $|O_{\ell_{\max}}| = N$. 
This confirms that all players are accepted, after which they proceed simultaneously to Phase~3.
\end{proof}
\begin{lemma}
Conditional on $\neg\mathcal{F}$,
$W\text{-UCB}_{i,j}(t) < W\text{-LCB}_{i,j'}(t)$
 implies  
$\mu_{i,j} < \mu_{i,j'} .$
\end{lemma}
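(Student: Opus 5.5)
The argument is a direct sandwich using the confidence widths. Write $\beta_{i,j}(t) = LM\left(\frac{3\log T}{2T_{i,j}(t)}\right)^{\alpha/2}$, so that by \eqref{eq:wucb} we have $\text{W-UCB}_{i,j}(t)=\hat{\mu}_{i,j}(t)+\beta_{i,j}(t)$ and $\text{W-LCB}_{i,j}(t)=\hat{\mu}_{i,j}(t)-\beta_{i,j}(t)$.

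On the complement $\neg\mathcal{F}$, the definition of $\mathcal{F}$ gives, for every $t\in[T]$, $i\in[N]$, $j\in[K]$ with $T_{i,j}(t)\ge 1$,
\[
|\hat{\mu}_{i,j}(t)-\mu_{i,j}|\le \beta_{i,j}(t).
\]
This is equivalent to
\[
\text{W-LCB}_{i,j}(t)\le \mu_{i,j}\le \text{W-UCB}_{i,j}(t).
\]
So each true CPT value lies in its confidence interval.

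Next, suppose $\text{W-UCB}_{i,j}(t)<\text{W-LCB}_{i,j'}(t)$. Chaining the inequalities gives
\[
\mu_{i,j}\le \text{W-UCB}_{i,j}(t)<\text{W-LCB}_{i,j'}(t)\le \mu_{i,j'},
\]
hence $\mu_{i,j}<\mu_{i,j'}$.

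The only care needed is the degenerate case $T_{i,j}(t)=0$ or $T_{i,j'}(t)=0$, where the event $\mathcal{F}$ is not defined through a finite width. By convention $\text{W-UCB}=+\infty$ and $\text{W-LCB}=-\infty$ for unsampled arms. If $T_{i,j}(t)=0$, then $\text{W-UCB}_{i,j}(t)=+\infty$, and the hypothesis $\text{W-UCB}_{i,j}(t)<\text{W-LCB}_{i,j'}(t)$ cannot hold. If $T_{i,j'}(t)=0$, then $\text{W-LCB}_{i,j'}(t)=-\infty$, and again the hypothesis fails. So the implication holds vacuously in these cases, and the hypothesis forces both arms to have been sampled.

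There is no real obstacle here. The only point to check is that $\mathcal{F}$ quantifies over all $t$, $i$, $j$ simultaneously, so the interval containment holds at the same round $t$ for both arms $j$ and $j'$. The same containment also yields the symmetric statement used in Lemma~\ref{lemma:cpt-phase2} for ordering the top $N$ arms.
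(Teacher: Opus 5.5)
Your proof is correct and matches the paper's argument: on $\neg\mathcal{F}$ each true CPT value lies in $[\text{W-LCB}_{i,j}(t),\text{W-UCB}_{i,j}(t)]$, and chaining gives $\mu_{i,j}\le \text{W-UCB}_{i,j}(t)<\text{W-LCB}_{i,j'}(t)\le\mu_{i,j'}$. Your treatment of the unsampled case $T_{i,j}(t)=0$ through the $\pm\infty$ convention is a small addition that the paper leaves implicit.
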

\begin{proof}
Following the definition of $W$-LCB and $W$-UCB in eq~\ref{eq:wucb} and the condition that $\neg\mathcal{F}$ holds, we have
\begin{align*}                                                                                                                        
  W\text{-LCB}_{i,j}(t)
  &=                                       
  \hat{\mu}_{i,j}(t)
  -
  LM\!\left(
  \frac{3\log T}{2T_{i,j}(t)}
  \right)^{\alpha/2}
  \le
  \mu_{i,j}, \\
  \mu_{i,j}
  &\le
  \hat{\mu}_{i,j}(t)
  +
  LM\!\left(
  \frac{3\log T}{2T_{i,j}(t)}
  \right)^{\alpha/2}
  =
  W\text{-UCB}_{i,j}(t).
  \end{align*}

then
\[
\mu_{i,j}
\le
W\text{-UCB}_{i,j}(t)
<
W\text{-LCB}_{i,j'}(t)
\le
\mu_{i,j'},
\]
which completes the proof.
\end{proof}
\begin{lemma}
\label{lemma:ti_cpt_etgs}
In round $t$, let
$
T_i(t)=\min_{j\in[K]} T_{i,j}(t),
\bar T_i
=
\frac{3\log T}{2}
\left(\frac{4LM}{\Delta}\right)^{2/\alpha}.
$
Conditional on $\neg\mathcal{F}$, if $T_i(t)>\bar T_i$, then
$
W\text{-LCB}_{i,\rho_{i,j}}(t)
>
W\text{-UCB}_{i,\rho_{i,j+1}}(t)
\quad \forall j\in[N],
$
and
$
W\text{-LCB}_{i,\rho_{i,N}}(t)
>
W\text{-UCB}_{i,\rho_{i,j}}(t)
\quad \forall j=N+1,\dots,K.
$
\end{lemma}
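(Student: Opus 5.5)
Conditional on $\neg\mathcal{F}$, every confidence interval contains the true value, so each estimate is within the confidence radius of its mean. The plan is to show that once every arm has more than $\bar T_i$ samples, every confidence radius is at most $\Delta/4$. Then consecutive arms among the top $N+1$ in $\rho_i$, whose true gap is at least $\Delta$, must have disjoint intervals in the correct order.

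\textbf{Step 1 (radius bound).} Let $\beta_{i,j}(t)=LM\left(\frac{3\log T}{2T_{i,j}(t)}\right)^{\alpha/2}$ denote the radius. Since $T_{i,j}(t)\ge T_i(t)>\bar T_i=\frac{3\log T}{2}\left(\frac{4LM}{\Delta}\right)^{2/\alpha}$, we have
\[
\frac{3\log T}{2T_{i,j}(t)}<\left(\frac{\Delta}{4LM}\right)^{2/\alpha}.
\]
Raising both sides to the power $\alpha/2>0$ preserves the inequality, so $\beta_{i,j}(t)<\Delta/4$ for every $j$.

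\textbf{Step 2 (separation of consecutive arms).} On $\neg\mathcal{F}$ we have $\hat\mu_{i,j}\in[\mu_{i,j}-\beta_{i,j},\,\mu_{i,j}+\beta_{i,j}]$. This gives
\[
W\text{-LCB}_{i,\rho_{i,j}}\ge \mu_{i,\rho_{i,j}}-2\beta_{i,\rho_{i,j}}>\mu_{i,\rho_{i,j}}-\Delta/2,
\]
and likewise $W\text{-UCB}_{i,\rho_{i,j+1}}<\mu_{i,\rho_{i,j+1}}+\Delta/2$. For $j\in[N]$, both $\rho_{i,j}$ and $\rho_{i,j+1}$ lie in the top $N+1$. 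By the definition of $\Delta$, $\mu_{i,\rho_{i,j}}-\mu_{i,\rho_{i,j+1}}\ge\Delta$, and the two inequalities combine to give the first claim.

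\textbf{Step 3 (arms ranked below $N+1$).} For $j\ge N+1$ we have $\mu_{i,\rho_{i,j}}\le\mu_{i,\rho_{i,N+1}}\le \mu_{i,\rho_{i,N}}-\Delta$. Applying the same $2\beta<\Delta/2$ bounds gives
\[
W\text{-UCB}_{i,\rho_{i,j}}<\mu_{i,\rho_{i,N}}-\Delta+\Delta/2<W\text{-LCB}_{i,\rho_{i,N}}.
\]
This step needs no gap assumption beyond position $N+1$: we only use the gap between positions $N$ and $N+1$, together with monotonicity of the ordering.

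\textbf{Expected obstacle.} There is little real difficulty. The main care point is using the factor $4$ correctly: the interval width is $2\beta$ on each side, giving $4\beta<\Delta$ in total. A second care point is that $\bar T_i$ involves the global minimum gap $\Delta$, so for arms beyond rank $N+1$ only the comparison with $\rho_{i,N}$ is available, as in Step 3.
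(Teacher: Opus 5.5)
Your proof is correct and follows essentially the same route as the paper. Both arguments use $\neg\mathcal{F}$ to place each true CPT value inside its W-LCB/W-UCB interval, bound every radius through $T_{i,j}(t)\ge T_i(t)>\bar T_i$, and compare against the gap $\Delta$. The paper phrases this as a contradiction, deriving $\Delta\le 4LM\bigl(\frac{3\log T}{2T_i(t)}\bigr)^{\alpha/2}$ and hence $T_i(t)\le\bar T_i$, while you argue directly; your Step 3 also spells out the monotonicity step $\mu_{i,\rho_{i,j}}\le\mu_{i,\rho_{i,N+1}}\le\mu_{i,\rho_{i,N}}-\Delta$ for $j>N+1$, which the paper uses only implicitly when it asserts $\Delta_{i,j,j'}\ge\Delta$.
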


\begin{proof}
We prove the claim by contradiction. Suppose that there exists $k \in [N]$ such that
\[
W\text{-LCB}_{i,\rho_{i,k}}(t)
\le
W\text{-UCB}_{i,\rho_{i,k+1}}(t),
\]
or there exists $k = N+1,\dots,K$ for which
\[
W\text{-LCB}_{i,\rho_{i,N}}(t)
\le
W\text{-UCB}_{i,\rho_{i,k}}(t).
\]
Let $j$ denote the arm appearing on the right-hand side of the inequality and $j'$ the arm on the left-hand side.

Conditioned on the event $\neg\mathcal{F}$ and using the definitions of the confidence bounds in equation~\ref{eq:wucb}, we have
\[
\mu_{i,j'}
-
2LM\!\left(
\frac{3\log T}{2T_i(t)}
\right)^{\alpha/2}
\le
W\text{-LCB}_{i,j'}(t)
\le
W\text{-UCB}_{i,j}(t)
\le
\mu_{i,j}
+
2LM\!\left(
\frac{3\log T}{2T_i(t)}
\right)^{\alpha/2}.
\]

This implies
\[
\Delta_{i,j,j'}
\le
4LM \left( \frac{3\log T}{2T_i(t)} \right)^{\alpha/2}.
\]

Since $\Delta_{i,j,j'} \ge \Delta$, it follows that
\[
T_i(t)
\le
\frac{3\log T}{2}
\left(\frac{4LM}{\Delta}\right)^{2/\alpha}
=
\bar T_i,
\]
which contradicts the assumption that $T_i(t) > \bar T_i$. The proof is therefore complete.
\end{proof}

\section{Proof of Theorem \ref{th:improved}}
\label{app:improved}
We begin by introducing the concentration failure event. Let
\[
\mathcal{F}=\left\{\exists\,1\le t\le T,\, i\in[N],\, j\in[K]:
\left|\hat{\mu}_{i,j}(t)-\mu_{i,j}\right|>
LM\left(\frac{3\log T}{2T_{i,j}(t)}\right)^{\alpha/2}\right\}
\]
denote the event in which the empirical reward estimate deviates from the true mean beyond the prescribed confidence width, for at least one time step and one player-arm pair. We now proceed to establish an upper bound on the cumulative regret.

\begin{align*}
\text{CPT-}\mathrm{Reg}_i(T)
&= \mathbb{E}\!\left[\sum_{t=1}^{T}(\mu_{i,m_i^*}-\mu_{i,\bar{A}_i(t)})\right] \\
&\le
  \mathbb{E}\!\left[\sum_{t=1}^{T}(\mu_{i,m_i^*}-\mu_{i,\bar{A}_i(t)}){\mid\neg\mathcal{F}}\right]
+ \mathbb{P}(\mathcal{F})\cdot T\cdot \mu_{i,m_i^*} \\
&\le \mathbb{E}\!\left[
\sum_{t=1}^{T}\sum_{a_j}
\mathbf{1}\{\bar{A}_i(t)=a_j\}\cdot \Delta_{i,m_i^*,j}
{\mid\neg\mathcal{F}}
\right]   \\
&\quad 
+\mathbb{E}\!\left[
\sum_{t=1}^{T}\mathbf{1}\{\bar{A}_i(t)=\emptyset\}\cdot \mu_{i,m_i^*}{\mid\neg\mathcal{F}}
\right]
+ \mathbb{P}(\mathcal{F})\cdot T\cdot \mu_{i,m_i^*} \\
&\le \frac{3N^2\log T}{2}
\left(\frac{4LM}{\Delta}\right)^{2/\alpha}\cdot \Delta_{i,max} + \sum_{a_j\in\mathcal{K}\setminus\{a_{m_i^*}\}} 
\frac{3\log T}{2}
 \left(\frac{(4LM)^{2/\alpha}}{\Delta_{i,m_i^*,j}^{{2/\alpha}-1}}\right) 
 \\
&\quad 
+{3N^2\log T}
\left(\frac{4LM}{\Delta}\right)^{2/\alpha}\cdot \Delta_{i,max} +2NK\cdot \Delta_{i,max}\\
&= O\!\left({N^2\log T}
\left(\frac{LM}{\Delta}\right)^{2/\alpha}\cdot \Delta_{i,max}+ {K\log T}
\left(\frac{(LM)^{2/\alpha}}{\Delta^{{2/\alpha}-1}}\right)\right)
\end{align*}

where the final inequality follows by combining Lemma~\ref{lem:subopt_arm_regret}, Lemma~\ref{lem:collision_regret}, and Lemma~\ref{lem:failure_prob}.

\begin{lemma}
\label{lem:subopt_arm_regret}
The expected regret due to playing sub-optimal arms, conditioned on the good event, satisfies
\begin{align*}
\mathbb{E}\!\left[
\sum_{t=1}^{T}\sum_{a_j \in [K]}
\mathbf{1}\{\bar{A}_i(t)=a_j\}\cdot \Delta_{i,m_i^*,j}
\,\middle|\,\neg\mathcal{F}
\right]
&\le \frac{3N^2\log T}{2}
\left(\frac{4LM}{\Delta}\right)^{2/\alpha}\cdot \Delta_{i,\max} \\
&\quad +  \sum_{a_j\in\mathcal{K}\setminus\{a_{m_i^*}\}} 
\frac{3\log T}{2}
\left(\frac{(4LM)^{2/\alpha}}{\Delta_{i,m_i^*,j}^{{2/\alpha}-1}}\right).
\end{align*}
\end{lemma}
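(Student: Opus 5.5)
Conditioned on $\neg\mathcal{F}$, I would split the rounds in which $p_i$ is matched to a suboptimal arm $a_j\neq m_i^*$ into two kinds. The first kind are \emph{exploration rounds}: $p_i$ is sampling $a_j$ while its candidate set $\mathcal{A}_i$ still contains $m_i^*$ and $a_j$ has not yet been eliminated. The second kind are \emph{transient rounds}: $p_i$ is in a Gale--Shapley-like intermediate state. It is either committed to, or exploring among, arms other than $m_i^*$ because some arm it prefers has not yet been claimed or released by a higher-priority player. The per-arm gap term comes from the first kind, and the $N^2$ term comes from the second.

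\emph{Per-arm term.} Suppose $m_i^*$ and $a_j$ both lie in $\mathcal{A}_i$ and are sampled in round robin, so their counts differ by at most a constant factor. On $\neg\mathcal{F}$, mimic the contradiction argument of Lemma~\ref{lemma:ti_cpt_etgs} with $\Delta$ replaced by $\Delta_{i,m_i^*,j}$. Once
\[
T_{i,j}(t) > \tfrac{3\log T}{2}\left(\tfrac{4LM}{\Delta_{i,m_i^*,j}}\right)^{2/\alpha},
\]
we get $\text{W-UCB}_{i,j}<\text{W-LCB}_{i,m_i^*}$, and $a_j$ is removed from $\mathcal{A}_i$ (or, once $|\mathcal{A}_i|\le N$, never chosen as the commit arm). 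Multiplying this count by the gap gives
\[
\tfrac{3\log T}{2}\,(4LM)^{2/\alpha}\,\Delta_{i,m_i^*,j}^{1-2/\alpha},
\]
which is exactly the second summand. One point needs care: after a reset $\mathcal{A}_i=[K]\setminus\mathcal{D}_i$, the count $T_{i,j}$ is cumulative and is not reset. So the threshold is reached only once overall, and each $a_j$ pays this cost a single time.

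\emph{Transient term.} I would invoke Lemma~\ref{lemma:offline_gs}. The online process mirrors offline GS, so there are at most $N^2$ proposal/rejection steps, and every player only ever proposes among its top $N$ arms. Between consecutive steps, every player who has to re-decide must separate arms within its top $N+1$, whose pairwise gaps are all at least $\Delta$. Each arm in $\mathcal{A}_i$ is sampled at least once every $2N$ rounds (Lemma~B.2 of \cite{kong2024improved}). Hence each step finishes within $O(N\cdot \tfrac{3\log T}{2}(4LM/\Delta)^{2/\alpha})$ rounds, and during that time $p_i$'s regret per round is at most $\Delta_{i,\max}$. Charging this across the GS steps should give the first summand, $\tfrac{3N^2\log T}{2}(4LM/\Delta)^{2/\alpha}\Delta_{i,\max}$.

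The main obstacle is making this charging tight. The risk is that the per-step cost multiplies with the number of steps and produces $N^3$ instead of $N^2$. To avoid this, I would again exploit that counts are cumulative. Rather than charging time per GS step, I would note that a player becomes ``stuck'' only while some arm among its top $N$ has fewer than $\bar T_i$ samples. There are at most $N$ such arms, and each needs at most $\bar T_i$ samples ever, with one sample per $\le 2N$ rounds. The total stuck time is therefore at most $N\cdot 2N\cdot\bar T_i$ rounds, independent of the number of GS steps. The remaining rounds are then handled by the waiting argument of \cite{kong2024improved}, which I would import directly.
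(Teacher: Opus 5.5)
\textbf{The gap is in your transient term.} Your per-arm term follows the same idea as the paper's bound on eliminated arms: cumulative counts mean each $a_j$ pays its threshold once. The paper gets this by telescoping the thresholds $\frac{3\log T}{2}(4LM/\Delta_{i,j_s,j})^{2/\alpha}$ across epochs.

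The transient term, however, charges every round in which $p_i$ is \emph{committed} to an arm other than $m_i^*$ at $\Delta_{i,\max}$. Your stuck-time count only bounds rounds in which $p_i$ itself cannot yet commit. The other rounds are those where $p_i$ sits committed to some $j_s$, waiting for a higher-priority player to finish exploring and claim $j_s$. These depend on other players' sample counts, not on $p_i$'s. You defer them to ``the waiting argument of \cite{kong2024improved}'', but this is exactly where an $N^3$ can appear, because other players' explorations can cascade sequentially. So the $N^2$ summand is not established as written.

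\textbf{The missing idea: committed rounds need no counting.} On $\neg\mathcal{F}$, Lemma~\ref{lem:ucb_ordering} ensures that the arm $j_s$ that $p_i$ commits to is its true best arm in $[K]\setminus\mathcal{D}_i$. By the Gale--Shapley structure, players propose in decreasing order of preference and $m_i^*$ is the last proposal, so every $j_s$ committed to before $m_i^*$ is ranked strictly above $m_i^*$.

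Read $\Delta_{i,m_i^*,j}$ as the signed difference $\mu_{i,m_i^*}-\mu_{i,j}$, as the decomposition in the proof of Theorem~\ref{th:improved} does. Then matched committed rounds contribute non-positive regret. Unmatched rounds are not in this sum at all; they belong to Lemma~\ref{lem:collision_regret}. So no waiting-time bound is needed.

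Only exploration rounds remain. The survivors give the first summand: at most $N$ survivors per epoch, at most $N$ epochs (Lemma~\ref{lem:gs_proposals}), and at most $\bar T_i$ samples each, charged $\Delta_{i,\max}$. Your cumulative stuck-time count also works for this exploration part.

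\textbf{Secondary issue: the comparator.} Comparing $a_j$ against $m_i^*$ and restricting to rounds with $m_i^*\in\mathcal{A}_i$ does not work cleanly. A better available arm can eliminate $m_i^*$ in an earlier epoch, and counts are cumulative. So $T_{i,m_i^*}$ and $T_{i,j}$ need not be within a constant factor. Moreover, samples of $a_j$ taken while $m_i^*\notin\mathcal{A}_i$ fall into your transient bucket with no bound.

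Use the current best available arm $j_s$ as the comparator instead. It is ranked at or above $m_i^*$, so $\Delta_{i,j_s,j}\ge\Delta_{i,m_i^*,j}$. Then all exploration samples of $a_j$ are covered by the single threshold $\frac{3\log T}{2}(4LM/\Delta_{i,m_i^*,j})^{2/\alpha}$. This is what the paper's telescoping over $\Delta_{i,j_s,j}$ encodes.
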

\begin{proof}
Observe that player $p_i$ revises its available set $\mathcal{A}_i$
each time another player $p_{i'}$ switches $E_{i'}=\text{False}$ while
$\pi_{A_{i'},i'}>\pi_{A_i,i'}$, as prescribed by Lines~12--13 of Algorithm~\ref{alg:improved}. We write $t_s$ for the round
at which this revision occurs for the $s$-th time, and set $t_0=1$ by convention.

Intuitively, whenever a player $p_{i'}$
transitions $E_{i'}$ to False, this indicates that $p_{i'}$ has successfully identified its most preferred
arm among the currently available options. In conjunction with the good event $\neg\mathcal{F}$ and Lemma~\ref{lem:ucb_ordering},
the arm committed to by each player during exploration is guaranteed to be
their true top choice among remaining arms. This observation reveals that the Improved \texttt{CPT-ETGS} algorithm operates as an online realization
of the Gale-Shapley procedure, the $s$-th transition of $E_{i'}$ to False by player $p_{i'}$ mirrors
the event where $p_{i'}$ proposes to its $s$-th most preferred arm in the
classical offline GS. It then follows from Lemma~\ref{lem:gs_proposals} that the total number of proposals across
all players before stability is reached cannot exceed $N-1$. Hence, the
revision at Lines~12--13 is triggered no more than $N-1$ times for player $p_i$.

Within each interval indexed by $s$, spanning rounds $t_s$ through $t_{s+1}$, player $p_i$
iterates over all available arms using a round-robin scheme, progressively discarding
inferior arms until $N$ candidates remain, and subsequently locks in the best
arm among the surviving $N$ once its identity becomes clear. To streamline the presentation, we define $R_s$
as the collection of the final $N$ arms that $p_i$ sampled within $\mathcal{A}_i$
via round-robin before until the condition at Line~8 satisfies, $D_s$ as the
subset of arms that $p_i$ discards by virtue of the condition at Line~7, and $j_s$ as
the arm that $p_i$ locks onto from the moment it switches $E_i$ to False until
round $t_{s+1}-1$. With these definitions in hand, we can write

\begin{align*}
&\mathbb{E}\!\left[
\sum_{t=1}^{T}\sum_{a_j}
\mathbf{1}\{\bar{A}_i(t)=a_j\}\cdot \Delta_{i,m_i^*,j}
\mid \neg\mathcal{F}
\right] \\
&\le
\mathbb{E}\!\left[
\sum_{s=0}^{N-1}
\sum_{t=t_s}^{t_{s+1}-1}
\sum_{a_j}
\mathbf{1}\{\bar{A}_i(t)=a_j\}\cdot \Delta_{i,m_i^*,j}
\mid \neg\mathcal{F}
\right] \\
&\le
\mathbb{E}\!\left[
\sum_{s=0}^{N-1}
\sum_{t=t_s}^{t_{s+1}-1}
\left(
\sum_{a_j\in R_s}
\mathbf{1}\{\bar{A}_i(t)=a_j\}\cdot \Delta_{i,m_i^*,j}
\right.\right.\\
&\qquad\left.\left.
+
\sum_{a_j\in D_s}
\mathbf{1}\{\bar{A}_i(t)=a_j\}\cdot \Delta_{i,m_i^*,j}
+
\mathbf{1}\{\bar{A}_i(t)=a_{j_s}\}\cdot \Delta_{i,m_i^*,j_s}
\right)
\mid \neg\mathcal{F}
\right].\\
&\le \mathbb{E}\!\left[
\sum_{s=0}^{N-1}\sum_{t=t_s}^{t_{s+1}-1}
\left(
\sum_{a_j\in R_s}\mathbf{1}\{\bar{A}_i(t)=a_j\}\cdot \Delta_{i,m_i^*,j}
+
\sum_{a_j\in D_s}\mathbf{1}\{\bar{A}_i(t)=a_j\}\cdot \Delta_{i,m_i^*,j}
\right){\mid\neg\mathcal{F}}
\right],
\end{align*}

where the simplification above is justified by Lemma~\ref{lem:ucb_ordering} together with the offline GS structure before termination of the GS procedure,
arm $j_s$ is ranked above $m_i^*$ in player $p_i$'s preference, which means that the regret
from playing these arms is non-positive.

We first handle the contribution from the retained arms in the expression above. By interchanging the order of summation, we get

\begin{align*}
&\mathbb{E}\!\left[
\sum_{s=0}^{N-1}\sum_{t=t_s}^{t_{s+1}-1}\sum_{a_j\in R_s}
\mathbf{1}\{\bar{A}_i(t)=a_j\}\cdot \Delta_{i,m_i^*,j}{\mid\neg\mathcal{F}}
\right] \\
&= \mathbb{E}\!\left[
\sum_{s=0}^{N-1}\sum_{a_j\in R_s}\sum_{t=t_s}^{t_{s+1}-1}
\mathbf{1}\{\bar{A}_i(t)=a_j\}\cdot \Delta_{i,m_i^*,j}{\mid\neg\mathcal{F}}
\right] \\
&\le \mathbb{E}\!\left[
\sum_{s=0}^{N-1}\sum_{a_j\in R_s}
\frac{3\log T}{2}
\left(\frac{4LM}{\Delta_{i,j_s,j_{s+1}}}\right)^{2/\alpha}\cdot \Delta_{i,m_i^*,j}
{\mid\neg\mathcal{F}}
\right]  \\
&\le \frac{3N^2\log T}{2}
\left(\frac{4LM}{\Delta}\right)^{2/\alpha}\cdot \Delta_{i,max}
\end{align*}

where the penultimate step applies Lemma~\ref{lem:sample_complexity}, and the last step uses the observation that $R_s$ has cardinality at most
$N$, a direct consequence of the elimination rule (Line~7).

We now address the contribution from the eliminated arms. Given an arm $a_j$ and
$s\in\{0,\dots,N-1\}$, we write $T_{i,j,s}$ for the value of $T_{i,j}$
upon completion of round $t_{s+1}-1$. When $s\ge1$ and $a_j\in D_s$,
provided that $T_{i,j,s-1}\le \frac{3\log T}{2}
\left(\frac{4LM}{\Delta_{i,j_{s-1},j}}\right)^{2/\alpha}$, we must have
\begin{align*}
T_{i,j,s}:=\sum_{s'\le s}(T_{i,j,s'}-T_{i,j,s'-1})
\le
\frac{3\log T}{2}
\left(\frac{4LM}{\Delta_{i,j_s,j}}\right)^{2/\alpha}
\end{align*}

for arm $a_j$ to be removed from $\mathcal{A}_i$ at step $s$,
which follows from Lemma~\ref{lem:sample_complexity}.

In the complementary case where $T_{i,j,s-1}>\frac{3\log T}{2}
\left(\frac{4LM}{\Delta_{i,j_{s-1},j}}\right)^{2/\alpha}$,
an application of Lemma~\ref{lem:sample_complexity} yields

\begin{align*}
T_{i,j,s}-T_{i,j,s-1}
\le
\frac{3\log T}{2}
\left(\frac{4LM}{\Delta_{i,j_s,j}}\right)^{2/\alpha}
-
\frac{3\log T}{2}
\left(\frac{4LM}{\Delta_{i,j_{s-1},j}}\right)^{2/\alpha}
\end{align*}

at the moment when $a_j$ gets eliminated.

For a given arm $a_j$, we introduce
\begin{align*}
s_{j,1}:=\max_{0\le s\le N-1}
\{s:T_{i,j,s}\le \frac{3\log T}{2}
\left(\frac{4LM}{\Delta_{i,j_s,j}}\right)^{2/\alpha}\}
\end{align*}

which identifies the latest step at which the observation count for $a_j$
has not yet surpassed the required threshold.

Using this definition, the eliminated-arm contribution can be bounded as follows:

\begin{align*}
&\mathbb{E}\!\left[
\sum_{s=0}^{N-1}\sum_{t=t_s}^{t_{s+1}-1}\sum_{a_j\in D_s}
\mathbf{1}\{\bar{A}_i(t)=a_j\}\cdot \Delta_{i,m_i^*,j}
{\mid\neg\mathcal{F}}
\right] \\
&= \mathbb{E}\!\left[
\sum_{s=0}^{N-1}\sum_{a_j\in D_s}
(T_{i,j,s}-T_{i,j,s-1})\cdot \Delta_{i,m_i^*,j}
{\mid\neg\mathcal{F}}
\right] \\
&= \mathbb{E}\!\left[
\sum_{a_j\in\mathcal{K}\setminus\{a_{m_i^*}\}}
\left(
\sum_{s:a_j\in D_s,\,s\le s_{j,1}}
(T_{i,j,s}-T_{i,j,s-1})
\right. \right. \\
&\qquad \left. \left.
+ \sum_{s:a_j\in D_s,\,s>s_{j,1}}
(T_{i,j,s}-T_{i,j,s-1})
\right)
\Delta_{i,m_i^*,j}
{\mid\neg\mathcal{F}}
\right] \\
&\le
\mathbb{E}\!\left[
\sum_{a_j\in\mathcal{K}\setminus\{a_{m_i^*}\}}
\left(
T_{i,j,s_{j,1}}
+
\sum_{s:a_j\in D_s,\,s>s_{j,1}}
(T_{i,j,s}-T_{i,j,s-1})
\right)
\Delta_{i,m_i^*,j}
{\mid\neg\mathcal{F}}
\right] \\
&\le
\sum_{a_j\in\mathcal{K}\setminus\{a_{m_i^*}\}}
\left(
\frac{3\log T}{2}
\left(\frac{4LM}{\Delta_{i,j_{s_{j,1}},j}}\right)^{2/\alpha}
\right. \\
&\qquad \left.
+
\sum_{s:a_j\in D_s,\,s>s_{j,1}}
\left(
\frac{3\log T}{2}
\left(\frac{4LM}{\Delta_{i,j_s,j}}\right)^{2/\alpha}
-
\frac{3\log T}{2}
\left(\frac{4LM}{\Delta_{i,j_{s-1},j}}\right)^{2/\alpha}
\right)
\right)\Delta_{i,m_i^*,j} \\
&\le
\sum_{a_j\in\mathcal{K}\setminus\{a_{m_i^*}\}}
\frac{3\log T}{2}
\left(\frac{4LM}{\Delta_{i,m_i^*,j}}\right)^{2/\alpha}
\cdot \Delta_{i,m_i^*,j} \\
&\le
\sum_{a_j\in\mathcal{K}\setminus\{a_{m_i^*}\}}
\frac{3\log T}{2}
\left(\frac{(4LM)^{2/\alpha}}{\Delta_{i,m_i^*,j}^{{2/\alpha}-1}}\right)
\end{align*}

where the penultimate inequality relies on the definition of $s_{j,1}$ combined with the telescoping argument developed above.

\end{proof}

\begin{lemma}
\label{lem:collision_regret}
The expected regret arising from collisions, conditioned on the good event, satisfies
\[
\mathbb{E}\!\left[
\sum_{t=1}^{T}
\mathbf{1}\{\bar{A}_i(t)=\emptyset\}\cdot \mu_{i,m_i^*}
{\mid\neg\mathcal{F}}
\right]
\le {3N^2\log T}
\left(\frac{4LM}{\Delta}\right)^{2/\alpha}\cdot \Delta_{i,max} .
\]
\end{lemma}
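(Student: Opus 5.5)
We bound the collision regret by counting the rounds in which $p_i$ is rejected, $\bar{A}_i(t)=\emptyset$, under the good event $\neg\mathcal{F}$. We reuse the epoch decomposition from the proof of Lemma~\ref{lem:subopt_arm_regret}: the rounds $t_0=1<t_1<\dots$ at which some player's set $\mathcal{A}_{i'}$ is revised. On $\neg\mathcal{F}$, every commitment is to the player's true top remaining arm (Lemma~\ref{lem:ucb_ordering}). Hence the process mirrors offline Gale--Shapley, and by Lemma~\ref{lemma:offline_gs} (Lemma~\ref{lem:gs_proposals}) there are at most $N^2$ commitment or revision events in total across all players. In fact there are at most $N-1$ proposals before stability, but we allow $N$ per player for slack.

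Collisions can arise only in two situations. In the first, $p_i$ is exploring while some other player $p_{i'}$, also exploring or already committed, proposes to the same arm and is preferred by it. By the protocol of \cite{kong2024improved}, exploring players follow an index-based round-robin. Lemma~B.2 therein then guarantees that, within any window of $2N$ rounds, each arm of $\mathcal{A}_i$ is sampled at least once. Consequently, the number of rejected rounds in an epoch is at most a constant multiple, about $2N$ per completed round-robin cycle, of the number of successful samples needed before the epoch's elimination and commitment conclude. In the second situation, $p_i$ has committed to an arm that is later claimed by a higher-priority player. This continues only until the next revision at Lines~12--13, which happens within the following monitoring/communication step. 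So it adds $O(1)$ rounds per revision, or $O(N^2)$ in total.

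The main step is to bound the duration of an exploring epoch. On $\neg\mathcal{F}$, Lemma~\ref{lem:sample_complexity} (as in Lemma~\ref{lemma:ti_cpt_etgs}) implies that once every surviving arm has $\bar T_i=\frac{3\log T}{2}\bigl(\frac{4LM}{\Delta}\bigr)^{2/\alpha}$ samples, all top-$N$ comparisons are resolved. At that point the eliminations finish and the player commits. Since after elimination $|\mathcal{A}_i|\le N$, collisions in the relevant part of the epoch involve at most $N$ arms. Each arm is sampled at least once per $2N$ rounds, so the rejected rounds in an epoch number at most about $2N\bar T_i$. Wasted rounds before $|\mathcal{A}_i|$ shrinks to $N$ should be charged to the eliminated-arm samples already counted in Lemma~\ref{lem:subopt_arm_regret}, or absorbed into the same bound. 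This charging is the delicate point. I would argue that collisions only occur with committed players on at most $N$ arms, and that the round-robin skips those arms.

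Summing over at most $N$ epochs for $p_i$ gives at most $N\cdot 2N\bar T_i=3N^2\log T\bigl(\frac{4LM}{\Delta}\bigr)^{2/\alpha}$ rejected rounds. Each such round costs $\mu_{i,m_i^*}=\Delta_{i,\max}$, which yields the claimed bound. The expected obstacle is the rigorous accounting of collisions caused by other players who are still exploring. Here I would rely directly on Lemma~B.2 of \cite{kong2024improved}, rather than reprove the communication protocol.
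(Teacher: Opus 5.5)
There is a genuine gap: your case split puts the collision budget in the wrong phase and miscounts the rejections that actually occur.

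\textbf{The missed case.} In this protocol an exploring player is never rejected. Exploring players are assigned distinct arms by the round-robin, and any arm held by a committed player whom that arm prefers is deleted from $\mathcal{A}_i$ at Lines~12--13. Your closing remark that ``the round-robin skips those arms'' is exactly this step, but followed through it makes your situation (1) empty. It does not bound it by the epoch length. All rejections therefore occur while $E_i=\text{False}$, and there your situation (2) is incomplete. A committed $p_i$ on arm $j_s$ is rejected every time a player $p_{i'}$ hits $j_s$ in its round-robin, provided $j_s$ ranks $p_{i'}$ above $p_i$ and $p_{i'}$ is \emph{still exploring}. That proposal is not a commitment, so it triggers no revision at Lines~12--13 and $p_i$ stays on $j_s$. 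It can recur many times, so it is not $O(1)$ per revision.

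\textbf{What the paper does instead.} Bounding these events is the whole content of the lemma. The paper splits each commitment interval $[t_s,\bar t_s]$ into at most $2N$ segments, cutting at the rounds when other players enlarge their $\mathcal{D}_{i'}$. It then uses Lemma~\ref{lem:sample_complexity}: within one segment, each $p_{i'}$ proposes to $j_s$ at most $\bar T_i=\frac{3\log T}{2}(4LM/\Delta)^{2/\alpha}$ times before it either eliminates $j_s$ or commits. Summing over the $N-1$ other players and the $2N$ segments gives $(N-1)\cdot 2N\cdot\bar T_i\le 3N^2\log T\,(4LM/\Delta)^{2/\alpha}$ rejected rounds.

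\textbf{Why your count does not substitute.} Your total $N\cdot 2N\bar T_i$ matches this only by coincidence, since it measures the length of $p_i$'s own exploring epochs, which contain no rejections. It is also incomplete on its own terms. While $|\mathcal{A}_i|>N$, an epoch is not bounded by $2N\bar T_i$ rounds. The proposed charging of those rounds to the eliminated-arm term of Lemma~\ref{lem:subopt_arm_regret} does not work: that term counts only accepted rounds ($\bar A_i(t)=a_j$), each weighted by $\Delta_{i,m_i^*,j}$. A rejected round instead costs the full $\mu_{i,m_i^*}$.
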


\begin{proof}
Under the Improved \texttt{CPT-ETGS} protocol, whenever $E_i=\text{True}$, the central coordinator distributes arms
from $\mathcal{A}_i$ to player $p_i$ according to a round-robin schedule. Because the pool of arms
$\left|\cup_{i:E_i=\text{True}}\mathcal{A}_i\right|$ available for exploration strictly exceeds the count
$\sum_i \mathbf{1}\{E_i=\text{True}\}$ of active explorers a consequence of the elimination
rule at Line~7 we may safely assume that no collisions arise during the exploration phase, as
elaborated in ~\cite{kong2024improved}. It follows that collision-induced regret is confined to rounds where
$E_i=\text{False}$.

We write $t_s$ and $\bar{t}_s$ for the round indices at which $p_i$ transitions $E_i$ to False for the $s$-th occasion
and back to True for the $(s+1)$-th occasion, respectively. During any period with $E_i=\text{False}$,
$p_i$ consistently proposes to its committed arm $A_i$. Let $j_s$ denote the specific arm that $p_i$ selects
throughout the interval $[t_s, \bar{t}_s]$.

An important feature of the Improved \texttt{CPT-ETGS} algorithm is that each time an arm enters $D_i$ (Line~13),
previously confidence-based eliminated arms may be restored to $\mathcal{A}_i$. This restoration is triggered exclusively when another player
commits to its current top-ranked arm. By Lemma~\ref{lem:gs_proposals}, such restorations occur at most $N$ times in total. For an arbitrary player $p_{i'}$,
let $t'_{i',r}$ mark the round at which $p_{i'}$ augments $D_{i'}$ (Line~13) for the
$r$-th time. The time points $\{t'_{i',r}\}_{r\in[N]}$ further subdivide the intervals $\{[t_s,\bar{t}_s]\}_{s\in[N]}$
into at most $2N$ segments. We denote the boundaries of the $s$-th segment by $t'_s$ and $\bar{t}'_s$,
for $s\in[2N]$. An application of Lemma~\ref{lem:sample_complexity} shows that $p_{i'}$ and $p_i$ propose to
the same arm no more than $\frac{3\log T}{2}
\left(\frac{4LM}{\Delta}\right)^{2/\alpha}.$ times within any individual segment.

Assembling these observations, the collision regret can be controlled as follows:

\begin{align*}
&\mathbb{E}\!\left[
\sum_{t=1}^{T}\mathbf{1}\{\bar{A}_i(t)=\emptyset\}\cdot \mu_{i,m_i^*}
{\mid\neg\mathcal{F}}
\right] \\
&\le
\mathbb{E}\!\left[
\sum_{s=1}^{N}\sum_{t=t_s}^{\bar{t}_s}
\mathbf{1}\{\bar{A}_i(t)=\emptyset\}\cdot \mu_{i,m_i^*}
{\mid\neg\mathcal{F}}
\right] \\
&=
\mathbb{E}\!\left[
\sum_{s=1}^{N}\sum_{t=t_s}^{\bar{t}_s}
\mathbf{1}\{\bar{A}_i(t)=\emptyset, A_i(t)=j_s\}
\cdot \mu_{i,m_i^*}
{\mid\neg\mathcal{F}}
\right] \\
&\le
\mathbb{E}\!\left[
\sum_{i'\ne i}\sum_{s=1}^{N}\sum_{t=t_s}^{\bar{t}_s}
\mathbf{1}\{A_i(t)=A_{i'}(t)=j_s\}
\cdot \mu_{i,m_i^*}
{\mid\neg\mathcal{F}}
\right] \\
&\le
\mathbb{E}\!\left[
\sum_{i'\ne i}\sum_{s=1}^{2N}\sum_{t=t'_s}^{\bar{t}'_s}
\mathbf{1}\{A_i(t)=A_{i'}(t)\}
\cdot \mu_{i,m_i^*}
{\mid\neg\mathcal{F}}
\right] \\
&\le
\sum_{i'\ne i}\sum_{s=1}^{2N}
\frac{3\log T}{2}
\left(\frac{4LM}{\Delta}\right)^{2/\alpha} \cdot \mu_{i,m_i^*} \\
&\le
{3N^2\log T}
\left(\frac{4LM}{\Delta}\right)^{2/\alpha}\cdot \Delta_{i,max}
\end{align*}

\end{proof}

\begin{lemma}[{\cite[Lemma A.3]{kong2024improved}}]
\label{lem:gs_proposals}
Under the offline GS algorithm, the total number of proposals issued by all players
prior to termination is bounded above by $N-1$.
\end{lemma}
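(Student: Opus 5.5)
The plan is to read the count in the sense in which the lemma is used in the proof of Lemma~\ref{lem:subopt_arm_regret}. There, what matters is how many times a fixed player $p_i$ must move to a new arm (equivalently, how often Lines~12--13 fire for $p_i$). So I will prove: in the offline GS algorithm with $N$ players and $K\ge N$ arms, each player issues at most $N-1$ proposals after its first one before termination. Equivalently, each player is rejected at most $N-1$ times. I will also note that the literal total over all players can be larger, so the statement is to be understood per player. That is the quantity consumed downstream, and it refines the "first $N$ ranked arms" part of Lemma~\ref{lemma:offline_gs}.

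The first step is the standard GS invariant. Once an arm has received a proposal, it holds some player forever after. An arm only trades its held player for a proposer it ranks higher, so it never becomes empty again. I will prove this by a short induction on proposal steps.

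The second step is the counting argument. Suppose player $p$ has been rejected $N$ times, from arms $b_1,\dots,b_N$. Because $p$ proposes down its list, these arms are distinct. Each rejection from $b_r$ (whether immediate, or a later displacement) happens because $b_r$ holds a player other than $p$ that it ranks higher. By the invariant, $b_r$ still holds some player other than $p$ at every later time. Now consider the moment of the $N$-th rejection. At that moment, $N$ distinct arms each hold a distinct player, none of them $p$. There are only $N-1$ other players, which is a contradiction. Hence each player is rejected at most $N-1$ times and so proposes to at most $N$ distinct arms. This gives at most $N-1$ proposals after its initial one. Termination follows because every step is a proposal to a new arm on some finite list.

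The main obstacle is not the combinatorics but matching the wording. I will state explicitly that the bound is per player, and check that the online analogue (commit, then deletion by a higher-priority committed player) corresponds step for step to a GS rejection. Then the invocation in Lemma~\ref{lem:subopt_arm_regret}, where $s$ ranges over $0,\dots,N-1$, uses exactly this per-player bound.
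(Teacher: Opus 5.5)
Your argument is correct, and the per-player reading you adopt is the only one under which the statement is true. As literally written, with a global count over all players, the lemma fails. Take all players ranking $a_1\succ a_2\succ\cdots\succ a_K$ and all arms ranking $p_1\succ\cdots\succ p_N$. Then $p_r$ is rejected by $a_1,\dots,a_{r-1}$. So there are $N(N+1)/2$ proposals in total, already more than $N-1$ for every $N$. There are $N(N-1)/2$ rejections, more than $N-1$ once $N\ge 3$. Meanwhile $p_N$ is rejected exactly $N-1$ times, so your per-player bound is tight.

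The paper gives no proof of this lemma; it only cites Lemma A.3 of \cite{kong2024improved}. In Lemma~\ref{lem:subopt_arm_regret} it then passes silently from the global count to the per-player conclusion that Lines~12--13 fire at most $N-1$ times for $p_i$. Your route is self-contained: arms only trade up and so stay occupied, hence $N$ rejections of $p$ would force $N$ distinct arms to hold $N$ distinct players other than $p$. This is the standard pigeonhole behind the first part of Lemma~\ref{lemma:offline_gs}. Compared with the citation, it buys a verifiable statement of exactly the quantity used downstream, rather than a misstated global bound.

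One point in your planned check of the online analogue needs adjusting. In Algorithm~\ref{alg:improved}, $p_i$ deletes an arm whenever a player that the arm ranks above $p_i$ commits to it, whether or not $p_i$ ever proposed there. So deletions for $p_i$ do not correspond step for step to GS rejections of $p_i$; they form a superset. What the regret analysis actually needs is $|\mathcal{D}_i|\le N-1$. Your invariant gives this in a slightly stronger form. An arm once claimed by a player it ranks above $p_i$ is held by such a player from then on, since a committed player leaves only when a higher-priority player claims the arm. Distinct arms hold distinct players, so all arms in $\mathcal{D}_i$ are simultaneously held by distinct players other than $p_i$. When you make this correspondence precise, count distinct arms entering $\mathcal{D}_i$, not commitment events. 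A single arm can be re-claimed by successively higher-ranked players, which generates several events without enlarging $\mathcal{D}_i$.
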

\begin{lemma}
\label{lem:failure_prob}
The probability of the concentration failure event is bounded by
\[
\mathbb{P}(\mathcal{F}) \le \frac{2NK}{T}.
\]

\end{lemma}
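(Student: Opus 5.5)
The plan is to repeat the argument used for the analogous lemma in the proof of Theorem~\ref{th:cpt_etgs}: apply Lemma~\ref{lemma:prob_bound} pointwise and then take a union bound over rounds, players, arms and possible sample counts. The one subtlety is that $T_{i,j}(t)$ is random and depends on the algorithm's adaptive, elimination-based choices. Lemma~\ref{lemma:prob_bound} cannot be applied directly with $m=T_{i,j}(t)$. Instead, I will condition on the number of samples, i.e. work with the estimator built from the first $s$ observations of arm $a_j$ by player $p_i$.

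\textbf{Step 1 (fixed sample count).} Fix $i\in[N]$, $j\in[K]$ and a deterministic $s\ge 1$. Let $\hat{\mu}_{i,j,s}$ be the output of Algorithm~\ref{alg:distorted_estimator} on the first $s$ rewards that $p_i$ observes from $a_j$. Adaptive selection only decides \emph{when} the next sample of $a_j$ arrives, not its value. So by the usual stack-of-rewards construction, these $s$ rewards are i.i.d.\ from $\mathcal{F}_{i,j}$. Lemma~\ref{lemma:prob_bound} with $\varepsilon=LM\left(\frac{3\log T}{2s}\right)^{\alpha/2}$ then gives $\left(\varepsilon/(LM)\right)^{2/\alpha}=\frac{3\log T}{2s}$, so the exponent is $-2s\cdot\frac{3\log T}{2s}=-3\log T$. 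The failure probability is therefore at most $2T^{-3}$.

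\textbf{Step 2 (union bound).} On $\mathcal{F}$ there exist $t\in[T]$, $i$, $j$ and $s=T_{i,j}(t)\in[t]$ (the case $s=0$ is vacuous since the intervals are infinite) with $\hat{\mu}_{i,j}(t)=\hat{\mu}_{i,j,s}$ deviating by more than the width. Hence
\[
\mathbb{P}(\mathcal{F})\le\sum_{t=1}^{T}\sum_{i=1}^{N}\sum_{j=1}^{K}\sum_{s=1}^{t}\frac{2}{T^3}=\frac{2NK}{T^3}\cdot\frac{T(T+1)}{2}\le\frac{2NK}{T},
\]
where the last step uses $T+1\le T^2$ for $T\ge 2$. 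A slightly tighter route is to drop the sum over $t$ entirely, since the event depends only on $(i,j,s)$ with $s\le T$. That gives $2NK/T^2$, which is even better.

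\textbf{Main obstacle.} The only delicate point is justifying Step 1 under the Improved \texttt{CPT-ETGS} dynamics, where arms are eliminated and later restored and collisions can occur. I will argue that a rejected player receives no sample, so $T_{i,j}$ is not incremented, and that only accepted pulls feed the estimator. This keeps the $s$-th sample of $a_j$ distributed as the $s$-th draw of an i.i.d.\ sequence, independent of the selection rule. With that in place the computation is identical to the one already given for \texttt{CPT-ETGS}.
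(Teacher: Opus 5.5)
Your proposal is correct and is essentially the paper's argument: apply Lemma~\ref{lemma:prob_bound} at a fixed sample count $s$ to get the $2/T^3$ bound, then take a union bound over $t\in[T]$, $i\in[N]$, $j\in[K]$, $s\in[t]$. Your stack-of-rewards remark makes explicit what the paper leaves implicit when it writes $\mathbb{P}\bigl(T_{i,j}(t)=s,\ \ldots\bigr)\le 2/T^3$. One small slip: the final step needs $T(T+1)/2\le T^2$, i.e.\ $T+1\le 2T$, not $T+1\le T^2$ (which would only yield $NK$); your tighter observation that the sum over $t$ can be dropped to get $2NK/T^2$ is valid.
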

 \begin{proof}
  The proof is identical to that of the corresponding lemma in Appendix~\ref{app:proof_cpt_etgs},
  using Lemma~\ref{lemma:prob_bound} and a union bound over all $t \in [T]$, $i \in [N]$, $j \in
  [K]$, and $s \in [t]$.
  \end{proof}
\begin{lemma}
\label{lem:sample_complexity}
Consider a player $p_i$ and set $\bar{T}_i =\frac{3\log T}{2}
\left(\frac{4LM}{\Delta}\right)^{2/\alpha}.$
For any pair of arms $j, j'$ satisfying $\mu_{i,j} > \mu_{i,j'}$ with
$\sigma_i(a_j)\in[1,\sigma_i(m_i^*)]$, whenever
\[
T_i(t):=\min\{T_{i,j}(t),T_{i,j'}(t)\}>\bar{T}_i,
\]
it follows that
\[
\text{W-}\mathrm{UCB}_{i,j'}(t) < \text{W-}\mathrm{LCB}_{i,j}(t).
\]
\end{lemma}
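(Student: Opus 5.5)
The argument follows the contradiction template of Lemma~\ref{lemma:ti_cpt_etgs}, now applied to a single pair of arms. Throughout we work on the good event $\neg\mathcal{F}$. There, for every arm $a_k$ and round $t$, the confidence bounds defined in \eqref{eq:wucb} sandwich the true value:
\[
\text{W-LCB}_{i,k}(t) \le \mu_{i,k} \le \text{W-UCB}_{i,k}(t),
\]
and the interval width is exactly $2LM\bigl(\tfrac{3\log T}{2T_{i,k}(t)}\bigr)^{\alpha/2}$. Since this width is decreasing in $T_{i,k}(t)$, both arms $a_j$ and $a_{j'}$ have width at most $2LM\bigl(\tfrac{3\log T}{2T_i(t)}\bigr)^{\alpha/2}$, where $T_i(t)=\min\{T_{i,j}(t),T_{i,j'}(t)\}$.

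Suppose, for contradiction, that $\text{W-UCB}_{i,j'}(t)\ge \text{W-LCB}_{i,j}(t)$. Chaining the sandwich bounds gives
\[
\mu_{i,j}-2LM\Bigl(\tfrac{3\log T}{2T_i(t)}\Bigr)^{\alpha/2}\le \text{W-LCB}_{i,j}(t)\le \text{W-UCB}_{i,j'}(t)\le \mu_{i,j'}+2LM\Bigl(\tfrac{3\log T}{2T_i(t)}\Bigr)^{\alpha/2}.
\]
This yields $\Delta_{i,j,j'}=\mu_{i,j}-\mu_{i,j'}\le 4LM\bigl(\tfrac{3\log T}{2T_i(t)}\bigr)^{\alpha/2}$. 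Raising both sides to the power $2/\alpha$ and rearranging, we get $T_i(t)\le \frac{3\log T}{2}\bigl(\frac{4LM}{\Delta_{i,j,j'}}\bigr)^{2/\alpha}$.

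The remaining step, and the main obstacle, is to show $\Delta_{i,j,j'}\ge\Delta$. Once that holds, the inequality above becomes $T_i(t)\le\bar T_i$, contradicting $T_i(t)>\bar T_i$. The minimum gap $\Delta$ only controls pairs in which both arms lie among player $p_i$'s top $N+1$ ranked arms. The hypothesis $\sigma_i(a_j)\in[1,\sigma_i(m_i^*)]$ places $a_j$ at rank at most that of $m_i^*$, and Lemma~\ref{lemma:offline_gs} puts $m_i^*$ within the top $N$, so $a_j$ is within the top $N$.

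For $a_{j'}$ there are two cases. If $a_{j'}$ is within the top $N+1$, then $\Delta_{i,j,j'}\ge\Delta$ directly by the definition of $\Delta$. If $a_{j'}$ is ranked below $N+1$, write $a_{(N+1)}$ for the arm ranked $N+1$ and use monotonicity of the ordering:
\[
\mu_{i,j}-\mu_{i,j'}\ge \mu_{i,j}-\mu_{i,a_{(N+1)}}\ge\Delta,
\]
where the last step holds because $a_j$ and $a_{(N+1)}$ are distinct arms among the top $N+1$. In both cases $\Delta_{i,j,j'}\ge\Delta$, which completes the contradiction.

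The strict inequality $\mu_{i,j}>\mu_{i,j'}$ in the hypothesis guarantees that $a_j$ is ranked strictly above $a_{j'}$. This is what makes the pair distinct and the monotonicity argument valid in the second case.
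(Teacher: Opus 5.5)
Your proof is correct and follows the paper's route: the paper simply defers to the contradiction argument of Lemma~\ref{lemma:ti_cpt_etgs}. That argument sandwiches the true values on $\neg\mathcal{F}$, deduces $\Delta_{i,j,j'}\le 4LM\left(\frac{3\log T}{2T_i(t)}\right)^{\alpha/2}$, and contradicts $T_i(t)>\bar T_i$ via $\Delta_{i,j,j'}\ge\Delta$. Your explicit justification of $\Delta_{i,j,j'}\ge\Delta$ supplies a step the paper leaves implicit: you use $\sigma_i(m_i^*)\le N$ from Lemma~\ref{lemma:offline_gs}, and compare with the $(N+1)$-th ranked arm when $a_{j'}$ lies outside the top $N+1$.
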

\begin{proof}
 Proof is identical to Lemma~\ref{lemma:ti_cpt_etgs}
  \end{proof}

\begin{lemma}
\label{lem:ucb_ordering}
Under the event $\neg\mathcal{F}$, at every round $t$,
\[
\text{W-}\mathrm{UCB}_{i,j}(t) < \text{W-}\mathrm{LCB}_{i,j'}(t)
\quad \text{implies} \quad
\mu_{i,j} < \mu_{i,j'}.
\]
\end{lemma}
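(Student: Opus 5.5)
This is the same sandwich argument already used for the unnumbered lemma in Appendix~\ref{app:proof_cpt_etgs}, so the plan is to repeat it in the present notation. The only ingredient is the definition of the good event $\neg\mathcal{F}$ introduced at the start of this appendix. On $\neg\mathcal{F}$, for every round $t$ and every arm $a_k$ with $T_{i,k}(t)\ge 1$, we have
\[
\bigl|\hat{\mu}_{i,k}(t)-\mu_{i,k}\bigr| \le LM\left(\frac{3\log T}{2T_{i,k}(t)}\right)^{\alpha/2}.
\]

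First, I will rewrite this deviation bound using the confidence bounds in~\eqref{eq:wucb}. Rearranging gives $\text{W-}\mathrm{LCB}_{i,k}(t)\le \mu_{i,k}\le \text{W-}\mathrm{UCB}_{i,k}(t)$ for each arm $a_k$. I will apply this twice: the upper inequality to $k=j$ and the lower inequality to $k=j'$.

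Second, I will chain these with the hypothesis:
\[
\mu_{i,j}\le \text{W-}\mathrm{UCB}_{i,j}(t) < \text{W-}\mathrm{LCB}_{i,j'}(t)\le \mu_{i,j'},
\]
which yields $\mu_{i,j}<\mu_{i,j'}$ strictly.

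The only point needing care is the degenerate case $T_{i,j}(t)=0$ or $T_{i,j'}(t)=0$. By convention the bounds are then $+\infty$ or $-\infty$ respectively. In that case the strict inequality $\text{W-}\mathrm{UCB}_{i,j}(t)<\text{W-}\mathrm{LCB}_{i,j'}(t)$ cannot hold, so the implication is vacuous. This is the sole potential obstacle, and it is trivial; no other step requires more than these inequalities.
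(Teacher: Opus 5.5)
Your proposal is correct and matches the paper's proof: on $\neg\mathcal{F}$ each true value lies in $[\text{W-}\mathrm{LCB}_{i,k}(t),\text{W-}\mathrm{UCB}_{i,k}(t)]$, and the chain $\mu_{i,j}\le \text{W-}\mathrm{UCB}_{i,j}(t)<\text{W-}\mathrm{LCB}_{i,j'}(t)\le\mu_{i,j'}$ gives the claim. Your handling of the unsampled case $T_{i,j}(t)=0$ or $T_{i,j'}(t)=0$, where the $\pm\infty$ convention makes the hypothesis vacuous, is a small point the paper leaves implicit.
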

 \begin{proof}
  Under $\neg\mathcal{F}$, the true value satisfies $\text{W-}\mathrm{LCB}_{i,j}(t) \le \mu_{i,j} \le
  \text{W-}\mathrm{UCB}_{i,j}(t)$ for all $i,j,t$. Therefore $\mu_{i,j} \le \text{W-}\mathrm{UCB}_{i,j}(t) <
  \text{W-}\mathrm{LCB}_{i,j'}(t) \le \mu_{i,j'}$.
  \end{proof}

\section{Proof of Lemma~\ref{lemma:lower_bound}}
\label{app:lower_bound}
\begin{proof}
We prove the result by reduction to the single-player distorted reward multi-armed bandit problem.
Consider an instance where all arms share the same preference ordering over players, $p_1 \succ p_2 \succ \dots \succ p_N.$
Thus, whenever multiple players propose to the same arm, the arm accepts the player with the highest priority.

Under this configuration, player $p_1$ always has priority over all other players and therefore interacts with the arms without competition. Consequently, the learning problem faced by $p_1$ reduces to a classical stochastic multi-armed bandit problem with $K$ arms and distorted rewards.

For such CPT distorted bandits, Theorem~3 of \cite{kolla2016bandit} establishes that any uniformly efficient algorithm must satisfy
\[
\liminf_{T\to\infty}
\frac{\mathbb{E}[\texttt{CPT-Reg}_1(T)]}{\log T}
\ge
\sum_{j \in [K], j \neq m_1^*}
\frac{(LM)^{2/\alpha}}{4\Delta_{1,m_1^*,j}^{2/\alpha-1}},
\]
which completes the proof.
\end{proof}
\section{Proof of Theorem \ref{th:known}}
\label{app:proof_known}
\begin{lemma}
\label{lemma:bound_C}
Let $\tilde{\mu}_{i,j}$ denote the oracle empirical weight-distorted reward
computed using the stochastic (uncorrupted) rewards $Y^s_{i,j}$, and let
$\hat{\mu}_{i,j}$ denote the empirical weight-distorted reward computed
using the corrupted rewards $Y_{i,j}$.

Assuming rewards are non-negative by Assumption~\ref{ass:positive_rewards}. Then only the positive distortion terms
contribute to the estimator. Under this setting, the difference between
the oracle and corrupted empirical estimators satisfies
\[
\left|\hat{\mu}_{i,j}-\tilde{\mu}_{i,j}\right|
\le
\frac{CL}{T_{i,j}^{\alpha}} .
\]
\end{lemma}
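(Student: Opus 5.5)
The plan is to work directly with the order-statistic form of Algorithm~\ref{alg:distorted_estimator} rather than with the integral form \eqref{eq:cpt}. By Assumption~\ref{ass:positive_rewards}, both the stochastic samples $Y^s_{i,j,1},\dots,Y^s_{i,j,l}$ and the corrupted samples $Y_{i,j,1},\dots,Y_{i,j,l}$ are non-negative, where $l=T_{i,j}$. Hence $l_b=0$ in both computations and $\hat{\mu}^-_{i,j}=\tilde{\mu}^-_{i,j}=0$. Both estimators are then the same linear functional of the sorted sample vector:
\[
\hat{\mu}_{i,j}-\tilde{\mu}_{i,j}=\sum_{k=1}^{l}\big(Y_{i,j,(k)}-Y^s_{i,j,(k)}\big)\,\omega_k,
\qquad
\omega_k:=w\!\left(\tfrac{l+1-k}{l}\right)-w\!\left(\tfrac{l-k}{l}\right).
\]
The proof therefore reduces to bounding each weight and controlling the $\ell_1$ distance between the two sorted vectors.

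\emph{Step 1 (weights).} By Assumption~\ref{ass:holder}, since the two arguments differ by exactly $1/l$, we get $|\omega_k|\le L\,l^{-\alpha}$ for every $k$. This needs no monotonicity of $w$. The triangle inequality then gives
\[
|\hat{\mu}_{i,j}-\tilde{\mu}_{i,j}|\le \frac{L}{l^{\alpha}}\sum_{k=1}^{l}\big|Y_{i,j,(k)}-Y^s_{i,j,(k)}\big|.
\]

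\emph{Step 2 (sorting is an $\ell_1$ contraction).} I would show that $\sum_k |x_{(k)}-y_{(k)}|\le \sum_k |x_k-y_k|$ for any two real vectors paired by sample index. The cleanest route is a swap argument. Start from the pairing $(x_k,y_k)$ and repeatedly fix any inversion, that is, a pair with $x_a<x_b$ but $y_a>y_b$. Exchanging the $y$-partners does not increase $|x_a-y_a|+|x_b-y_b|$; this is a four-point check, or follows from convexity of $|\cdot|$ (the classical rearrangement inequality for the $\ell_1$ cost). After finitely many swaps both vectors are co-sorted, which yields the claim. An alternative route is the identity $\sum_k|x_{(k)}-y_{(k)}|=l\int|\hat F_x-\hat F_y|\,dz$ combined with a coupling bound.

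\emph{Step 3 (corruption budget).} The samples of arm $j$ are collected in distinct rounds, and in each of them $|Y_{i,j,k}-Y^s_{i,j,k}|\le \max_{j'}|r_{i,j'}(t)-r^S_{i,j'}(t)|$. Assumption~\ref{ass:bound_C} therefore gives $\sum_k|Y_{i,j,k}-Y^s_{i,j,k}|\le C$. Chaining Steps 1--3 yields $|\hat{\mu}_{i,j}-\tilde{\mu}_{i,j}|\le CL/T_{i,j}^{\alpha}$.

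The main obstacle is Step 2: the estimator sorts the corrupted and uncorrupted samples separately, so the $k$-th order statistics need not come from the same round. The rearrangement/swap argument is exactly what removes this mismatch. The other subtle point is the requirement $l_b=0$. If the corruption could push samples negative, the two estimators would use different splits between gains and losses, and the single-sum representation would fail. This is precisely why the lemma invokes Assumption~\ref{ass:positive_rewards}; the accompanying remark notes that negative values can be clipped to $0$, which only shrinks the distances in Step 3.
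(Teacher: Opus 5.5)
Your proposal is correct and follows the same route as the paper. Both arguments write the two estimators in order-statistic form with $l_b=0$, bound each weight difference by $L/T_{i,j}^{\alpha}$ via H\"older continuity, use the fact that sorting does not increase the $\ell_1$ deviation, and then invoke the corruption budget. The only difference is that you justify the sorting-contraction step (via the swap/Monge argument or the $W_1$ identity), which the paper simply asserts.
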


\begin{proof}
The oracle empirical weight-distorted estimator is
\begin{align*}
    \tilde{\mu}_{i,j}
=
\sum_{k=1}^{T_{i,j}}
Y^s_{[i,j,k]}
\left[
w\!\left(\frac{T_{i,j}-k+1}{T_{i,j}}\right)
-
w\!\left(\frac{T_{i,j}-k}{T_{i,j}}\right)
\right],
\end{align*}

while the estimator computed from corrupted rewards is
\begin{align*}
    \hat{\mu}_{i,j}
=
\sum_{k=1}^{T_{i,j}}
Y_{[i,j,k]}
\left[
w\!\left(\frac{T_{i,j}-k+1}{T_{i,j}}\right)
-
w\!\left(\frac{T_{i,j}-k}{T_{i,j}}\right)
\right]
\end{align*}

Taking the absolute difference and applying the triangle inequality gives
  \begin{align*}
  \left|\hat{\mu}_{i,j}-\tilde{\mu}_{i,j}\right|
  &\le
  \sum_{k=1}^{T_{i,j}}
  \left|Y_{[i,j,k]}-Y^s_{[i,j,k]}\right|
  \left|
  w\!\left(\frac{T_{i,j}-k+1}{T_{i,j}}\right)
  -
  w\!\left(\frac{T_{i,j}-k}{T_{i,j}}\right)
  \right|.
  \end{align*}

  Using the H\"older continuity of the distortion function $w(\cdot)$,
  each weight difference satisfies
  \[
  \left|
  w\!\left(\frac{k}{T_{i,j}}\right)
  -
  w\!\left(\frac{k-1}{T_{i,j}}\right)
  \right|
  \le
  \frac{L}{T_{i,j}^{\alpha}} .
  \]
  Since sorting does not increase the total absolute deviation between sequences $\sum_{k=1}^{T_{i,j}} |Y_{[i,j,k]} -
  Y^s_{[i,j,k]}| \le \sum_{k=1}^{T_{i,j}} |Y_{i,j,k} - Y^s_{i,j,k}| \le C$, where the last
  inequality follows from the corruption budget. Combining these bounds:
  \[
  \left|\hat{\mu}_{i,j}-\tilde{\mu}_{i,j}\right|
  \le
  \frac{L}{T_{i,j}^{\alpha}} \cdot C
  =
  \frac{CL}{T_{i,j}^{\alpha}} .
  \]
  \end{proof}

\begin{lemma}
\label{lemma:bound_F_known}
The upper bound for the probability of inaccurately estimating preferences is
\begin{equation*}
\mathbb{P}(\mathcal{F}) \le \frac{2NK}{T}.
\end{equation*}
\end{lemma}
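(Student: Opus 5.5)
The bad event $\mathcal{F}$ in the known-corruption setting is the event that, for some $t\in[T]$, $i\in[N]$, $j\in[K]$, the true CPT value falls outside the enlarged interval:
\[
\big|\hat{\mu}_{i,j}(t)-\mu_{i,j}\big| > LM\Big(\tfrac{3\log T}{2T_{i,j}(t)}\Big)^{\alpha/2} + \tfrac{CL}{T_{i,j}(t)^{\alpha}}.
\]
The plan is to reduce this to the uncorrupted concentration event already handled in Appendix~\ref{app:proof_cpt_etgs}. To do so, introduce the oracle estimator $\tilde{\mu}_{i,j}(t)$ built from the stochastic rewards $r^S_{i,j}$ at the same matched rounds. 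Then split the deviation by the triangle inequality:
\[
|\hat{\mu}_{i,j}(t)-\mu_{i,j}| \le |\hat{\mu}_{i,j}(t)-\tilde{\mu}_{i,j}(t)| + |\tilde{\mu}_{i,j}(t)-\mu_{i,j}|.
\]

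The first term is controlled deterministically by Lemma~\ref{lemma:bound_C}. It is at most $CL/T_{i,j}(t)^{\alpha}$ on every sample path, using Assumption~\ref{ass:bound_C} and the nonnegativity in Assumption~\ref{ass:positive_rewards}. Hence $\mathcal{F}\subseteq\tilde{\mathcal{F}}$, where $\tilde{\mathcal{F}}$ is the event that some $|\tilde{\mu}_{i,j}(t)-\mu_{i,j}|$ exceeds $LM(3\log T/(2T_{i,j}(t)))^{\alpha/2}$. This is exactly the bad event of Theorem~\ref{th:cpt_etgs}, but evaluated on the uncorrupted samples.

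For $\tilde{\mathcal{F}}$, fix $t,i,j$ and a count value $s\in[t]$. Apply Lemma~\ref{lemma:prob_bound} with $m=s$ and $\varepsilon=LM(3\log T/(2s))^{\alpha/2}$. This gives probability at most $2e^{-3\log T}=2/T^3$. A union bound over $t\in[T]$, $i\in[N]$, $j\in[K]$ and $s\in[t]$ then yields
\[
\sum_{t}\sum_{i}\sum_{j} t\cdot\frac{2}{T^3} \le \frac{2NK}{T},
\]
as in the earlier computation.

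The main obstacle is justifying Lemma~\ref{lemma:prob_bound} for the oracle samples. The adversary is adaptive, and the times at which arm $j$ is matched depend on corrupted history, so one must check that the first $s$ stochastic draws $r^S_{i,j}$ for the pair $(i,j)$ are still i.i.d.\ from $\mathcal{F}_{i,j}$. I would argue this by the standard stack-of-rewards construction: pre-draw an i.i.d.\ sequence per pair and let the $s$-th match consume the $s$-th draw. Then $\tilde{\mu}_{i,j}$ with $s$ samples is a fixed function of the first $s$ draws, which is exactly why the union bound is taken over $s$ rather than using the random $T_{i,j}(t)$ directly.

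A secondary point is that Lemma~\ref{lemma:bound_C} is applied with the total budget $C$ over all rounds. This bounds the per-arm cumulative corruption, since the per-round maximum over $j$ dominates each individual arm's corruption.
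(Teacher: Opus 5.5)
Your proposal is correct and takes essentially the same route as the paper. Both arguments pass through the oracle estimator $\tilde{\mu}_{i,j}(t)$ via the triangle inequality, remove the corruption term deterministically using Lemma~\ref{lemma:bound_C}, and then apply Lemma~\ref{lemma:prob_bound} with a union bound over $t,i,j$ and the count value $s\in[t]$ to obtain $2NK/T$. Your two extra remarks make explicit points the paper's proof uses only implicitly: the stack-of-rewards construction, which justifies treating the oracle samples as i.i.d.\ under an adaptive adversary, and the observation that the per-round maximum in Assumption~\ref{ass:bound_C} bounds each arm's cumulative corruption by $C$.
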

\begin{proof}
\begin{align*}
\mathbb{P}(\mathcal{F})
&\le
\mathbb{P}\!\left(
\exists t,i,j:
\left|\hat{\mu}_{i,j}(t)-\mu_{i,j}\right|
>
LM\!\left(
\frac{3\log T}{2T_{i,j}(t)}
\right)^{\alpha/2}+\frac{CL}{T_{i,j}^{\alpha}(t)}
\right)
\nonumber\\
&\le
\mathbb{P}\!\left(
\exists t,i,j:
\left|\tilde{\mu}_{i,j}(t)-\mu_{i,j}\right|
+
\left|\hat{\mu}_{i,j}(t)-\tilde{\mu}_{i,j}\right|
>LM\!\left(
\frac{3\log T}{2T_{i,j}(t)}
\right)^{\alpha/2}
+\frac{CL}{T_{i,j}^{\alpha}(t)}
\right)
\nonumber\\
&\le
\mathbb{P}\!\left(
\exists t,i,j:
\left|\tilde{\mu}_{i,j}(t)-\mu_{i,j}\right|
>
LM\!\left(
\frac{3\log T}{2T_{i,j}(t)}
\right)^{\alpha/2}
\right) \\
&+
\mathbb{P}\!\left(
\exists t,i,j:
\left|\hat{\mu}_{i,j}(t)-\tilde{\mu}_{i,j}\right|
>
\frac{CL}{T_{i,j}^{\alpha}(t)}
\right)
\nonumber\\
&=
\mathbb{P}\!\left(
\exists t,i,j:
\left|\tilde{\mu}_{i,j}(t)-\mu_{i,j}\right|
>
LM\!\left(
\frac{3\log T}{2T_{i,j}(t)}
\right)^{\alpha/2}
\right)
\nonumber\\
&\le
\sum_{t}\sum_{i}\sum_{j}\sum_{s=1}^{t}
\mathbb{P}\!\left(
T_{i,j}(t)=s,
\left|\tilde{\mu}_{i,j}(t)-\mu_{i,j}\right|
>
LM\!\left(
\frac{3\log T}{2T_{i,j}(t)}
\right)^{\alpha/2}
\right)
\nonumber\\
&\le
\sum_{t\in[T]}\sum_{i\in[N]}\sum_{j\in[K]}
t \cdot 2\exp(-3\ln T)
\nonumber\\
&\le
\frac{2NK}{T},
\end{align*}
The equality follows from the definition of the corruption budget $C$, which ensures that 
$\left|\hat{\mu}_{i,j}(t)-\tilde{\mu}_{i,j}\right|\le \frac{CL}{T_{i,j}^{\alpha}(t)}$ by Lemma~\ref{lemma:bound_C}. 
The exponential bound is obtained by using Lemma~\ref{lemma:prob_bound}

\end{proof}

\begin{lemma}
Conditional on $\neg \mathcal{F}$,
\[
\mathrm{\text{W-UCB}}_{i,j}(t) < \mathrm{\text{W-LCB}}_{i,j'}(t)
\quad \Rightarrow \quad
\mu_{i,j} < \mu_{i,j'} .
\]
\end{lemma}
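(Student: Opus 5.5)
The plan is to mirror the corresponding lemma of Appendix~\ref{app:proof_cpt_etgs}. The only change is that the confidence widths are now the enlarged ones used in the known-corruption setting. Here $\mathcal{F}$ is the bad event appearing in Lemma~\ref{lemma:bound_F_known}, namely
\[
\mathcal{F}=\Big\{\exists\, t,i,j:\ \left|\hat{\mu}_{i,j}(t)-\mu_{i,j}\right| > LM\Big(\tfrac{3\log T}{2T_{i,j}(t)}\Big)^{\alpha/2}+\tfrac{CL}{T_{i,j}^{\alpha}(t)}\Big\}.
\]
The first step is to observe that, on $\neg\mathcal{F}$, the corrupted estimate $\hat{\mu}_{i,j}(t)$ lies within $LM\big(\frac{3\log T}{2T_{i,j}(t)}\big)^{\alpha/2}+\frac{CL}{T_{i,j}^{\alpha}(t)}$ of $\mu_{i,j}$ for every $t,i,j$. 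This radius is exactly the half-width of the enlarged interval $[\text{W-LCB}_{i,j}(t),\text{W-UCB}_{i,j}(t)]$.

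If one prefers to think of $\mathcal{F}$ as the uncorrupted concentration failure, i.e.\ with $\tilde{\mu}_{i,j}$ in place of $\hat{\mu}_{i,j}$ and without the $C$ term, the same sandwich follows from the triangle inequality
\[
|\hat{\mu}_{i,j}-\mu_{i,j}|\le|\tilde{\mu}_{i,j}-\mu_{i,j}|+|\hat{\mu}_{i,j}-\tilde{\mu}_{i,j}|.
\]
The second term is bounded deterministically by $\frac{CL}{T_{i,j}^{\alpha}}$ by Lemma~\ref{lemma:bound_C}. Either reading therefore gives, for all $t,i,j$,
\[
\text{W-LCB}_{i,j}(t)\le \mu_{i,j}\le \text{W-UCB}_{i,j}(t).
\]

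The conclusion is then a one-line chain. Assume $\text{W-UCB}_{i,j}(t)<\text{W-LCB}_{i,j'}(t)$. Applying the sandwich to arm $j$ on the left and arm $j'$ on the right gives
\[
\mu_{i,j}\le \text{W-UCB}_{i,j}(t)<\text{W-LCB}_{i,j'}(t)\le\mu_{i,j'},
\]
which is the claim.

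There is no real obstacle here. The only point requiring care is consistency of definitions. The corruption correction from Lemma~\ref{lemma:bound_C} must be applied with the same sample count $T_{i,j}(t)$ that appears in the interval. That lemma also relies on Assumption~\ref{ass:positive_rewards}, which ensures that only the gain part of the estimator is present and so the H\"older bound on the weight increments applies. Both conditions are already in force, so the argument closes.
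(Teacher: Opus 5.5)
Your proof is correct and follows the paper's argument. The paper also reads $\neg\mathcal{F}$ as concentration of the oracle estimator $\tilde{\mu}_{i,j}(t)$ and uses Lemma~\ref{lemma:bound_C} to absorb the corruption into the enlarged widths, obtaining $\text{W-LCB}_{i,j}(t)\le\mu_{i,j}\le\text{W-UCB}_{i,j}(t)$, and then closes with the same one-line chain. Your alternative reading of $\mathcal{F}$ directly in terms of $\hat{\mu}_{i,j}$ with the enlarged radius is equally valid and just skips the triangle-inequality step.
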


\begin{proof}
Condition on the event $\neg\mathcal{F}$. Then for every $t \in [T]$, $i \in [N]$, and $j \in [K]$, the estimator $\tilde{\mu}_{i,j}(t)$ satisfies
\[
\bigl|\tilde{\mu}_{i,j}(t) - \mu_{i,j}\bigr|
\le
LM\!\left(
\frac{3\log T}{2T_{i,j}(t)}
\right)^{\alpha/2}.
\]

We first derive a lower bound on the confidence interval. Using the definition of $\mathrm{LCB}_{i,j}(t)$ and Lemma~\ref{lemma:bound_C}, we obtain
\begin{align*}
\mathrm{LCB}_{i,j}(t)
&=
\hat{\mu}_{i,j}(t)
-
LM\!\left(
\frac{3\log T}{2T_{i,j}(t)}
\right)^{\alpha/2}
-
\frac{CL}{T_{i,j}^{\alpha}(t)} \\
&\le
\tilde{\mu}_{i,j}(t)
-
LM\!\left(
\frac{3\log T}{2T_{i,j}(t)}
\right)^{\alpha/2}
\le
\mu_{i,j}.
\end{align*}

Similarly, for the upper confidence bound we have
\begin{align*}
\mu_{i,j}
&\le
\tilde{\mu}_{i,j}(t)
+
LM\!\left(
\frac{3\log T}{2T_{i,j}(t)}
\right)^{\alpha/2} \\
&\le
\hat{\mu}_{i,j}(t)
+
LM\!\left(
\frac{3\log T}{2T_{i,j}(t)}
\right)^{\alpha/2}
+
\frac{CL}{T_{i,j}^{\alpha}(t)}
=
\mathrm{UCB}_{i,j}(t).
\end{align*}

Therefore, under $\neg\mathcal{F}$, the true value $\mu_{i,j}$ lies inside the confidence interval $[\mathrm{LCB}_{i,j}(t),\mathrm{UCB}_{i,j}(t)]$. Consequently,
\[
\mu_{i,j} \le \mathrm{UCB}_{i,j}(t)
<
\mathrm{LCB}_{i,j'}(t)
\le
\mu_{i,j'} .
\]

This completes the proof.
\end{proof}

\begin{lemma}
\label{lemma:ti_known}
In round $t$, let $T_i(t)=\min_{j\in[K]}T_{i,j}(t)$ and
\[
\bar{T}_i=\frac{3\log T}{2\,\Delta^{2/\alpha}}(8LM)^{\frac{2}{\alpha}}  + \left(\frac{8CL}{\Delta}\right)^{\frac{1}{\alpha}}
\]
Conditional on $\neg\mathcal{F}$, if $T_i(t)>\bar{T}_i$, we have
\[
\mathrm{\text{W-LCB}}_{i,\rho_{i,k}}(t)>\mathrm{\text{W-UCB}}_{i,\rho_{i,k+1}}(t), \quad \forall k\in[N],
\]
and
\[
\mathrm{\text{W-LCB}}_{i,\rho_{i,N}}(t)>\mathrm{\text{W-UCB}}_{i,\rho_{i,k}}(t), \quad \forall k=N+1,N+2,\dots,K.
\]
\end{lemma}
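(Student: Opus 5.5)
We argue by contradiction, mirroring the proof of Lemma~\ref{lemma:ti_cpt_etgs}, with the widened confidence intervals of the known-corruption algorithm. Suppose that for some $k\in[N]$ we have $\text{W-LCB}_{i,\rho_{i,k}}(t)\le \text{W-UCB}_{i,\rho_{i,k+1}}(t)$, or that for some $k\in\{N+1,\dots,K\}$ we have $\text{W-LCB}_{i,\rho_{i,N}}(t)\le \text{W-UCB}_{i,\rho_{i,k}}(t)$. Call $j'$ the arm on the left-hand side and $j$ the arm on the right-hand side. In either case $\mu_{i,j'}>\mu_{i,j}$. Since $\rho_{i,j'}$ lies among the top $N$ arms, and the next-ranked arm lies in the top $N+1$, we get $\Delta_{i,j,j'}\ge\Delta$. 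For $k>N$ this holds because $\mu_{i,\rho_{i,N}}-\mu_{i,\rho_{i,k}}\ge \mu_{i,\rho_{i,N}}-\mu_{i,\rho_{i,N+1}}\ge\Delta$.

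Under $\neg\mathcal{F}$, the preceding lemma shows that each true value lies in its widened interval. Hence $\text{W-LCB}_{i,j'}(t)\ge \mu_{i,j'}-2\beta_{j'}(t)$ and $\text{W-UCB}_{i,j}(t)\le \mu_{i,j}+2\beta_j(t)$, where
\[
\beta_j(t)=LM\left(\frac{3\log T}{2T_{i,j}(t)}\right)^{\alpha/2}+\frac{CL}{T_{i,j}(t)^{\alpha}}.
\]
Both terms of $\beta_j(t)$ are decreasing in $T_{i,j}(t)$, so each is bounded by the same expression evaluated at $T_i(t)$. Chaining the inequalities gives
\[
\Delta\le \Delta_{i,j,j'}\le 4LM\left(\frac{3\log T}{2T_i(t)}\right)^{\alpha/2}+\frac{4CL}{T_i(t)^{\alpha}}.
\]

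The remaining step is to show that this inequality fails once $T_i(t)>\bar T_i$. We split the gap budget in half and show that each term is strictly below $\Delta/2$.

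For the stochastic term, $T_i(t)>\frac{3\log T}{2}\left(\frac{8LM}{\Delta}\right)^{2/\alpha}$ gives $\left(\frac{3\log T}{2T_i(t)}\right)^{\alpha/2}<\frac{\Delta}{8LM}$. Hence $4LM(\cdot)^{\alpha/2}<\Delta/2$.

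For the corruption term, $T_i(t)>\left(\frac{8CL}{\Delta}\right)^{1/\alpha}$ gives $T_i(t)^{\alpha}>8CL/\Delta$. Hence $4CL/T_i(t)^{\alpha}<\Delta/2$.

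Since $\bar T_i$ is the sum of these two nonnegative thresholds, $T_i(t)>\bar T_i$ exceeds each one separately. The two strict bounds then sum to $<\Delta$, which is the desired contradiction. Note the constant works out with room to spare: the factor $4$ arises from two-sided widths times two arms, and the $8$ in $\bar T_i$ absorbs the halving.

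The main obstacle I expect is bookkeeping rather than any new idea. First, the interval-containment step must use the corrupted estimator $\hat\mu$, relating it to the oracle $\tilde\mu$ through Lemma~\ref{lemma:bound_C}. Second, the gap argument has to be organized so that both the $N+1$-th arm comparisons and the $k>N$ comparisons reduce to the minimum gap $\Delta$. The threshold arithmetic itself is routine once the width is split into its two monotone pieces.
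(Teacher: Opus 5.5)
Your proposal is correct and follows the paper's proof essentially step for step. You argue by contradiction, use containment of $\mu_{i,j}$ in the widened intervals under $\neg\mathcal{F}$ to get $\Delta\le\Delta_{i,j,j'}\le 4LM\bigl(\frac{3\log T}{2T_i(t)}\bigr)^{\alpha/2}+\frac{4CL}{T_i(t)^{\alpha}}$, and then show each term is below $\Delta/2$ once $T_i(t)>\bar T_i$. Your explicit monotonicity step, which bounds both widths at $T_i(t)$, and your chain $\mu_{i,\rho_{i,N}}-\mu_{i,\rho_{i,k}}\ge\mu_{i,\rho_{i,N}}-\mu_{i,\rho_{i,N+1}}\ge\Delta$ for $k>N$ make precise two points the paper leaves implicit.
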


\begin{proof}
Assume for contradiction that the ordering induced by the confidence bounds is violated. 
That is, there exists $k \in [N]$ such that
\[
\mathrm{W\text{-}LCB}_{i,\rho_{i,k}}(t)
\le
\mathrm{W\text{-}UCB}_{i,\rho_{i,k+1}}(t),
\]
or there exists $k \in \{N+1,\dots,K\}$ for which
\[
\mathrm{W\text{-}LCB}_{i,\rho_{i,N}}(t)
\le
\mathrm{W\text{-}UCB}_{i,\rho_{i,k}}(t).
\]

Let $j'$ denote the arm corresponding to the left-hand side of the inequality and $j$ the arm on the right-hand side.

Condition on the event $\neg \mathcal{F}$. By the definition of the weighted confidence bounds, the true means satisfy
\begin{align*}
\mathrm{W\text{-}LCB}_{i,j'}(t)
&\ge
\mu_{i,j'}
-
2LM\!\left(
\frac{3\log T}{2T_{i,j}(t)}
\right)^{\alpha/2}
-
\frac{2CL}{T_i^{\alpha}(t)}, \\
\mathrm{W\text{-}UCB}_{i,j}(t)
&\le
\mu_{i,j}
+
2LM\!\left(
\frac{3\log T}{2T_{i,j}(t)}
\right)^{\alpha/2}
+
\frac{2CL}{T_i^{\alpha}(t)} .
\end{align*}

Combining these inequalities with the assumed violation of the ordering yields
\[
\mu_{i,j'}-\mu_{i,j}
\le
4LM\!\left(
\frac{3\log T}{2T_{i,j}(t)}
\right)^{\alpha/2}
+
\frac{4CL}{T_i^{\alpha}(t)} .
\]

Recall that $\Delta_{i,j,j'} = \mu_{i,j'}-\mu_{i,j}$. Hence
\[
\Delta_{i,j,j'}
\le
4LM\!\left(
\frac{3\log T}{2T_{i,j}(t)}
\right)^{\alpha/2}
+
\frac{4CL}{T_i^{\alpha}(t)} .
\]

Now consider the case where
\[
T_i(t) >
\frac{3\log T}{2\,\Delta^{2/\alpha}}(8LM)^{2/\alpha}
+
\left(\frac{8CL}{\Delta}\right)^{1/\alpha}.
\]

Substituting this lower bound into the right-hand side above gives
\[
\Delta_{i,j,j'}
<
\frac{\Delta}{2} + \frac{\Delta}{2}
=
\Delta .
\]

However, by definition the gap satisfies $\Delta_{i,j,j'} \ge \Delta$, which leads to a contradiction. Therefore the assumed violation cannot occur, and the ordering induced by the confidence bounds must hold whenever $T_i(t)>\bar{T}_i$.
\end{proof}

\begin{proof}
\textit{(of Theorem~\ref{th:known})}   
In Algorithm~\ref{alg:cpt_etgs}, players propose to arms in a round-robin manner during Phase~2. 
As a result, no collisions occur, and every player is successfully matched in each exploration round. 
Consequently, all players transition to Phase~3 simultaneously. 
Let $\ell_{\max}$ denote the largest sub-phase index of Phase~2, after which the players' preference rankings are estimated accurately.
The optimal stable regret for player i can be bounded as follows,

\begin{align*}
\text{CPT-}\text{Reg}_i(T)
&= \mathbb{E}\!\left[\sum_t (\mu_{i,m_i^*}-\mu_{i,\bar{A}_i(t)})\right] \\
&\le \mathbb{E}\!\left[\sum_t \mathbb{I}\{\bar{A}(t)\neq m^*\}\cdot \Delta_{i,\max}\right] \\
&\le N\Delta_{i,\max}
+ \mathbb{E}\!\left[\sum_{t=N+1}^{T}\mathbb{I}\{\bar{A}(t)\neq m^*\}\,\middle|\, \neg\mathcal{F}\right]\cdot \Delta_{i,\max}
+ \mathbb{P}(\mathcal{F})\cdot T \cdot \Delta_{i,\max} \\
&\le N\Delta_{i,\max}
+ \mathbb{E}\!\left[\sum_{k=1}^{L_{\max}} (2^k+1) + N^2\,\middle|\, \neg\mathcal{F}\right]\cdot \Delta_{i,\max}
+ \mathbb{P}(\mathcal{F})\cdot T\cdot \Delta_{i,\max} \\
&\le
N\Delta_{i,\max}
+ \mathbb{E}\!\left[\sum_{k=1}^{L_{\max}} (2^k+1) + N^2\middle|\, \neg\mathcal{F}\right]\cdot \Delta_{i,\max}
+ 2NK\cdot \Delta_{i,\max}.
\end{align*}
The last inequality follows from Lemma~\ref{lemma:bound_F_known}.

From Lemma~\ref{lemma:ti_known}, Phase~2 terminates after at most $L_{\max}$ sub-phases, where
\[
L_{\max}
=
\min\left\{
k :
\sum_{k'=1}^{k} d_{k'}
\ge
\frac{3K\log T}{2\,\Delta^{2/\alpha}}(8LM)^{\frac{2}{\alpha}}
+
K\left(\frac{8CL}{\Delta}\right)^{\frac{1}{\alpha}}
\right\}.
\]

Recall that in Algorithm~\ref{alg:cpt_etgs} the exploration length of the $k$-th sub-phase grows geometrically, i.e., $d_k = 2^k$. Using this definition, the cumulative exploration length up to sub-phase $L_{\max}$ satisfies
\[
\sum_{k'=1}^{L_{\max}} 2^{k'}
\le
\frac{3K\log T}{\Delta^{2/\alpha}}(8LM)^{\frac{2}{\alpha}}
+
2K\left(\frac{8CL}{\Delta}\right)^{\frac{1}{\alpha}} .
\]

Consequently, the number of sub-phases can be bounded by
\[
L_{\max}
=
\log\!\left(
\frac{3K\log T}{\Delta^{2/\alpha}}(8LM)^{\frac{2}{\alpha}}
+
2K\left(\frac{8CL}{\Delta}\right)^{\frac{1}{\alpha}}
\right).
\]

Using this bound on $L_{\max}$, the regret of player $i$ can be bounded as
\begin{align*}
\text{CPT-}\text{Reg}_i(T)
&\le
\Bigg(
\frac{(8LM)^{2/\alpha}\, 3K\log T}{\Delta^{2/\alpha}}
+
2K\left(\frac{8CL}{\Delta}\right)^{1/\alpha}
\\
&\qquad
+
\log\!\Bigg(
\frac{(8LM)^{2/\alpha}\, 3K\log T}{\Delta^{2/\alpha}}
+
2K\left(\frac{8CL}{\Delta}\right)^{1/\alpha}
\Bigg)
\Bigg)\Delta_{i,\max}
\\
&\qquad
+
N\Delta_{i,\max}
+
N^2\Delta_{i,\max}
+
2NK\Delta_{i,\max}.
\end{align*}
\end{proof}

\section{Proof of Theorem \ref{th:unknown}}
 \label{app:proof_unknown}
For every layer $\ell \in [\log T]$, we use the notation $\hat{\mu}^{\ell}_{i,j}(t)$,
$T^{\ell}_{i,j}(t)$, $\mathrm{UCB}^{\ell}_{i,j}(t)$, $\mathrm{LCB}^{\ell}_{i,j}(t)$
to represent the values of $\hat{\mu}^{\ell}_{i,j}$, $T^{\ell}_{i,j}$,
$\mathrm{UCB}^{\ell}_{i,j}$ and $\mathrm{LCB}^{\ell}_{i,j}$ upon completion of
round $t$, respectively. We introduce the following event:
\[
\mathcal{F}_S^{\ell} =
\left\{
\exists t\in[T], i\in[N], j\in[K] :
\left|\hat{\mu}^{\ell}_{i,j}(t)-\mu_{i,j}\right|
>
LM\!\left(
\frac{3\log T}{2T_{i,j}(t)}\right)^{\alpha/2} +\frac{2dL\log T}{(T^{\ell}_{i,j}(t))^\alpha}
\right\}
\]
which captures the scenario where some preference estimate deviates significantly from its true value over the time horizon
for layer $\ell$. Additionally, let $\mathcal{F}_C$ denote the event
that there exists an ETGS instance across layers $\ell$,
$\ell' \in [\log T]$ with sampling probability satisfying
$2^{-\ell'} \le 1/C$, such that the cumulative corruption actually incurred by some
player within this instance exceeds $d\log T + 2$.
We then define the overall failure event as
\[
\mathcal{F} =
\left(\bigcup_{\ell \in [\log T]} \mathcal{F}_S^{\ell}\right)
\cup \mathcal{F}_C .
\]

We now proceed to derive an upper bound on the probability of the
failure event $\mathcal{F}$.

\medskip
\begin{lemma}[{\cite[Lemma3.3]{wubandit}}]
\label{lemma:robust}
In Algorithm~\ref{alg:etgs_unknown}, the ETGS instance with sampling probability less than $1/C$ experiences,
w.p.\ at least $1 - 1/T$, cumulative corruption bounded by $d \log(T) + 2$ during the exploration phase.
\end{lemma}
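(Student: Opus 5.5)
We follow the argument of \cite[Lemma~3.3]{wubandit}, which depends only on the sampling and corruption mechanism and never on how estimates are formed. The CPT distortion therefore plays no role here. Fix a player $p_i$ and a layer $\ell$ whose sampling probability satisfies $2^{-\ell}\le 1/C$. In each round $t$ of the exploration phase the adversary chooses corruption $c_t=\max_j|r_{i,j}(t)-r^S_{i,j}(t)|$, with $\sum_t c_t\le C$ by Assumption~\ref{ass:bound_C}.

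The layer is activated by the leader at the start of each sub-phase independently with probability $2^{-\ell}$, and it stays active for $d$ consecutive rounds. The corruption absorbed by layer $\ell$ is therefore
\[
Z=\sum_k B_k\, S_k ,
\]
where $B_k\sim\mathrm{Bernoulli}(2^{-\ell})$ indicates that sub-phase $k$ selects layer $\ell$, and $S_k=\sum_{t\in \text{sub-phase }k}c_t$. The adversary is adaptive and sees the history, but the layer draw for sub-phase $k$ is made at the end of sub-phase $k-1$ and broadcast. We will treat $S_k$ as chosen before $B_k$ is revealed, as in \cite{wubandit}; this treats the layer choice as hidden from the adversary at that point, which is an assumption about the protocol carried over from the cited construction. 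Under it, $B_k$ is independent of $\mathcal{G}_{k-1}$, the history up to sub-phase $k$, together with $S_k$.

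Under this model $\mathbb{E}[Z]\le 2^{-\ell}\sum_k S_k\le C/C=1$. Each summand satisfies $B_kS_k\le \min(S_k,dM')$ per sub-phase, with $M'$ the per-round corruption bound. A usable bound here is that each round's corruption is at most the budget; to get the $d$ factor we instead use per-round corruption at most $1$ after rescaling, so each sub-phase contributes at most $d$.

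We then apply Freedman's inequality (or a Bernstein-type martingale inequality) to the martingale difference sequence $X_k=B_kS_k-2^{-\ell}S_k$. Its increments satisfy $|X_k|\le d$, and its predictable variance is at most $\sum_k 2^{-\ell}S_k^2\le d\sum_k 2^{-\ell}S_k\le d$. Freedman with deviation level $\delta=1/T$ gives
\[
Z\le 1+\sqrt{2d\log T}+\tfrac{2}{3}\,d\log T\le d\log T+2
\]
with probability at least $1-1/T$, after absorbing $\sqrt{2d\log T}\le \tfrac13 d\log T+1$ for $d\log T\ge 3$ and handling small cases directly.

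A union bound over players and qualifying layers would cost a factor $N\log T$. We will either state the lemma per player and layer, or replace $1/T$ by $1/(NT\log T)$, which only changes $\log T$ to $O(\log T)$ inside the threshold. The main obstacle is justifying the independence of $B_k$ from $S_k$ under an adaptive adversary, and bounding each sub-phase's corruption by $d$, since Assumption~\ref{ass:bound_C} alone does not bound per-round corruption. We would handle this with the rescaling convention of \cite{wubandit}, under which each round's corruption is bounded by the reward range $M$, so that the threshold $d\log T$ is understood up to the factor $M$.
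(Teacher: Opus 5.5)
Your decomposition is the right one, and it is the standard argument behind the cited lemma (the paper gives no proof of its own and simply imports it). You write the corruption absorbed by layer $\ell$ as $Z=\sum_k B_kS_k$, note $\sum_k 2^{-\ell}S_k\le 1$, and apply Freedman's inequality to $X_k=(B_k-2^{-\ell})S_k$, whose increments and predictable variance are at most $d$. Your two flagged caveats are genuine, and the paper inherits them from the citation; they cannot be repaired inside the proof. First, under the paper's literal model the adversary sees matching decisions, and Algorithm~\ref{alg:etgs_unknown} broadcasts the sampled layer before the sub-phase starts, so independence of $B_k$ from $S_k$ has to be assumed. Second, Assumption~\ref{ass:positive_rewards} does not upper-bound corrupted rewards, so you need them in $[0,M]$ as well. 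That gives $dM\log T+2$, and the stated bound is the case $M=1$.

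The step that actually fails is the last one. Set $x=d\log T$. Your absorption $\sqrt{2x}\le x/3+1$ is equivalent to $x^2-12x+9\ge 0$, i.e.\ $x\le 6-3\sqrt{3}\approx 0.80$ or $x\ge 6+3\sqrt{3}\approx 11.2$. It is false at your threshold $x=3$ (where $\sqrt{6}>2$), and at $x=4.5$ your bound is $7>6.5$. Only concentration controls $Z$ in that middle range, so ``handling small cases directly'' is not available.

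The loss comes from relaxing the Freedman deviation through $\sqrt{A+B}\le\sqrt{A}+\sqrt{B}$. Keep instead the exact root of $a^2=2\log T\,(V+ba/3)$ with $b=V=d$, namely $a=x/3+\sqrt{x^2/9+2x}$. Then $a\le x+1$ reduces to $x^2/3-2x/3+1\ge 0$, which holds for every $x$ (the discriminant is negative). So $Z\le 1+a\le d\log T+2$ with no case split.

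Equivalently, use the $\lambda$-form of Freedman's inequality: $\sum_k X_k\le (e-2)\lambda V+\log(1/\delta)/\lambda$ with probability at least $1-\delta$, for $\lambda\in(0,1/d]$. Taking $\lambda=1/d$, $V\le d$, $\delta=1/T$ gives $\sum_k X_k\le (e-2)+d\log T$. Adding the mean (at most $1$) yields exactly $d\log T+2$, which is where the ``$+2$'' in the statement comes from.
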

\noindent
\begin{lemma}
\label{lemma:bound_unknown}
The probability of the failure event $\mathcal{F}$ satisfies the following upper bound:
\begin{equation*}
\mathbb{P}(\mathcal{F})
\le
\frac{2NK\log T}{T}
+
\frac{N\log T}{T}.
\end{equation*}

\noindent
\end{lemma}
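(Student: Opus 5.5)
We bound $\mathbb{P}(\mathcal{F})$ by a union bound over its two components,
\[
\mathbb{P}(\mathcal{F}) \le \sum_{\ell\in[\log T]} \mathbb{P}\bigl(\mathcal{F}_S^{\ell}\setminus \mathcal{F}_C\bigr) + \mathbb{P}(\mathcal{F}_C),
\]
and treat the two pieces separately. The corruption event $\mathcal{F}_C$ contributes the term $N\log T/T$. The stochastic deviation events contribute the term $2NK\log T/T$, with one $2NK/T$ term for each layer.

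\textbf{The corruption event.} By Lemma~\ref{lemma:robust}, for a fixed player and a fixed robust layer (sampling probability below $1/C$), the cumulative corruption in that layer exceeds $d\log T+2$ with probability at most $1/T$. The event $\mathcal{F}_C$ quantifies over players, so a union bound over the $N$ players gives a factor $N$. A further union bound over the at most $\log T$ layers gives
\[
\mathbb{P}(\mathcal{F}_C)\le \frac{N\log T}{T}.
\]

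\textbf{The deviation event for one layer.} Fix a layer $\ell$ and work on $\neg\mathcal{F}_C$. For that layer, let $\tilde\mu^{\ell}_{i,j}(t)$ be the oracle estimator computed from the uncorrupted samples that were routed to layer $\ell$. Applying Lemma~\ref{lemma:bound_C} with the layer's corruption budget $d\log T+2\le 2d\log T$ (valid for $d\log T\ge2$) gives
\[
|\hat\mu^{\ell}_{i,j}(t)-\tilde\mu^{\ell}_{i,j}(t)|\le \frac{2dL\log T}{(T^{\ell}_{i,j}(t))^{\alpha}}.
\]
The triangle inequality then reduces $\mathcal{F}_S^{\ell}\cap\neg\mathcal{F}_C$ to the event that $|\tilde\mu^{\ell}_{i,j}(t)-\mu_{i,j}|$ exceeds $LM\bigl(3\log T/(2T^{\ell}_{i,j}(t))\bigr)^{\alpha/2}$. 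This is exactly the reduction used in Lemma~\ref{lemma:bound_F_known}.

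To bound this reduced event, we use that the uncorrupted samples in layer $\ell$ are i.i.d.\ from $\mathcal{F}_{i,j}$. We condition on the count $T^{\ell}_{i,j}(t)=s$ and apply Lemma~\ref{lemma:prob_bound}, which gives probability at most $2T^{-3}$. Summing over $s\le t$, $t\le T$, $i\in[N]$ and $j\in[K]$ gives at most $2NK/T$, just as in the proof for Theorem~\ref{th:cpt_etgs}. Summing over the $\log T$ layers gives $2NK\log T/T$, and adding the two pieces yields the claimed bound.

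\textbf{Main obstacle.} The delicate step is the corruption-width bound for layers that are \emph{not} robust. These are the layers with $2^{-\ell}>1/C$, where the corruption in the layer need not be bounded by $d\log T+2$. For such layers, $\mathcal{F}_S^{\ell}$ as literally written cannot be controlled.

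I would handle this by interpreting the union in $\mathcal{F}$ as ranging only over the layers whose estimates are actually used for committing, namely the robust layers with $2^{-\ell}\le 1/C$. This matches the analysis in \cite{wubandit}: faster layers inherit the matching computed by the robust layer and never rely on their own confidence intervals. The count over layers is still at most $\log T$, so the final bound is unchanged.

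A second, minor point is that the sample counts $T^{\ell}_{i,j}(t)$ are random stopping-type quantities. The conditioning on $T^{\ell}_{i,j}(t)=s$ handles this in the same way as in the earlier lemmas, because layer selection is independent of the reward draws.
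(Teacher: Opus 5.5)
Your argument is the paper's argument. It uses the same union bound, with $2NK/T$ per layer from Lemma~\ref{lemma:prob_bound} summed over $t,i,j$ and $s\le t$, and $N\log T/T$ from Lemma~\ref{lemma:robust} with a union over the $N$ players and at most $\log T$ robust layers. Where you deviate from the paper, you are more careful than it is, and both deviations are warranted.

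First, the paper bounds $\mathbb{P}(\mathcal{F}_S^{\ell})$ by applying the DKW tail directly to the corrupted estimator $\hat\mu^{\ell}_{i,j}$, and only afterwards gestures at Lemma~\ref{lemma:robust}. Your split into $\mathcal{F}_S^{\ell}\setminus\mathcal{F}_C$ and $\mathcal{F}_C$ is what actually licenses invoking Lemma~\ref{lemma:bound_C} with budget $d\log T+2\le 2d\log T$, followed by the triangle inequality, as in Lemma~\ref{lemma:bound_F_known}.

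Second, your worry about non-robust layers is genuine, and the lemma as literally stated fails. The paper sums $2NK/T$ over all $\log T$ layers, but $\neg\mathcal{F}_C$ says nothing about corruption routed to a layer with $2^{\ell}<C$. For example, take $w(p)=p$, an arm whose reward is deterministically $M$, and an adversary that zeroes the arm's first $s\approx C/M$ layer-$1$ observations. Then $\hat\mu^{1}_{i,j}=0$, while the width $M\sqrt{3\log T/(2s)}+2d\log T/s$ is below $M$ once $C$ is a large multiple of $(M+d)\log T$. So the adversary can make $\mathcal{F}_S^{1}$ occur with probability close to one, whereas the claimed bound is $O(NK\log T/T)$.

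Restricting the union to layers with $2^{-\ell}\le 1/C$ is the right repair, and it is all the proof of Theorem~\ref{th:unknown} needs. The separation of Lemma~\ref{lemma:ti_unknown} is only used for $\ell\ge\ell^*$. The faster layers are charged through the $C\,T^{\ell^*}_{i,j}$ synchronization term rather than through a concentration event. Those faster layers \emph{do} act on their own, possibly corrupted, intervals until they are overridden, so saying they ``never rely'' on them is a bit too strong; but their good events are indeed never needed.

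One small fix: do not literally \emph{condition} on $T^{\ell}_{i,j}(t)=s$, because when a layer stops exploring depends on the observed rewards. Instead, intersect with that event. On it, $\tilde\mu^{\ell}_{i,j}(t)$ is built from the first $s$ uncorrupted draws of arm $j$ in layer $\ell$, and these are i.i.d.\ by the standard reward-table construction, however the schedule or the adversary adapts. That, not independence of layer selection from the rewards, is what makes this step work.
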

\begin{proof}
We begin by introducing the notation $C_i^{\ell}(T)$ to denote the total corruption
experienced by player $i$ in the $\ell$-th ETGS instance. Since the
constant $d$ is chosen so that $d\log T + 2 \le 2d\log T$, the probability of the failure event $\mathcal{F}$ can be bounded as follows:
\begin{align*}
\mathbb{P}(\mathcal{F})
&\le
\sum_{\ell\in[\log T]}
\mathbb{P}(\mathcal{F}_S^{\ell})
+
\mathbb{P}(\mathcal{F}_C)
\\
&\le
\sum_{\ell\in[\log T]}
\mathbb{P}\!\left(
\exists t,i,j :
\left|
\hat{\mu}^{\ell}_{i,j}(t)-\mu_{i,j}
\right|
>
LM\!\left(
\frac{3\log T}{2T_{i,j}(t)}\right)^{\alpha/2} +\frac{2dL\log T}{(T^{\ell}_{i,j}(t))^\alpha}
\right)
\nonumber\\
&\quad+
\sum_{\ell:2^{-\ell}\le1/C}
\mathbb{P}\!\left(
\exists i : C_i^{\ell}(T) > 2d\log(T)
\right)
\\
&\le
\sum_{\ell\in[\log T]}
\sum_{t\in[T]}
\sum_{i\in[N]}
\sum_{j\in[K]}
t\cdot 2\exp(-3\ln T)
+
\sum_{i\in[N]}
\frac{\log T}{T}
\nonumber
\\
&\le
\frac{2NK\log T}{T}
+
\frac{N\log T}{T}.
\nonumber
\end{align*}

\medskip

By virtue of Lemma~\ref{lemma:robust}, we can conclude that the actual corruption
sustained by layers $\ell$ for which $2^{\ell} \ge C$,
$\ell\in[\log T]$, does not exceed $d\log T + 2$. Given that the constant
$d$ is selected to ensure $d\log T + 2 \le 2d\log T$, and the confidence interval
is set to
$
LM\!\left(
\frac{3\log T}{2T_{i,j}(t)}\right)^{\alpha/2} +\frac{2dL\log T}{(T^{\ell}_{i,j}(t))^\alpha}
$
the following results are an immediate consequence.
\end{proof}
\begin{lemma}
\label{lemma:ti_unknown}
At round $t$, define
\[
T_i^\ell(t)=\min_{j\in[K]} T_{i,j}^\ell(t)
\]
together with
\[
\bar T_i^\ell=\frac{3\log T}{2}\left(\frac{8LM}{\Delta}\right)^{\frac{2}{\alpha}} + \left(\frac{16dL\log T}{\Delta}\right)^{\frac{1}{\alpha}}
\]
Under the event $\neg \mathcal{F}$, whenever $T_i^\ell(t)>\bar T_i^\ell$, the following separation holds:
\[
\mathrm{LCB}_{i,\rho_{i,j}}^\ell(t) >
\mathrm{UCB}_{i,\rho_{i,j+1}}^\ell(t)
\quad \text{for any } j\in[N],
\]
and moreover,
\[
\mathrm{LCB}_{i,\rho_{i,N}}^\ell(t) >
\mathrm{UCB}_{i,\rho_{i,j}}^\ell(t)
\quad \text{for any } j=N+1,N+2,\dots,K .
\]

\end{lemma}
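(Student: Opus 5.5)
We argue by contradiction, mirroring Lemma~\ref{lemma:ti_cpt_etgs} and Lemma~\ref{lemma:ti_known}. Suppose $T_i^\ell(t)>\bar T_i^\ell$ and yet the separation fails. Then either some $k\in[N]$ has $\mathrm{LCB}^\ell_{i,\rho_{i,k}}(t)\le \mathrm{UCB}^\ell_{i,\rho_{i,k+1}}(t)$, or some $k\in\{N+1,\dots,K\}$ has $\mathrm{LCB}^\ell_{i,\rho_{i,N}}(t)\le \mathrm{UCB}^\ell_{i,\rho_{i,k}}(t)$. Let $j'$ denote the arm on the left-hand side and $j$ the arm on the right-hand side, so that $\mu_{i,j'}>\mu_{i,j}$. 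In both cases at least one of the two arms lies among the first $N+1$ ranked arms of $p_i$. In the second case, $\mu_{i,j}\le\mu_{i,\rho_{i,N+1}}$, so $\mu_{i,j'}-\mu_{i,j}\ge \mu_{i,\rho_{i,N}}-\mu_{i,\rho_{i,N+1}}\ge\Delta$. In the first case the gap is $\ge\Delta$ directly by the definition of $\Delta$.

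Next we use the good event. Write $\beta(n)=LM\left(\frac{3\log T}{2n}\right)^{\alpha/2}+\frac{2dL\log T}{n^\alpha}$. On $\neg\mathcal{F}$, in particular on $\neg\mathcal{F}_S^\ell$, every arm satisfies $|\hat\mu^\ell_{i,j}(t)-\mu_{i,j}|\le\beta(T^\ell_{i,j}(t))$. The confidence radius in $\mathrm{UCB}^\ell,\mathrm{LCB}^\ell$ is exactly $\beta(T^\ell_{i,j}(t))$. This gives $\mathrm{LCB}^\ell_{i,j'}(t)\ge\mu_{i,j'}-2\beta(T^\ell_{i,j'}(t))$ and $\mathrm{UCB}^\ell_{i,j}(t)\le\mu_{i,j}+2\beta(T^\ell_{i,j}(t))$. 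Since $\beta$ is decreasing and both counts are at least $T_i^\ell(t)$, the assumed overlap yields
\[
\Delta\le\mu_{i,j'}-\mu_{i,j}\le 4LM\left(\frac{3\log T}{2T_i^\ell(t)}\right)^{\alpha/2}+\frac{8dL\log T}{(T_i^\ell(t))^\alpha}.
\]
One caveat: the event $\mathcal{F}_S^\ell$ as written uses $T_{i,j}(t)$ in the stochastic term. We read it as the layer count $T^\ell_{i,j}(t)$, consistent with the confidence bounds of Algorithm~\ref{alg:etgs_unknown}.

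Finally we show each term is at most $\Delta/2$, which gives the contradiction $\Delta<\Delta$. Because $T_i^\ell(t)>\bar T_i^\ell$ and both summands of $\bar T_i^\ell$ are nonnegative, $T_i^\ell(t)$ exceeds each summand separately.
\begin{itemize}
\item From $T_i^\ell(t)>\frac{3\log T}{2}\left(\frac{8LM}{\Delta}\right)^{2/\alpha}$, raising to the power $\alpha/2$ gives $4LM\left(\frac{3\log T}{2T_i^\ell(t)}\right)^{\alpha/2}<\Delta/2$.
\item From $T_i^\ell(t)>\left(\frac{16dL\log T}{\Delta}\right)^{1/\alpha}$, we get $(T_i^\ell(t))^\alpha>16dL\log T/\Delta$, hence $\frac{8dL\log T}{(T_i^\ell(t))^\alpha}<\Delta/2$.
\end{itemize}

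The main point to check is the constant bookkeeping. The corruption term carries the factor $2d\log T$, which doubles to $4$ and then to $8$ across the two arms. The threshold constant $16$ is exactly what absorbs this into $\Delta/2$. The rest is routine; the event $\neg\mathcal{F}_C$ is not needed here beyond its role, via Lemma~\ref{lemma:bound_unknown}, in making $\neg\mathcal{F}_S^\ell$ a high-probability event.
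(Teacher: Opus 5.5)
Your proof is correct and follows the same route as the paper. You assume the separation fails, use $\neg\mathcal{F}$ to get $\Delta \le 4LM\left(\frac{3\log T}{2T_i^\ell(t)}\right)^{\alpha/2} + \frac{8dL\log T}{(T_i^\ell(t))^\alpha}$, and then bound each term by $\Delta/2$ using the two summands of $\bar T_i^\ell$. You are slightly more careful than the paper: you justify $\mu_{i,j'}-\mu_{i,j}\ge\Delta$ in the second case ($k\ge N+1$) by comparing with $\rho_{i,N+1}$, and you flag the $T_{i,j}(t)$ versus $T^\ell_{i,j}(t)$ typo in $\mathcal{F}_S^\ell$.
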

\begin{proof}
  The argument follows the same structure as the proof of Lemma~\ref{lemma:ti_known}. Assume for
  contradiction that the ordering is violated. Under $\neg\mathcal{F}$, expanding the confidence
  bounds gives
  \[
  \Delta_{i,j,j'} \le 4LM\!\left(\frac{3\log T}{2T_i^\ell(t)}\right)^{\alpha/2} + \frac{8dL\log
  T}{(T_i^\ell(t))^\alpha}.
  \]
  Substituting $T_i^\ell(t) > \bar{T}_i^\ell = \frac{3\log
  T}{2}\left(\frac{8LM}{\Delta}\right)^{2/\alpha} + \left(\frac{16dL\log
  T}{\Delta}\right)^{1/\alpha}$ into each term yields $\Delta_{i,j,j'} < \Delta/2 + \Delta/2 =
  \Delta$, contradicting $\Delta_{i,j,j'} \ge \Delta$.
  \end{proof}

\begin{proof}
\textit{(of Theorem~\ref{th:unknown})} 
The stable regret for player $i$ under the optimal matching admits the following decomposition:
\begin{align*}
\text{CPT-}\mathrm{Reg}_i(T)
&= \mathbb{E}\!\left[\sum_t (\mu_{i,m_i^*}-\mu_{i,\bar{A}_i(t)})\right] \\
&\le \mathbb{E}\!\left[\sum_t \mathbf{1}\{\bar A(t)\neq m^*\}\cdot\Delta_{i,\max}\right] \\
&\le N\Delta_{i,\max}
+ \mathbb{E}\!\left[\sum_{t=N+1}^{T}\mathbf{1}\{\bar A(t)\neq m^*\}\mid\mathcal{F}^c\right]\Delta_{i,\max}
+ \mathbb{P}(\mathcal{F})\,T\,\Delta_{i,\max} \\
&\le (N+2NK\log T + N\log T)\Delta_{i,\max} \\
&+ \mathbb{E}\!\left[\sum_{t=N+1}^{T}\mathbf{1}\{\bar A(t)\neq m^*\}\mid\mathcal{F}^c\right]\Delta_{i,\max},
\end{align*}

where the final inequality is obtained by applying Lemma~\ref{lemma:bound_unknown}.

We now analyze separately the regret arising from layers whose actual corruption remains below $C$, and the regret from layers that lack sufficient robustness to handle corruption level $C$. To this end, we introduce the smallest layer index that provides robustness against corruption:
\[
\ell^* := \arg\min_{\ell}\{2^\ell > C\}.
\]

For the robust layers $\ell' \ge \ell^*$, combining Lemma~\ref{lemma:robust} with Lemma~\ref{lemma:bound_unknown} yields a regret upper bound of
\[
\left(\frac{3K\log T}{2}\left(\frac{8LM}{\Delta}\right)^{\frac{2}{\alpha}} + K\left(\frac{16dL\log T}{\Delta}\right)^{\frac{1}{\alpha}}\right)\Delta_{i,\max}.
\]

As the total number of layers is bounded by $\log T$, the aggregate regret from all robust layers satisfies
\[
K\log T\left(\frac{3\log T}{2}\left(\frac{8LM}{\Delta}\right)^{\frac{2}{\alpha}} + \left(\frac{16dL\log T}{\Delta}\right)^{\frac{1}{\alpha}}\right)\Delta_{i,\max}.
\]

Turning to the layers $\ell<\ell^*$ that cannot tolerate corruption level $C$ (i.e., those with
$2^\ell < C$), we observe that the stable matchings computed by these faster layers are eventually overridden to coincide with that of layer $\ell^*$, once the latter has accurately estimated its preference rankings.

In expectation, this synchronization cost is at most $C\cdot T_{i,j}^{\ell^*}$, since each action taken by layer $\ell^*$ occurs with probability $1/C$ among the moves that constitute arm $j$ matchings under the optimal stable matching for the faster algorithms.

More precisely, let $T_{i,j}^{\ell^*}$ denote the number of rounds in which the optimal layer is active. In expectation, the layers $\ell<\ell^*$ complete their preference rank estimation before layer $\ell^*$ does. Consequently, we obtain

\begin{equation*}
\mathbb{E}[\tilde T_{i,j}]
= \frac{1}{P_{\ell^*}}T_{i,j}^{\ell^*} - T_{i,j}^{\ell^*}
= C T_{i,j}^{\ell^*} - T_{i,j}^{\ell^*}
\le C T_{i,j}^{\ell^*}.
\end{equation*}

where $\tilde T_{i,j}$ represents the total number of rounds in which layers $\ell<\ell^*$ are selected.

Combining the above, the total regret attributable to the faster layers is bounded by

\[
KC\left(\frac{3\log T}{2}\left(\frac{8LM}{\Delta}\right)^{\frac{2}{\alpha}} + \left(\frac{16dL\log T}{\Delta}\right)^{\frac{1}{\alpha}}\right)\Delta_{i,\max}.
\]

Following the structure of our algorithm, the number of sub-phases within Phase~2 is at most
\[
\left(
\begin{aligned}
&\frac{K}{d}\log T
\left(
\frac{3\log T}{2}\left(\frac{8LM}{\Delta}\right)^{\frac{2}{\alpha}}
+
\left(\frac{16dL\log T}{\Delta}\right)^{\frac{1}{\alpha}}
\right)\Delta_{i,\max} \\
&+
\frac{KC}{d}
\left(
\frac{3\log T}{2}\left(\frac{8LM}{\Delta}\right)^{\frac{2}{\alpha}}
+
\left(\frac{16dL\log T}{\Delta}\right)^{\frac{1}{\alpha}}
\right)\Delta_{i,\max}
\end{aligned}
\right)
\]

which leads to an overall communication cost of

\[
\left(
\begin{aligned}
&\frac{K\log^2 T}{d}
\left(
\frac{3\log T}{2}\left(\frac{8LM}{\Delta}\right)^{\frac{2}{\alpha}}
+
\left(\frac{16dL\log T}{\Delta}\right)^{\frac{1}{\alpha}}
\right)\Delta_{i,\max} \\
&+
\frac{KC\log T}{d}
\left(
\frac{3\log T}{2}\left(\frac{8LM}{\Delta}\right)^{\frac{2}{\alpha}}
+
\left(\frac{16dL\log T}{\Delta}\right)^{\frac{1}{\alpha}}
\right)\Delta_{i,\max}
\end{aligned}
\right)
\]

Collecting all terms, the regret upper bound for the Multi-layer ETGS algorithm is given by

\begin{align*}
\text{CPT-}\mathrm{Reg}_i(T)
&\le (N+N^2\log T + 2NK\log T + N\log T)\Delta_{i,\max} \\
&\quad +K\log T\left(\frac{3\log T}{2}\left(\frac{8LM}{\Delta}\right)^{\frac{2}{\alpha}} + \left(\frac{16dL\log T}{\Delta}\right)^{\frac{1}{\alpha}}\right)\Delta_{i,\max} \\
&\quad +KC\left(\frac{3\log T}{2}\left(\frac{8LM}{\Delta}\right)^{\frac{2}{\alpha}} + \left(\frac{16dL\log T}{\Delta}\right)^{\frac{1}{\alpha}}\right)\Delta_{i,\max} \\
&\quad + \frac{K\log^2 T}{d}
\left(
\frac{3\log T}{2}\left(\frac{8LM}{\Delta}\right)^{\frac{2}{\alpha}}
+
\left(\frac{16dL\log T}{\Delta}\right)^{\frac{1}{\alpha}}
\right)\Delta_{i,\max} \\
&+
\frac{KC\log T}{d}
\left(
\frac{3\log T}{2}\left(\frac{8LM}{\Delta}\right)^{\frac{2}{\alpha}}
+
\left(\frac{16dL\log T}{\Delta}\right)^{\frac{1}{\alpha}}
\right)\Delta_{i,\max} \\
&=
O\!\Big(
\big[
KC\!\left(\log T+\frac{\log^2T}{d}\right)
\left(\frac{LM}{\Delta}\right)^{\frac{2}{\alpha}}  \\
&\qquad
+
K(\log T+C)\!\left(1+\frac{\log T}{d}\right)
(\log T)^{\frac{1}{\alpha}}
\left(\frac{dL}{\Delta}\right)^{\frac{1}{\alpha}}
\big]
\Delta_{i,\max}
\Big).
\end{align*}
\end{proof}
\end{document}